\PassOptionsToPackage{hidelinks}{hyperref}
\documentclass[11pt]{article}

\usepackage[margin=1.1in]{geometry}
\usepackage[T1]{fontenc}
\usepackage[utf8]{inputenc}
\usepackage{lmodern}
\usepackage{microtype}
\usepackage{amsmath,amssymb,amsthm,graphicx}

\usepackage{natbib}
\usepackage{algorithm}
\usepackage{algorithmic}
\usepackage{hyperref}
\usepackage{booktabs}
\usepackage{enumitem}
\usepackage{comment}
\usepackage{tikz}
\usetikzlibrary{shapes.geometric, arrows.meta, positioning, calc, backgrounds, fit}

\theoremstyle{plain}
\newtheorem{theorem}{Theorem}[section]

\newtheorem{proposition}[theorem]{Proposition}
\newtheorem{corollary}[theorem]{Corollary}
\theoremstyle{definition}
\newtheorem{definition}[theorem]{Definition}
\newtheorem{assumption}[theorem]{Assumption}
\theoremstyle{remark}

\newcommand{\Rset}{\mathcal R}
\newcommand{\M}{\mathcal M}

\newcommand{\Tcal}{\mathcal T}
\newcommand{\Real}{\mathbb R}

\newcommand{\State}{\mathcal S}
\newcommand{\Loss}{\mathfrak l}

\newcommand{\Upd}{\mathcal U}
\newcommand{\Obs}{\mathcal O}
\newcommand{\Act}{\mathcal A}
\newcommand{\Mit}{\mathcal M_{\mathrm{Mit}}}
\newcommand{\GML}{\mathcal M_{\mathrm{GML}}}

\newtheorem{observation}[theorem]{Observation}

\usepackage{fancyhdr}
\title{General Machine Learning \\Theory for Learning Under Variable Regimes \\[0.6em]\normalsize\textbf{Preprint}}
\author{Aomar OSMANI\\INSA-Rouen, France\\\texttt{aomar.osmani@insa-rouen.fr}}

\date{April 2026}

\begin{document}
\maketitle
\begin{center}
{\small\itshape This manuscript is a preprint }\par
\end{center}
\vspace{0.5em}
\begin{abstract}
We study learning under regime variation, where the learner, its memory state, and the
evaluative conditions may evolve over time. This paper is a foundational and structural
contribution: its goal is to define the core learning-theoretic objects required for such settings
and to establish their first theorem-supporting consequences.

The paper develops a regime-varying framework centered on admissible transport,
protected-core preservation, and evaluator-aware learning evolution. It records the
immediate closure consequences of admissibility, develops a structural obstruction argument
for faithful fixed-ontology reduction in genuinely multi-regime settings, and introduces a
protected-stability template together with explicit numerical and symbolic witnesses on
controlled subclasses, including convex and deductive settings. It also establishes theorem-layer
results on evaluator factorization, morphisms, composition, and partial kernel-level alignment across
semantically commensurable layers.

A worked two-regime example makes the admissibility certificate, protected evaluative core,
and regime-variation cost explicit on a controlled subclass. The symbolic component is deliberately
restricted in scope: the paper establishes a first kernel-level compatibility result together with a
controlled monotonic deductive witness. The manuscript should therefore be read as introducing a structured
learning-theoretic framework for regime-varying learning together with its first theorem-supporting layer,
not as a complete quantitative theory of all learning systems.
\end{abstract}

{\bf keywords:}
General Machine Learning, learning theory, regime change, structural admissibility, evaluator invariance, memory, non-stationarity, continual learning, symbolic learning, neuro-symbolic learning.

\section{Introduction}
\label{sec:introduction}

Mitchell's classical definition states that a program learns from experience $E$ with respect to tasks $T$ and performance measure $P$ if its performance at tasks in $T$, as measured by $P$, improves with experience $E$ \citep{mitchell1997}. Its force comes from its economy. A learner is specified relative to an experience source, a task family, and an evaluator. Much of machine learning theory can then be read as sharpening one part of this triplet under additional assumptions. PAC learning formalizes sample-based learnability under a fixed target concept and fixed error criterion \citep{valiant1984,valiantpac}. Statistical learning theory refines the role of hypothesis classes, uniform convergence, and capacity control \citep{vapnik1998}. PAC-Bayes develops prior-posterior control for randomized predictors \citep{mcallester1999,mcallester1999pacbayes,catoni2007}. Online learning studies sequential decision-making under notions such as regret \citep{cesabianchilugosi2006,zinkevich2003}. Universal induction pushes the framework toward maximal hypothesis spaces governed by priors over descriptions \citep{solomonoff1964,solomonoff1964a,solomonoff1964b,hutter2005}. These theories differ profoundly in technique, but they can be read as formulated within a fixed semantic frame in which improvement is evaluated.

That fixed frame is often entirely appropriate. It is arguably one of the reasons classical theory became sharp. But it also marks a boundary. The boundary appears when learning must persist across \emph{regime changes}: shifts in data source, deployment interface, update mechanism, memory state, proxy objective, or protected constraint. In such settings one no longer asks only whether performance improves on a fixed task. One must also ask whether the transition from one regime to another preserves the identity of the learning problem at the correct level of abstraction; whether memory updates preserve previously certified commitments; whether evaluator modifications remain legitimate; and whether cross-regime comparisons are semantically meaningful. Repeatedly enlarging the task description is not a satisfactory answer: such encodings conceal rather than resolve the structural issue, and once admissibility, protected evaluator structure, and memory-mediated continuity become part of the learning object, the semantics of improvement can no longer be treated as background.%
\footnote{In this sense, classical learning theory is Newtonian in structure:
it assumes an absolute evaluative frame, and its results are valid and sharp
within that frame.
GML does not invalidate this theory; the Strict Extension Theorem
(Section~\ref{sec:reduction}) shows that the fixed-frame setting is the
degenerate boundary case of a more general structure in which the evaluative
regime is itself a variable~--- exactly as Newtonian mechanics is recovered
from relativistic mechanics in the limit of negligible regime curvature.
A related precursor perspective, addressing the relativity of the
observational context in structured sensing environments, appears
in~\citet{osmani2022context}.}

\paragraph{Hadamard-style well-posedness as a foundational constraint.}
A serious extension of learning theory should not merely tolerate richer dynamics; it should state when the resulting learning problem remains \emph{scientifically admissible}. In the spirit of Hadamard, this requires at least three things: admissible learning trajectories must exist; they must be identifiable at the right level of equivalence rather than only up to arbitrary redescriptions; and admissible perturbations of experience, memory, or regime should not produce semantically uncontrolled changes in the realized learning behavior. Once regime variation is admitted, these are no longer peripheral regularity questions. They become part of what it means for learning itself to remain meaningful. GML adopts this Hadamard-style demand as a foundational constraint: admissibility, protected evaluative cores, and stability under certified transformations are part of the conditions under which variable-regime learning is treated as a mathematically legitimate object.

\paragraph{Relation to a broader structural programme.}
The present paper is mathematically self-contained. It should also be read
in relation to a broader structural programme for the formal analysis of
adaptive intelligent systems, developed in~\citet{osmani2026smgi}
under the name SMGI (\emph{Structural Model of General Intelligence}).
That programme formalizes general intelligence via a typed meta-model
$\theta=(r,\mathcal{H},\Pi,\mathcal{L},\mathcal{E},\mathcal{M})$
and defines admissible intelligent systems as coupled dynamics
$(\theta, T_\theta)$ satisfying four structural obligations:
structural closure under typed transformations, dynamical stability,
bounded statistical capacity, and evaluative invariance across regime shifts.
The present paper does not require that broader framework as a logical
prerequisite: all objects and results are stated directly in learning-theoretic
form. But conceptually, GML is the learning-theoretic instantiation of
that programme: it specializes the general structural kernel to the case
where the induced semantics is evaluative improvement across active regimes.
GML should be read as the learning-theoretic execution of a broader
structural admissibility programme: what SMGI states at the level of typed
intelligent systems in general, GML develops explicitly for learning under
regime variation, where the induced semantics is no longer raw performance
alone but evaluative continuity under certified transformation.

\paragraph{Canonical definition and scope.}
The paper does not aim to enlarge the vocabulary of learning, but to define a minimal structural extension under which variable-regime learning remains well-posed, comparable, and certifiable. We call this framework \emph{General Machine Learning} (GML): the object of study is learning under admissible regime variation rather than learning inside a single fixed evaluative ontology. Its canonical definition is the following: a system learns when its evaluative performance improves across active regimes, and this improvement remains admissible, coherent, structurally stable, and compatible with the protected evaluative core under regime transformations. The paper develops this proposal by fixing the canonical definition, developing a first explicit theorem-supporting instantiation, and proving reduction, structural obstruction, stability, and transport results within that instantiation. Some results are immediate closure consequences of the admissibility definition; others take the form of structural obstructions, stability templates, or explicit witnesses on controlled subclasses. The intended reading is therefore that a structured learning-theoretic object for regime-varying learning has been identified and its first theorem-supporting layer developed~--- not that the whole theory is closed, and not as a substitute for sharp sample-complexity bounds in specific algorithmic settings.
In this sense, the paper is not merely proposing a richer vocabulary for
learning. It participates in a broader shift from performance described
inside a fixed evaluative frame to learning analyzed as a structurally
admissible process under typed transformation~--- a shift from the theory
of the result to the theory of the continuation. GML is the
learning-theoretic form of that shift.

\paragraph{Contributions.}
The paper proceeds in two steps. It first states the canonical definition of GML and records its immediate structural consequences at the abstract level. It then develops one explicit instantiation architecture and the first theorem-supporting layer studied in this paper. Its main contributions are structural rather than narrowly algorithm-specific:
\begin{enumerate}[leftmargin=*,itemsep=2pt]
    \item We define a regime-varying learning framework centered on admissible transport, protected-core preservation, evaluator comparison, and memory-aware learning evolution.
    \item We identify the immediate closure consequences of admissibility and record the resulting structural coherence properties of protected transport.
    \item We develop a structural obstruction argument for faithful fixed-ontology reduction once admissibility depends on protected memory and regime-dependent evaluator transport.
    \item We develop a protected-stability template for admissible learning trajectories and provide explicit numerical and symbolic witnesses showing that this template is non-vacuous.
    \item We formulate a first kernel-level semantic alignment principle for theorem layers that are genuinely commensurable at the protected level.
    \item We provide a systematic comparison with the main theoretical traditions of machine learning together with a focused discussion of scientific utility, limitations, and open directions.
\end{enumerate}
\paragraph{Logical levels of the results.}
The results of the paper operate at visibly different logical levels. The first statements record closure and coherence properties already forced by the canonical abstract layer; the subsequent working theorem layer develops structural results on degeneration, obstruction, morphisms, composition, and inter-layer semantic alignment. The protected stability theorem is explicitly a template theorem, while later witness propositions provide concrete functional realizations of that template. The synthesis theorem then consolidates the present theorem-supporting layer as a first disciplined development rather than a complete closure of all possible regime-varying learning theories.

\paragraph{Roadmap and reading guide.}
Section~\ref{sec:preliminaries} fixes the minimal conventions used throughout. Section~\ref{sec:axioms} states the axiomatic core and canonical definition of GML. Section~\ref{sec:requirements} gives the four standing structural requirements, and Section~\ref{sec:architecture} translates the semantic definition into the instantiation architecture used in the paper. Section~\ref{sec:first-working-layer} develops the first explicit theorem-supporting layer in three parts: reduction (Section~\ref{sec:reduction}), protected stability and memory (Section~\ref{sec:stability}), and morphisms and synthesis (Section~\ref{sec:representation}). Section~\ref{sec:kernel-alignment} treats kernel-level semantic alignment, and Section~\ref{sec:examples} provides canonical examples. Section~\ref{sec:relations} situates GML relative to core existing theories, with extended comparisons deferred to Appendix~\ref{sec:extended-relations}. Sections~\ref{sec:utility}--\ref{sec:conclusion} close with scientific utility, limitations, and conclusion.

Readers may also navigate the paper selectively: those interested primarily in the formal core may focus on Sections~\ref{sec:axioms}--\ref{sec:first-working-layer}; readers interested in stability and admissible dynamics may begin with Section~\ref{sec:stability} together with the worked example in Section~\ref{subsec:toy}; and readers interested in symbolic or memory-active regimes may focus on Sections~\ref{sec:kernel-alignment}, \ref{subsec:continual}, and~\ref{subsec:symbolic}.

\section{Preliminaries and Conventions}
\label{sec:preliminaries}

We work with measurable spaces throughout. A \emph{regime} $r\in\Rset$ represents a local learning configuration: data interface, task semantics, update interface, evaluator family, and memory interaction rules. A regime need not be identified with a mere distributional condition; it may encode evaluator stratification or admissible update restrictions. The learner evolves on a measurable state space $\State$, augmented by a measurable memory space $\M$. A regime-local loss is denoted by $\Loss_r:\State\times\M\to\Real$, and a regime-local evaluator by $V_r:\State\times\M\to\Real^k$, with $k$ possibly regime-independent but not restricted to one. The metric or pseudometric controlling admissible regime variation is denoted by $d_{\Tcal}$ whenever needed.

The notation is intentionally structural rather than algorithm-specific. The point of the paper is not to prescribe a single computational paradigm, but to formalize the mathematical conditions under which a process deserves to count as learning when regimes vary. Whenever a scalar performance functional is recovered, this is a special case, not the primitive ontology.

\paragraph{Why a mathematical grammar is needed.}
Because GML is proposed as a strict extension of a long-established ontology of learning, its primitives must be typed, composable, and semantically disciplined. In particular, regimes, evaluators, memory carriers, and admissible transformations cannot remain informal placeholders. The role of the present section is therefore not only notational. It fixes the minimal grammar required to state what may vary, what may persist, what may be transported, and what must remain protected if learning is to remain a scientifically meaningful object under regime change.

\paragraph{Typed perspective.}
We distinguish throughout between: (i) \emph{local regime semantics}, which specify how learning is judged inside a regime; (ii) \emph{transport structure}, which specifies how states, memories, interfaces, and evaluative objects may move across regimes; and (iii) \emph{protected semantics}, which specify what must remain invariant, or invariant up to certified equivalence, for the learning problem to preserve its identity. This separation is essential: without it, one may always recode change syntactically while silently losing the semantic object that learning is supposed to preserve.

\paragraph{Three structural objects governing regime change.}
Three objects appear throughout the paper and are defined formally in Section~\ref{sec:axioms}. The \emph{protected-core operator} $\Phi$ identifies the evaluative component that must remain invariant if the learning problem is to preserve its identity across regimes. The \emph{admissibility predicate} $\Gamma$ certifies which regime transitions are semantically legitimate. The \emph{protected equivalence relation} $\sim_\Phi$ governs when two evaluators describe the same learning problem at the protected level. These three objects are left informal here; their precise definitions, conditions, and roles in theorem-layer proofs follow in Section~\ref{sec:axioms}.

\section{Axiomatic Core of GML}
\label{sec:axioms}

\paragraph{Canonical definition.}
The present paper is mathematically self-contained under four standing structural requirements (Section~\ref{sec:requirements}): structural closure, dynamical stability, bounded statistical capacity, and evaluative invariance. The following sentence is the \emph{canonical definition} used throughout; it is intentionally compact because its theorem-supporting content depends on those requirements rather than on additional hidden primitives.

\begin{definition}[GML, canonical definition]
A system learns when its evaluative performance improves across active regimes, and this improvement remains admissible, coherent, structurally stable, and compatible with the protected evaluative core under regime transformations.
\end{definition}

\paragraph{Reading of the canonical definition.}
This canonical definition is intentionally short, but it is not decorative shorthand. Here, active regimes are the regimes effectively engaged by the learning trajectory under consideration. Evaluative performance is always read relative to the evaluator architecture carried by those regimes, with evaluators treated as constitutive parts of the learning specification rather than as external scoreboards. Coherence is not an extra free primitive floating independently of the rest of the framework: in the present manuscript it names the fact that admissibility, protected compatibility, and structural stability continue to describe one and the same learning problem under the theorem-layer reading currently in force. Depending on the working theorem layer, that coherence may therefore be evaluative, dynamical, structural, or logical, provided that the transformed process remains interpretable as continuing the same learning problem at the protected level. Structural stability expresses the persistence of the learning process within an admissible regime of transformation, with memory entering the structural specification as a typed and explicit component of the coupled dynamics whose evolution must remain well-formed under admissible transformations and certified stability constraints. Compatibility with the protected evaluative core means that improvement is not obtained by violating, bypassing, or structurally invalidating the protected evaluative commitments of the problem, even when their realization may vary across regimes or higher-level transformations. In the present manuscript, this learning-theoretic definition is developed under four standing structural requirements stated directly in this paper: structural closure, dynamical stability, bounded statistical capacity, and evaluative invariance across admissible regime transformations.

\begin{definition}[Expanded learning-specific specification]
The present paper analyzes that canonical learning object through one \emph{expanded learning-specific specification}. Concretely, that specification is written as a tuple
\[
\mathfrak G=(\Rset,\{\State_r\}_{r\in\Rset},\{\M_r\}_{r\in\Rset},\{\Obs_r\}_{r\in\Rset},\{\Act_r\}_{r\in\Rset},\{V_r\}_{r\in\Rset},\Tcal,\Phi,\Gamma,\sim_\Phi,\Upd),
\]
where this realization is read as the explicit learning-specific development used in the present paper:
\begin{enumerate}[leftmargin=*,itemsep=2pt]
    \item $\Rset$ is the regime carrier used in the learning-specific development;
    \item for each $r\in\Rset$, $\State_r$, $\M_r$, $\Obs_r$, and $\Act_r$ realize the local typed carriers through which learning states, memories, and interfaces are made explicit in the specialization;
    \item for each $r\in\Rset$, $V_r$ is a regime-local evaluator of type
    \[
    V_r:\State_r\times\M_r\times\Obs_r\times\Act_r \to \mathcal Y_r,
    \]
    where $\mathcal Y_r$ is, in full generality, a regime-dependent measurable codomain; scalar-valued and ordered codomains arise as special cases, and any theorem layer that formalizes evaluative improvement must additionally specify a compatible comparison structure (order, preorder, or transported gauge) on the evaluator codomains it actually uses;
    \item $\Tcal$ is a class of typed candidate regime transformations $\tau:r\rightsquigarrow r'$;
    \item $\Phi$ is a protected-core operator identifying the evaluative component that must remain invariant, or invariant up to certified equivalence, if the learning problem is to preserve its identity;
    \item $\Gamma$ is an admissibility predicate certifying which typed candidate transformations are legitimate in the learning specialization;
    \item $\sim_\Phi$ is the protected equivalence relation induced by $\Phi$ together with the invariance discipline stated in this paper;
    \item $\Upd$ is a family of measurable update operators, possibly regime-indexed, acting on state-memory pairs through local interfaces.
\end{enumerate}
\end{definition}

\paragraph{Reading discipline for the expanded specification.}
The short definition states the learning-theoretic nucleus; the expanded specification makes explicit the learning-specific objects through which that nucleus is analyzed in this paper. It should not be read as a second, competing definition of GML. Concretely, a GML system says five things at once: where learning is taking place (the regime), what the learner carries (state and memory), how it interacts (observation and action/update interfaces), how it is judged locally (the evaluator), and what must remain protected if the learning problem is to preserve its identity across change. The tuple $\mathfrak G$ gives the developed form in which the canonical definition becomes usable for theorem-level analysis under the standing structural requirements stated in Section~\ref{sec:requirements}. The point of the realization is not bookkeeping but prevention of silent semantic collapse when regimes vary.

\paragraph{General semantic layer.}
In full generality, each regime-local evaluator may take values in a regime-dependent measurable codomain $\mathcal Y_r$. This openness is intentional. GML is meant to remain instantiable by different communities under different evaluator realizations, provided the protected-core, admissibility, and transport principles are respected.

\paragraph{Working theorem layer for evaluators.}
For the main comparison, factorization, quotient, and stability results of this paper, we restrict attention to evaluator codomains carrying the minimal ordered structure needed to define protected comparison and transportable evaluator factorization. The general framework is therefore broader than the theorem layer proved below. This is a feature rather than a defect: the ontology remains open, while the proofs make explicit which extra structure they actually use. At the semantic level, GML does not impose a unique universal order on evaluator codomains; rather, the present working theorem layer fixes only the comparison structure needed for the results proved here, and transported gauges make that inter-regime protected comparison explicit on the restricted subclass where it is used. Accordingly, the existence of $\tau_Y$ is not asserted in full generality but only on theorem-layer subclasses where the evaluator codomains admit such a monotone comparison transport, or where the admissibility certificate provides it explicitly.

\paragraph{Why this definition is not a cosmetic extension of Mitchell.}
A standard objection is that regime, memory, evaluator change, and interface variation could simply be folded into a richer task description. This is possible at the level of encoding, but insufficient at the level of theory. Once evaluator transport, memory persistence, and cross-regime comparison become part of the semantic content of learning, the central question is no longer whether more variables can be appended to a task specification. It is whether those transformations preserve the identity of the learning problem in a disciplined and certifiable sense. GML makes that preservation relation first-class, while keeping the core nucleus of evaluative improvement under admissible regime transformation.

\subsection{Mathematical grammar of the canonical definition}

\begin{definition}[Typed regime transformation]
A \emph{typed regime transformation} $\tau:r\rightsquigarrow r'$ consists of compatible transport maps on the local components of the source regime toward the target regime, typically including
\[
\tau_S:\State_r\to\State_{r'},\qquad
\tau_M:\M_r\to\M_{r'},\qquad
\tau_O:\Obs_r\to\Obs_{r'},\qquad
\tau_A:\Act_r\to\Act_{r'},
\]
together with an induced evaluative transport rule relating $V_r$ and $V_{r'}$. Whenever evaluative improvement is compared across regimes, the transformation is also understood to carry a comparison gauge or monotone evaluative transport map
\[
\tau_Y:\mathcal Y_r\to \mathcal Y_{r'},
\]
possibly defined only on the protected quotient or on the theorem-layer subclass under study, so that inter-regime improvement can be read relative to a mathematically explicit transported comparison rather than to an informal cross-space inequality. In particular, the present paper does not assume that such a gauge exists for every conceivable evaluator codomain; it is used only in theorem-layer subclasses where the evaluative codomains admit a monotone comparison transport, or where the admissibility certification itself provides one.
\end{definition}

\begin{definition}[Transition grammar and partial composition]
The class $\Tcal$ is formally structured as a \emph{directed graph} (or \emph{quiver}) on $\Rset$: its primitive elements are candidate arrows $\tau:r\rightsquigarrow r'$. A finite admissible learning trajectory is treated as a typed path obtained by concatenating such primitive arrows whenever the certification rules permit it. Thus, for composable local transitions $\tau_1:r\rightsquigarrow r'$ and $\tau_2:r'\rightsquigarrow r''$, one writes the path-level concatenation
\[
\tau_2\circ\tau_1:r\rightsquigarrow r''.
\]
No heavy categorical completion is assumed in the present paper: the point is not that every certified path must already appear as a primitive arrow of $\Tcal$, but that admissible paths may be concatenated whenever the certification rules allow it. Concretely, a composite path is certified admissible only when each local segment is admissible, the intermediate protected equivalence classes match, and any active transport maps remain coherent along concatenation. Whenever evaluative gauges are active, this includes the requirement that the induced comparison transport for the composite path satisfy $(\tau_2\circ\tau_1)_Y=\tau_{Y,2}\circ\tau_{Y,1}$ on the theorem-layer subclass where cross-regime evaluative comparison is defined.
\end{definition}

\paragraph{Certified composition rule.}
For composable local transitions $\tau_1:r_0\rightsquigarrow r_1$ and $\tau_2:r_1\rightsquigarrow r_2$, a composite admissible segment is certified only when: (i) each local segment is admissible, (ii) the intermediate protected semantic commitments match up to the relevant protected equivalence, and (iii) any active memory or evaluative transport maps compose coherently along the path. This is the only composition principle required in the present theorem layer.

\begin{definition}[Protected evaluative core]
The \emph{protected evaluative core} of an evaluator $V_r$ is the component
\[
\Phi(V_r)
\]
that must remain invariant, or invariant up to certified equivalence, under admissible transformations if the learning problem is to preserve its identity. The protected core may encode scalar constraints, logical constraints, or relational comparison structure.
\end{definition}

\paragraph{Minimal classes of protected cores.}
In the present paper, three minimal classes are distinguished:
\begin{enumerate}[leftmargin=*,itemsep=2pt]
    \item \emph{scalar protected cores}, such as thresholds, margins, risk ceilings, or minimal retention constraints;
    \item \emph{logical protected cores}, such as safety conditions, feasibility predicates, or non-violation constraints;
    \item \emph{relational protected cores}, such as preserved orderings, dominance relations, or cross-regime comparison principles.
\end{enumerate}
This triad is intentionally minimal: it is expressive enough to cover many modern learning settings while remaining simple enough to support a general theory.

\paragraph{Minimal conditions on protected-core operators.}
The paper does not attempt to axiomatize all possible protected-core operators. It does, however, rely on four minimal conditions that make the use of $\Phi$ non-vacuous and mathematically disciplined:
\begin{enumerate}[leftmargin=*,itemsep=2pt]
    \item \emph{semantic relevance}: $\Phi(V_r)$ must select a component genuinely constitutive of the identity of the learning problem rather than an arbitrary evaluator fragment;
    \item \emph{transport compatibility}: under admissible transformation, the protected core must remain invariant or invariant up to certified protected equivalence;
    \item \emph{non-vacuity}: the choice of $\Phi$ must not trivialize admissibility by making every transition automatically protected;
    \item \emph{guarantee anchoring}: the guarantees that are claimed to persist across regimes must be formulable at the level of the protected core.
\end{enumerate}
These conditions are intentionally light. They do not force one unique mathematical realization of protected cores, but they prevent the present theory from treating $\Phi$ as an unconstrained author-side selection device.

\paragraph{Explicit and implicit protected cores.}
The present theory allows two broad classes of protected-core realization. In an \emph{explicit protected core}, the protected component is directly available as a symbolic constraint, an analytic margin, a threshold family, or another evaluator fragment that can be inspected and certified in its own right. In an \emph{implicit protected core}, by contrast, the protected commitments are carried only through a latent or distributed representation and are therefore accessed in practice through statistically or computationally controlled surrogates. Typical examples include replay-buffer proxies, Fisher-style penalties, or other retained-competence surrogates used to approximate the preservation of a protected capability without exposing a closed-form evaluator fragment. The present paper remains agnostic about which realization is used, but it requires that whichever realization is chosen still support meaningful admissibility certification at the protected level.

\begin{definition}[Protected equivalence]
Two evaluators $V_r$ and $V_{r'}$ are \emph{protected-equivalent}, written
\[
V_r \sim_\Phi V_{r'},
\]
if their protected cores coincide or are certified as equivalent under the invariance discipline of the system.
\end{definition}

\paragraph{Interpretation of protected equivalence.}
For a practitioner, protected equivalence means that two regimes may differ in proxy, presentation, or local scoring detail while still counting as the same learning problem at the level that matters. For the theory, it is the correct equivalence notion for transport, comparison, factorization, and guarantee persistence. In Hadamard terms, it is the level at which uniqueness should be expected.

\paragraph{Topological regularity of the protected quotient.}
Throughout the theorem-level developments of this paper, we impose the following regularity discipline on the protected quotient. The ambient evaluator space is assumed to be a separable metric space carrying the measurable structure actually used by the theorem layer under consideration; the protected-core map $\Phi$ is assumed measurable, and continuous whenever a topological argument is invoked; and the protected-equivalence relation $\sim_\Phi$ is assumed closed relative to that structure. Under these standing conditions, the quotient induced by $\sim_\Phi$ is treated only on theorem-layer subclasses for which the resulting protected quotient preserves the comparison, pseudo-metric, and Borel measurability notions used below. The paper does not attempt a full general quotient theory; it fixes the minimal regularity assumptions under which protected comparison and admissible transport are mathematically well posed in the developments that follow.

\begin{definition}[Compatibility between transformation and evaluator]
A typed transformation $\tau:r\rightsquigarrow r'$ is \emph{evaluator-compatible} with $(V_r,V_{r'})$ if the induced transport preserves the protected core up to protected equivalence and leaves cross-regime comparison semantically meaningful on the quotient induced by $\sim_\Phi$. In theorem-layer subclasses where improvement is compared across heterogeneous evaluator codomains, evaluator-compatibility is also understood to supply the relevant monotone gauge or transported comparison map needed to compare $V_r$ and $V_{r'}$ at the protected level.
\end{definition}

\begin{definition}[Admissibility certificate]
The \emph{admissibility certificate} is a relation
\[
\Gamma(r,r',\tau;V_r,V_{r'})\in\{0,1\}
\]
whose value is $1$ only if:
\begin{enumerate}[leftmargin=*,itemsep=2pt]
    \item $\tau$ is well typed on the local components;
    \item the induced transport is realizable on state-memory-interface objects;
    \item $\tau$ is evaluator-compatible with $(V_r,V_{r'})$;
    \item the protected core is preserved literally or up to certified protected equivalence;
    \item cross-regime comparison remains well-defined at the protected level;
    \item the transformation is eligible for certified composition whenever concatenated with another admissible arrow.
\end{enumerate}
\end{definition}

\paragraph{Status of admissibility.}
Admissibility is not an informal desideratum. It is the mathematical mechanism that prevents arbitrary regime change from being treated as legitimate learning evolution. The generality of GML is therefore disciplined by certification, not by wishful encoding.

\paragraph{General admissibility layer versus working admissibility layer.}
In the general GML framework, admissibility may be realized in several ways. In this paper, $\Gamma$ is treated extensionally as a certified admissibility predicate; richer proof-carrying, constructive, or algorithmic realizations are left for future work. This restriction keeps the model open at the ontological level while fixing a precise mathematical status for the theorem layer developed here.

\paragraph{Status of $\Gamma$ in the present theorem layer.}
The paper does not assume that $\Gamma$ is a universally computable oracle-independent quantity. At the level of the present theorem layer, $\Gamma$ should be read as an admissibility-certificate schema: it records whether a regime transition is certified as semantically legitimate under the protected evaluator structure currently in force. In this sense, $\Gamma$ is not introduced as a magical decision oracle but as the certification object required by the semantic architecture; its concrete realization depends on the theorem-layer subclass under study. The schema is not monolithic. In the present framework one should distinguish at least four theorem-level realizations: (i) \emph{structural certificates}, given by typed preservation and realizability checks; (ii) \emph{proof-carrying certificates}, in which admissibility is backed by an explicit local witness or derivation; (iii) \emph{memory-mediated certificates}, in which retained-competence or protected-floor constraints are part of admissibility; and (iv) \emph{evaluator-compatibility certificates}, in which protected comparison and equivalence are the active gates. Different subclasses may realize $\Gamma$ through one or several of these forms. The current paper fixes only the theorem-level role of $\Gamma$: admissible transitions are exactly those transitions for which the relevant semantic preservation has been certified at the protected level. In realistic expressive hypothesis spaces, exact admissibility verification may itself be computationally hard or even undecidable, so practical realizations of $\Gamma$ are expected to proceed through sound but incomplete verifiers, convex relaxations, or high-confidence statistical certificates rather than through a universal exact decision procedure. For deep neural systems in particular, admissibility is typically operationalized only through such approximations, which is precisely why the present paper keeps the semantic role of $\Gamma$ distinct from any claim of uniform computational tractability. The worked toy example of Section~\ref{sec:examples} and the explicit witness constructions below are included precisely to show that $(\Gamma,\Phi,d_{\Tcal})$ need not remain a purely formal triad even when no universal decision procedure for admissibility is claimed.

\paragraph{Probabilistic / PAC relaxation of $\Gamma$.}
Nothing in the present architecture requires admissibility certification to remain pointwise deterministic in future statistical instantiations. In PAC-style or stochastic realizations, the statement $\Gamma(r,r',\tau;V_r,V_{r'})=1$ may be read as shorthand for a high-confidence certification event, for example
\[
\Pr\big(\text{protected preservation under }\tau\big)\ge 1-\delta_\Gamma,
\]
where the probability is taken relative to the data-generation or update process relevant to the regime transition. The deterministic presentation used in the current theorem layer therefore fixes only the cleanest semantic skeleton. It is fully compatible with probabilistic admissibility notions in which certification is statistical rather than absolute. In theorem-layer subclasses such as the concrete two-regime witness of Section~\ref{sec:stability}, the deterministic certificate should therefore be read as the exact certified limit of a more general high-confidence admissibility judgment rather than as a claim that realistic certification problems are everywhere oracle-free or uncertainty-free.

\begin{definition}[Mitchell-expressible process]
Let $\Mit$ denote the class of learning processes that can be fully specified by a triple $(E,T,P)$ such that:
\begin{enumerate}[leftmargin=*,itemsep=2pt]
    \item the task family $T$ and performance measure $P$ are fixed;
    \item learning consists in improving performance relative to that single fixed evaluator;
    \item memory persistence, evaluator transport, and interface evolution are not first-class semantic objects;
    \item contextual variation is absorbed into the external description of $E$ or $T$, not into an evolving protected core.
\end{enumerate}
\end{definition}

\begin{definition}[Expanded formal membership criterion for GML]
A process belongs to $\GML$ if it is induced by a canonical GML system for which there exists a nonempty class of admissible trajectories such that:
\begin{enumerate}[leftmargin=*,itemsep=2pt]
    \item local evaluative improvement is certified in each visited regime;
    \item the transport across successive regimes is admissible in the sense of $\Gamma$;
    \item protected-core equivalence is preserved along the trajectory at the level required by the system;
    \item the learning guarantee attached to the protected core remains meaningful under admissible transport.
\end{enumerate}
\end{definition}

\paragraph{Immediate significance of the definition.}
The definition is deliberately both minimal and locking. It is minimal because it does not force a single evaluator shape, a single memory semantics, or a single composition law. It is locking because it requires typed transport, protected-core discipline, and certified admissibility before regime variation counts as learning rather than arbitrary adaptive drift.

\subsection{Level II: immediate structural consequences of admissibility}

The following statements are intentionally elementary. They are not presented as deep standalone learning-theoretic theorems, but as immediate closure consequences induced by the admissibility definition.

\begin{itemize}[leftmargin=*,itemsep=2pt]
    \item \textbf{Typed transport.} If $\Gamma(r,r',\tau;V_r,V_{r'})=1$, then the transition induces the well-typed transport on state, memory, interface, and evaluator components required by the definition.
    \item \textbf{Persistence of the protected core.} If a transition were to destroy the protected commitments, admissibility would fail and one would have $\Gamma=0$.
    \item \textbf{Well-formed protected comparison.} Cross-regime comparison is mathematically meaningful only through the quotient induced by $\sim_\Phi$.
    \item \textbf{Local guarantee persistence.} Any guarantee formulated purely at the protected semantic level survives admissible transport at that level, because admissibility is defined through preservation of that protected semantic content.
\end{itemize}

\begin{observation}[Abstract coherence]
\label{obs:coherence}
Taken together, these immediate consequences imply that any admissible trajectory forms a structurally well-posed protected-level learning process, prior to any specific metric, ordered, or theorem-layer instantiation.
\end{observation}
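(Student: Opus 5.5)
The plan is to make ``structurally well-posed protected-level learning process'' precise before proving anything. A finite admissible trajectory is a typed path $r_0\rightsquigarrow r_1\rightsquigarrow\cdots\rightsquigarrow r_n$ with arrows $\tau_1,\dots,\tau_n$, each satisfying $\Gamma(r_{i-1},r_i,\tau_i;V_{r_{i-1}},V_{r_i})=1$. I will read ``well-posed'' in the Hadamard sense fixed in the introduction, restricted to the protected quotient, and split it into three claims:
\begin{enumerate}[leftmargin=*,itemsep=2pt]
\item \emph{existence}: the path induces a globally defined, well-typed transport on state, memory, interface and evaluator components;
\item \emph{protected identity}: the chain $V_{r_0}\sim_\Phi V_{r_1}\sim_\Phi\cdots\sim_\Phi V_{r_n}$ holds, so the whole trajectory lives in a single $\sim_\Phi$-class;
\item \emph{comparability and guarantee persistence}: cross-regime comparison and any guarantee stated at the protected level are defined along the whole path, not only on adjacent pairs.
\end{enumerate}
The proof will be an induction on the path length $n$. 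The base case $n=1$ is exactly the four bulleted Level II consequences read off from clauses 1--5 of the admissibility certificate.

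For the inductive step, suppose the claims hold for the prefix $\tau_{n-1}\circ\cdots\circ\tau_1$. Clause 6 of $\Gamma$ makes $\tau_n$ eligible for certified composition, and the certified composition rule then supplies the three required facts for the composite.
\begin{itemize}[leftmargin=*,itemsep=2pt]
\item \emph{Typed transport:} component-wise composition of the transport maps is well typed because codomains and domains match at $r_{n-1}$.
\item \emph{Protected commitments:} the intermediate protected commitments match up to $\sim_\Phi$, which requires $\sim_\Phi$ to be transitive. I take this from its being an equivalence relation, closed by the standing quotient-regularity assumption.
\item \emph{Comparison gauges:} wherever gauges are active, the gauge of the composite path is the composite of the local gauges, $(\tau_n\circ\cdots\circ\tau_1)_Y=\tau_{Y,n}\circ\cdots\circ\tau_{Y,1}$. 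A composite of monotone maps is monotone, so comparison on the quotient is preserved.
\end{itemize}
Guarantee persistence then propagates step by step. A guarantee formulated on $\Phi(V_{r_0})$ is a property of the $\sim_\Phi$-class, and that class is invariant along the whole path by claim 2.

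The main obstacle is not any calculation but definitional. The statement is only as precise as ``structurally well-posed'', and the composition rule certifies pairwise concatenation, not arbitrary-length paths. I would therefore make the result explicitly relative to the certified-composition discipline: I would state that ``admissible trajectory'' means a path all of whose prefixes are certified. This makes the induction legitimate and avoids claiming associativity or categorical completion, which the paper explicitly disclaims.

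A second point needs care. I would note that claim 3 holds only on the theorem-layer subclass where $\tau_Y$ exists. Outside that subclass, the conclusion reduces to claims 1 and 2, which is consistent with the observation's phrase ``prior to any specific metric, ordered, or theorem-layer instantiation''.
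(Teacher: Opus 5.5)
Your proposal is correct and follows the paper's route. The paper justifies the Observation only by applying the four Level~II single-step consequences of $\Gamma=1$ (typed transport, protected-core persistence, quotient-level comparison, protected-level guarantee persistence) at each arrow, and your induction makes explicit the chaining (transitivity of $\sim_\Phi$ and coherent composition of transports and gauges) that the paper leaves implicit under ``taken together''. Two small tightenings: what lets you invoke the necessary conditions of the certified composition rule is your prefix-certification reading of ``admissible trajectory'', which matches the paper's transition-grammar definition, not the mere ``eligibility'' granted by clause~6; and transitivity of $\sim_\Phi$ comes from its being an equivalence relation (the paper works with its quotient and classes), not from the topological closedness in the quotient-regularity assumption.
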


\paragraph{What the abstract layer has established and what comes next.}
The four immediate structural consequences above and Observation~\ref{obs:coherence} together establish that the canonical GML definition already determines a structurally well-posed protected-level learning object, prior to any working-theorem-layer architecture. Readers may therefore exploit the definition on its own terms. Section~\ref{sec:requirements} now states the four standing structural requirements that govern the present theorem layer. Section~\ref{sec:architecture} converts those requirements into a three-level instantiation architecture. Section~\ref{sec:first-working-layer} develops the first theorem-supporting instantiation in three coordinated blocks: reduction to classical learning, protected stability, and morphic composition.

\section{Standing Structural Requirements for GML}
\label{sec:requirements}

The paper adopts four standing structural requirements for regime-varying
learning, stated here directly as internal admissibility constraints
on the learning-theoretic object $\mathfrak{G}$;
no external framework is logically required.
These requirements coincide with the four structural obligations
isolated in the SMGI framework~\citep{osmani2026smgi}~---
structural closure, dynamical stability, bounded statistical capacity,
and evaluative invariance across regime shifts~---
restated and deployed here directly in learning-theoretic form.
In this precise sense, GML should be read as a self-contained
learning-theoretic instantiation of the broader SMGI admissibility kernel:
what SMGI states at the level of typed intelligent systems in general,
GML develops explicitly for learning under regime variation,
where the induced semantics is evaluative improvement under
admissible, coherent, and structurally stable regime transformation.

\paragraph{Standing requirements used in the present paper.}
The requirements are stated here in the restricted form actually used for the typed learning object
\[
\mathfrak G=(\Rset,\{\State_r\},\{\M_r\},\{V_r\},\Tcal,\Phi,\Gamma,\sim_\Phi,\Upd).
\]
\emph{Structural closure} means that every certified transition in $\Tcal$ maps one typed local realization of $\mathfrak G$ to another realization of the same learning object rather than to an unrelated description. \emph{Dynamical stability} means that admissible evolution supports a protected-level discrepancy functional whose drift can be controlled along certified transitions. \emph{Bounded statistical capacity} means that the theorem-supporting subclasses studied here are restricted so that evaluator transport, protected comparison, and admissible memory evolution do not induce an uncontrolled explosion of hypothesis-space complexity. In the present layer, this is expressed only as a local admissibility restriction, not as a complete statistical theory of all regime-varying learning processes.

When a metric-entropy gauge is used, the transported complexity is required to remain controlled by the regularity of the state transport together with any explicitly bounded novel degrees of freedom introduced by the target regime:
\[
\log \mathcal N(\State_{r'},\epsilon,\|\cdot\|_{r'})
\le
\log \mathcal N(\State_r,\epsilon/L_\tau,\|\cdot\|_r)
+
\mathcal C_{\mathrm{new}}(r\to r')
+
\mathcal O(|\mathrm{mem}(r)|),
\]
where $L_\tau$ is the certified regularity constant of the admissible state transport and $\mathcal C_{\mathrm{new}}(r\to r')$ bounds genuinely new independent degrees of freedom introduced by the target regime. Over long admissible chains, meaningful statistical learnability therefore requires that this innovation term remain bounded, decaying, or periodically compressed by the update architecture. This capacity control should be read only as a working-layer admissibility restriction ensuring that transport does not silently destroy learnability, not as a complete statistical theory of complexity transfer across all GML instantiations. \emph{Evaluative invariance} means that cross-regime comparison remains meaningful only through the protected core, either literally or up to certified protected equivalence.

\paragraph{Traceability map.}
For ease of audit, the obligations are used below as follows. Structural closure underwrites Proposition~\ref{prop:reduction}, Propositions~\ref{prop:mitchell-reducibility}--\ref{prop:mitchell-obstruction}, Proposition~\ref{prop:protected-factorization}, Proposition~\ref{prop:mitchell-morphic-image}, and Theorem~\ref{thm:composition}. Dynamical stability underwrites Theorem~\ref{thm:protected-stability}, Proposition~\ref{prop:stable-persistence}, Proposition~\ref{prop:two-regime-witness}, and Theorem~\ref{thm:first-layer-synthesis}. Bounded statistical capacity acts both as a standing restriction on admissible theorem layers and as the explicit capacity-admissibility side condition just stated; it is used to explain why the present witness constructions and transport theorems are formulated only for theorem-supporting subclasses with controlled complexity growth. Evaluative invariance underwrites the abstract coherence observation (Observation~\ref{obs:coherence}), the reduction/separation block, the protected-level persistence results, and the kernel-level alignment results of Section~\ref{sec:kernel-alignment} (Propositions~\ref{prop:symbolic-compatibility}, \ref{prop:layer-commensurability}, \ref{prop:kernel-transport}, and \ref{prop:layer-transport}). These are precisely the standing structural requirements used below; no external structural manuscript is required in order to read the present arguments.

\paragraph{Historical notation (for orientation only).}
Table~\ref{tab:symbol-discipline} maps the earlier precursor notation to the present GML objects. Nothing in this table should be read as definitional inheritance; the present paper is logically self-contained.

\begin{table}[h]
\centering
\small
\caption{Historical correspondence between earlier structural notation and the present GML notation.}
\label{tab:symbol-discipline}
\begin{tabular}{p{0.22\linewidth} p{0.35\linewidth} p{0.30\linewidth}}
\toprule
\textbf{Earlier symbol} & \textbf{Present GML object} & \textbf{Role in this paper}\\
\midrule
$r$ & $\Rset$, regime-indexed carriers & regime carrier \\
$\mathcal H$ & $\State_r$ & local learning-state space \\
$\mathcal M$ & $\M_r$ & memory carrier \\
$\mathcal E$ & $V_r$, $\Phi$, $\sim_\Phi$ & evaluator and protected core \\
$\mathcal L$, $T_\theta$ & $\Upd$, transport grammar & update and transport \\
$\mathcal T$-certification & $\Gamma$, protected comparison & admissibility certificate \\
\bottomrule
\end{tabular}
\end{table}

\subsection{Working theorem layer and explicit restrictions}

Scientific usability requires that some components be temporarily locked in any concrete analysis~--- specifically: the admissibility certificate class $\Gamma$, the protected-core stratification level, the comparison relation across protected-equivalence classes, and the degree of compositional closure on $\Tcal$. Such locking does not reduce GML's scope; it is what makes the theory provable in concrete subclasses. The main theorem layer of this paper correspondingly fixes the following restrictions:
\begin{enumerate}[leftmargin=*,itemsep=2pt]
    \item evaluator codomains are assumed to carry the minimal comparison structure required for protected comparison, factorization, quotient arguments, and the theorem-layer reading of improvement; this may be an order, a preorder, or an explicit transported gauge on the evaluator codomains actually used;
    \item admissibility is treated extensionally through the predicate $\Gamma$ rather than through a richer proof object;
    \item certified regime-variation costs, when used in stability results, are assumed to be defined on the admissible portion of the transition grammar;
    \item the ordered/metered working layer assumes that the protected quotient induced by $\sim_\Phi$ carries a pseudo-metric or metric compatible with the regime-variation cost used in the theorem layer, so that protected discrepancies and contractive updates are formulated on a mathematically legitimate quotient geometry;
    \item factorization and composition results are stated only under the explicit monotonicity, preservation, and closure hypotheses written in their statements.
\end{enumerate}
These restrictions are not hidden assumptions on GML as such. They are theorem-level restrictions introduced so that the main results are proved in a mathematically controlled subclass while the general framework remains available for broader instantiations.

\paragraph{Disciplinary note.}
Definitional openness and theorem-level discipline must coexist: specializing too early loses foundational range; specializing too late makes results rhetorically stronger than their mathematical support. The framework therefore keeps the general semantic layer broad while the working theorem layer makes its restrictions explicit. The formal conditions for aligning multiple theorem layers (ordered, symbolic, memory-active) at a shared semantic kernel are developed in Section~\ref{sec:kernel-alignment}.

\section{From Definition to Instantiation Architecture}
\label{sec:architecture}

\paragraph{Three-level instantiation architecture.}
Figure~\ref{fig:gml-architecture} maps the passage from the canonical definition~--- Level~I, fully open~--- through the four standing structural requirements~--- Level~II, abstract typed discipline~--- to the first theorem-supporting instantiation developed in this paper~--- Level~III, ordered and extensional restrictions for reduction, obstruction, stability, and composition. This passage is not a reduction of GML but a disciplined theorem-supporting instantiation of the same ontology.

\begin{figure}[!h]
\centering
\footnotesize
\begin{tikzpicture}[
    node distance=4mm and 4mm,
    band/.style={draw, rounded corners, align=center, inner sep=3pt, text width=0.88\linewidth},
    layer/.style={draw, rounded corners, align=center, inner sep=4pt, text width=0.82\linewidth},
    smallbox/.style={draw, rounded corners, align=center, inner sep=3pt, text width=0.225\linewidth},
    arrow/.style={-{Latex[length=1.8mm]}, thick},
    thinarrow/.style={-{Latex[length=1.4mm]}, thin}
]

\node[band] (F0) {
\textbf{Standing structural requirements (Section~\ref{sec:requirements})}\\
structural closure $\cdot$ dynamical stability $\cdot$ bounded statistical capacity $\cdot$ evaluative invariance
};

\node[layer, below=of F0] (L1) {
\textbf{Canonical GML definition}\\
evaluative performance improvement across active regimes under admissible, coherent, and structurally stable regime transformation
};

\node[layer, below=of L1] (L2) {
\textbf{Expanded learning-specific specification}\\
regimes and local learning carriers $\cdot$ evaluator structure $\cdot$ admissibility and protected commitments $\cdot$ learning dynamics
};

\node[layer, below=of L2] (L3) {
\textbf{Instantiation architecture}\\
structural principles $\rightarrow$ working theorem layers $\rightarrow$ future meta-level organization
};

\node[below=of L3, align=center] (L4title) {\textbf{Working theorem layers}};

\node[smallbox, below left=of L4title] (W1) {
\textbf{Ordered working theorem layer}\\
protected comparison\\
factorization and transport
};

\node[smallbox, below=of L4title] (W2) {
\textbf{Symbolic and knowledge-structured layer}\\
knowledge commitments\\
constraint-preserving transport
};

\node[smallbox, below right=of L4title] (W3) {
\textbf{Memory-active / retention-sensitive layer}\\
retention\\
memory-mediated admissibility
};

\node[smallbox, below=of W2, text width=0.27\linewidth] (W4) {
\textbf{Main theorem layer developed in this paper}\\
first explicit theorem-supporting instantiation
};

\node[layer, below=6mm of W4] (L5) {
\textbf{Future meta-level}\\
specialization $\cdot$ compatibility $\cdot$ partial theorem transport $\cdot$ kernel-level semantic alignment
};

\draw[arrow] (F0) -- (L1);
\draw[arrow] (L1) -- (L2);
\draw[arrow] (L2) -- (L3);
\draw[arrow] (L3) -- (L4title);

\draw[thinarrow] (L4title) -- (W1);
\draw[thinarrow] (L4title) -- (W2);
\draw[thinarrow] (L4title) -- (W3);
\draw[arrow] (W1.south) |- (W4.north west);
\draw[arrow] (W2.south) -- (W4.north);
\draw[arrow] (W3.south) |- (W4.north east);
\draw[arrow] (W4.south) -- (L5.north);

\end{tikzpicture}
\caption{Architecture of GML under the four standing structural requirements of Section~\ref{sec:requirements}. The figure distinguishes the canonical GML definition from its expanded learning-specific specification, the instantiation architecture used in the paper, the plurality of legitimate working theorem layers, and the future meta-level left open. The present paper develops only the first explicit theorem-supporting instantiation.}
\label{fig:gml-architecture}
\end{figure}
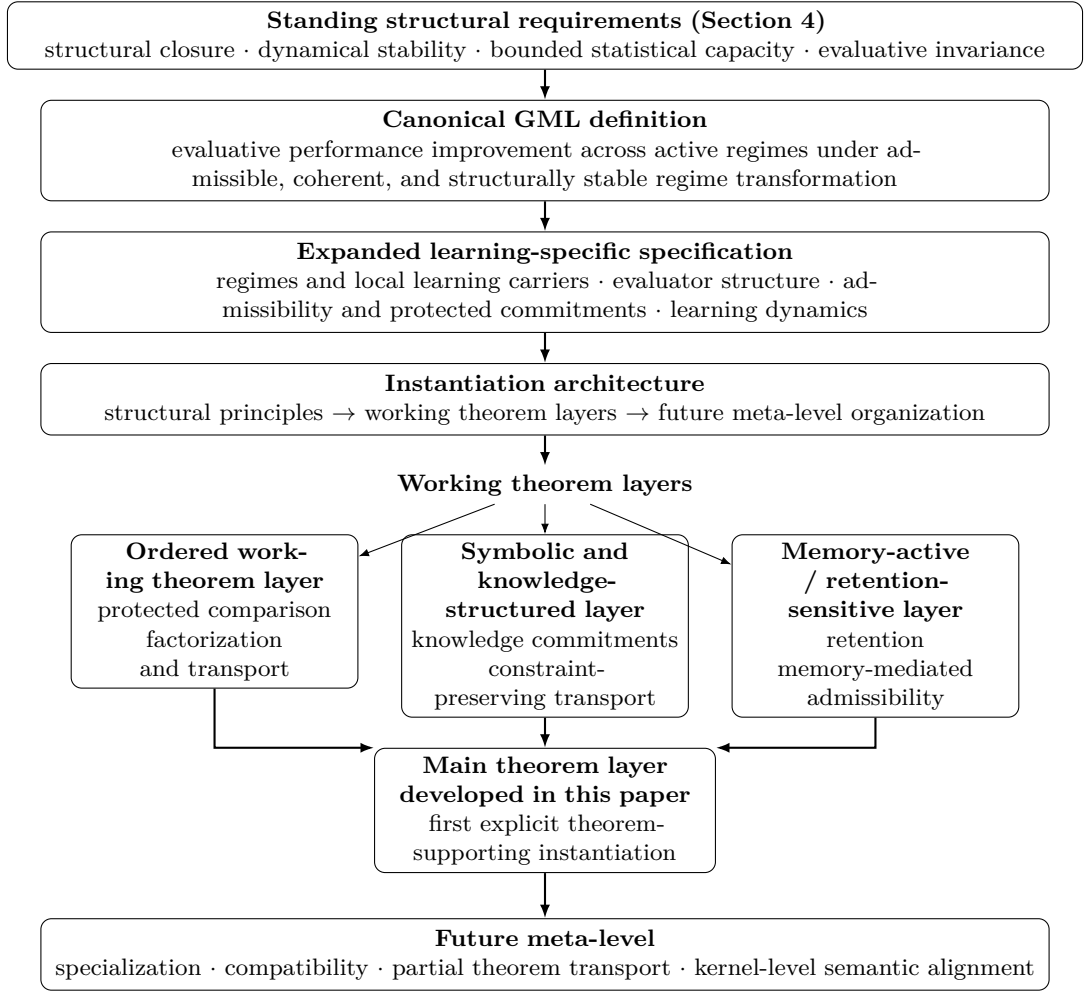

\section{First Theorem-Supporting Instantiation Developed in This Paper}
\label{sec:first-working-layer}

This section develops the first explicit theorem-supporting instantiation of GML in three coordinated blocks. Section~\ref{sec:reduction} recovers Mitchell-style fixed-ontology learning as a boundary-case degeneration and identifies the structural obstructions to faithful fixed-ontology reduction. Section~\ref{sec:stability} develops the protected stability template, the memory-mediated admissibility results, and the explicit convex witness. Section~\ref{sec:representation} establishes the morphism, factorization, composition, and synthesis results. All theorematic developments are stated under the four standing structural requirements of Section~\ref{sec:requirements}.

\subsection{Reduction to Classical Learning}
\label{sec:reduction}

\begin{assumption}[Classical degeneration conditions]
\label{ass:classical-degeneration}
Assume that:
\begin{enumerate}[leftmargin=*,itemsep=2pt]
    \item $\Rset=\{r_0\}$ is a singleton;
    \item $\Tcal=\{\mathrm{id}_{r_0}\}$ contains only the identity transformation;
    \item memory is semantically inert, in the sense that $\M_{r_0}$ carries no protected retention or admissibility-relevant obligation;
    \item the evaluator family reduces to one fixed scalar evaluator
    \[
    V_{r_0}=P;
    \]
    \item the protected core coincides with the evaluator itself,
    \[
    \Phi(V_{r_0})=V_{r_0};
    \]
    \item the local instance corresponds to one fixed task family $T$ and one fixed experience source $E$.
\end{enumerate}
\end{assumption}

This result is included as a boundary-case consistency check. Its role is to verify that when all genuinely regime-varying degrees of freedom are explicitly collapsed, the framework degenerates to the classical fixed-regime Mitchell ontology.

\begin{proposition}[Boundary-case degeneration to the fixed-regime Mitchell setting]
\label{prop:reduction}
Under Assumption~\ref{ass:classical-degeneration}, GML reduces to the statement that a program learns from experience $E$ with respect to task family $T$ and performance measure $P$ if its performance on $T$, as measured by $P$, improves with $E$.
\end{proposition}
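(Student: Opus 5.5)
The plan is to unfold the canonical definition of GML, via the expanded membership criterion for $\GML$, clause by clause under Assumption~\ref{ass:classical-degeneration}. The aim is to show that every clause either trivializes or collapses onto Mitchell's single improvement condition.

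I would first fix the admissible trajectories. Because $\Rset=\{r_0\}$ and $\Tcal=\{\mathrm{id}_{r_0}\}$, every typed path is a finite concatenation of identity arrows, so the only active regime is $r_0$. For the identity arrow, all transport maps $\tau_S,\tau_M,\tau_O,\tau_A,\tau_Y$ are identities. Hence the six clauses of $\Gamma(r_0,r_0,\mathrm{id};V_{r_0},V_{r_0})$ hold:
\begin{itemize}[leftmargin=*,itemsep=2pt]
\item typing and realizability are immediate;
\item evaluator-compatibility and protected preservation hold because $\Phi(V_{r_0})=\Phi(V_{r_0})$ literally, so $\sim_\Phi$ is reflexive on the single evaluator;
\item cross-regime comparison reduces to the order on $P$'s scalar codomain;
\item certified composition holds because $\mathrm{id}\circ\mathrm{id}=\mathrm{id}$ and $(\mathrm{id})_Y\circ(\mathrm{id})_Y=(\mathrm{id})_Y$.
\end{itemize}
So clauses 2 and 3 of the membership criterion are automatically satisfied, and the class of admissible trajectories is nonempty, being exactly the learning runs inside $r_0$.

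Next I would reduce the remaining clauses.
\begin{itemize}[leftmargin=*,itemsep=2pt]
\item Clause 1, local certified improvement in each visited regime, becomes improvement of $V_{r_0}=P$ along the update sequence driven by the experience source $E$ on task family $T$.
\item Clause 4, persistence of the protected-level guarantee, holds because $\Phi(V_{r_0})=V_{r_0}=P$. A guarantee at the protected level is then a guarantee about $P$ itself, and the Level~II ``local guarantee persistence'' consequence makes it survive identity transport trivially.
\item Memory inertness (item 3) removes any memory-mediated admissibility or retention obligation. The memory coordinate therefore enters only as part of the learner's state and imposes no further constraint.
\item The standing requirements of Section~\ref{sec:requirements} degenerate as well. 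Structural closure and evaluative invariance hold under identities. Dynamical stability reduces to zero drift of the protected discrepancy along identity transitions. The capacity inequality holds with $L_\tau=1$, $\mathcal C_{\mathrm{new}}=0$, and vanishing memory term.
\end{itemize}
Consequently ``improves across active regimes, admissibly, coherently, stably, compatibly with the protected core'' is equivalent to ``performance on $T$ as measured by $P$ improves with $E$.'' I would prove both directions. Given Mitchell improvement, the trajectory satisfies all GML clauses by the above. Conversely, a GML trajectory in particular satisfies clause 1, which is Mitchell improvement.

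The main obstacle is interpretive rather than computational: showing that ``coherent'' and ``structurally stable'' add no residual content beyond improvement in the degenerate case. Since the paper defines coherence only as admissibility, protected compatibility and stability jointly describing one learning problem, I would argue it is implied by the single-regime, identity-only structure. I would also check that the protected-core non-vacuity condition does not fail. Here I would note explicitly that the boundary case is exempt, or rather satisfied, because $\Phi=\mathrm{id}$ still protects a genuine component, namely $P$ itself.
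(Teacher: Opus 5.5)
Your proposal is correct and follows essentially the paper's route. Under Assumption~\ref{ass:classical-degeneration} every GML-specific ingredient (regime transport, admissibility, memory obligations, protected-core preservation) collapses, and what remains is improvement of $P$ on $T$ with $E$; you unfold this clause by clause and make both directions explicit. One small caution: $\Gamma$'s six conditions are only necessary (``value is $1$ only if''), so checking them for $\mathrm{id}_{r_0}$ does not give $\Gamma(r_0,r_0,\mathrm{id}_{r_0};V_{r_0},V_{r_0})=1$; instead, as the paper implicitly does (``admissibility adds no extra constraint''), treat certification of the identity arrow as part of item~2 of the assumption.
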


\begin{proof}
Under Assumption~\ref{ass:classical-degeneration}, every specifically GML-level degree of freedom collapses. There is only one regime, so no cross-regime transport question arises; the only candidate transformation is the identity, so admissibility adds no extra constraint; memory carries no protected obligation, so it contributes no independent semantic content; and the evaluator is fixed and scalar, so protected-core preservation is trivial. What remains is precisely a learner receiving experience $E$, acting on tasks $T$, and being judged by one performance functional $P$. Learning is therefore identical to improvement of $P$ on $T$ with experience $E$, which is the Mitchell criterion.
\end{proof}

\paragraph{Interpretation of the reduction theorem.}
The proposition should be read as a boundary-case sanity result, not as a deep theorem of independent analytical content. Its role is to show exact compatibility with the classical ontology when regime variability, evaluator transport, and memory-mediated admissibility collapse. GML therefore extends the classical framework without retroactively invalidating it on its legitimate fixed-regime domain.

\begin{proposition}[Sufficient conditions for faithful Mitchell-reducibility]
\label{prop:mitchell-reducibility}
Let $\mathfrak G$ be a canonical GML system. Suppose:
\begin{enumerate}[leftmargin=*,itemsep=2pt]
    \item all regimes belong to a single protected-equivalence class;
    \item every admissible transformation acts trivially on the protected core;
    \item memory carries no admissibility-relevant information beyond what can be absorbed into a fixed experience description;
    \item all regime-local evaluators admit one common fixed scalar representative without structural loss;
    \item cross-regime comparison coincides with ordinary comparison under that representative.
\end{enumerate}
Then the induced GML process is faithfully Mitchell-expressible.
\end{proposition}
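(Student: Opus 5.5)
The plan is to build an explicit Mitchell triple $(E,T,P)$ from the GML system and check, clause by clause, that this triple meets the definition of $\Mit$. Checking the definition is not enough: we also need \emph{faithfulness}, meaning the class of admissible trajectories certified as learning in $\mathfrak G$ coincides with the processes certified as learning by $(E,T,P)$. The overall strategy is to collapse $\mathfrak G$ onto the boundary case of Assumption~\ref{ass:classical-degeneration} via a quotient, and then invoke Proposition~\ref{prop:reduction}.

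\textbf{Step 1 (single protected class and trivial transport).} By hypothesis~1, all $V_r$ lie in one $\sim_\Phi$-class. By hypothesis~2, every $\tau$ with $\Gamma=1$ acts as the identity on $\Phi(V_r)$. Passing to the protected quotient (well defined by the regularity discipline on $\sim_\Phi$), the regime carrier $\Rset$ therefore collapses to one class $[r_0]$. Every admissible arrow, and by the certified composition rule every admissible path, then descends to $\mathrm{id}_{[r_0]}$. This recovers items~1--2 of Assumption~\ref{ass:classical-degeneration} at the protected level, and by the same token item~5: $\Phi(V)=V$ holds on the quotient.

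\textbf{Step 2 (evaluator and comparison).} By hypothesis~4, fix the common scalar representative $P$, so that each $V_r$ factors through $P$ without structural loss. Set $T$ to be the union of the regime-local task interfaces $\{\Obs_r,\Act_r\}$, read modulo the transport maps. Hypothesis~5 then gives that any transported gauge $\tau_Y$ agrees with ordinary comparison of $P$-values. Hence "local evaluative improvement in each visited regime" along an admissible trajectory is equivalent to monotone improvement of $P$. This yields clauses~1--2 of the definition of $\Mit$.

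\textbf{Step 3 (memory and experience).} By hypothesis~3, memory content relevant to admissibility is absorbed into a fixed experience description. Define $E$ as the trajectory's data stream augmented with that absorbed memory information, and regard the regime index as part of $E$. Clauses~3--4 of the definition of $\Mit$ hold, and item~3 of Assumption~\ref{ass:classical-degeneration} holds. Proposition~\ref{prop:reduction} now identifies GML-learning with Mitchell-learning for $(E,T,P)$.

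\textbf{Main obstacle: faithfulness, in both directions.} One direction is straightforward: every GML-certified trajectory yields $P$-improvement with $E$. The converse is harder. Every $P$-improving process under $(E,T,P)$ must correspond to an admissible trajectory in $\mathfrak G$, so we must exclude Mitchell processes that use transitions with $\Gamma=0$. My first attempt would be to argue as follows. Since the protected core is the only gate used in the membership criterion for $\GML$, and hypotheses~1--2 make protected preservation automatic, every typed, realizable transition is admissible. Transitions outside $\Tcal$ are not representable in $E$, because $E$ is built only from admissible paths. I would also record the guarantee-anchoring condition on $\Phi$ as the justification that no protected-level guarantee is lost in the quotient.
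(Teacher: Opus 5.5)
The step that fails is your argument for the converse direction of faithfulness: that hypotheses~1--2 make protected preservation automatic, so every typed, realizable transition is admissible.

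First, hypothesis~2 only speaks about transformations already known to be admissible, so it cannot be used to establish admissibility. Moreover, the definition of $\Gamma$ gives necessary conditions only (``value is $1$ only if''). These include evaluator-compatibility, well-defined protected comparison and composition eligibility, so the protected core is not the only gate, and verifying the listed clauses does not by itself force $\Gamma=1$.

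More seriously, having all evaluators in one $\sim_\Phi$-class does not make preservation automatic along a transported state. In the paper's own two-regime regression example, both regimes carry the same core $\|w-\bar w\|\le\rho$, so hypotheses~1--2 hold. Yet a well-typed, realizable gradient step that leaves the ball has $\Gamma=0$. Your conclusion would in fact violate the non-vacuity condition on $\Phi$, which forbids a choice of $\Phi$ that makes every transition automatically protected.

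The fallback sentence does not rescue the argument. It conflates transitions outside $\Tcal$ with transitions having $\Gamma=0$; inadmissible candidates lie inside $\Tcal$. And your Step~3 never actually restricted $E$ to admissible paths.

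The repair is to build the gate into the fixed triple instead of arguing it away. Under hypotheses~1--3, admissibility is \emph{static}: there is no regime-indexed core and no irreducible memory obligation. It can therefore be absorbed the way the paper allows for static constraints, in constrained-ERM style. You can restrict the hypothesis class in $T$, or the runs generated by $E$, to admissible trajectories. Alternatively, note that the scalar representative of hypothesis~4, being lossless, must already encode the gate (e.g., via a singular penalty). Either way, both directions of your faithfulness notion then hold by construction.

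Separately, the detour through Proposition~\ref{prop:reduction} in Step~1 is not justified. The $\sim_\Phi$-quotient does not give $\Phi(V)=V$. It identifies evaluators with equal cores and thereby erases exactly the utility information that $P$ is supposed to measure. It also yields a singleton only at the protected level. Assumption~\ref{ass:classical-degeneration} needs a literal singleton regime with $\Tcal=\{\mathrm{id}_{r_0}\}$, whereas the carriers $\State_r,\M_r,\Obs_r,\Act_r$ remain regime-indexed and linked only by transport maps that need not be invertible.

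The detour is also unnecessary. The paper's proof is essentially your Steps~2--3 done directly: hypotheses~1--2 rule out new protected distinctions, hypothesis~3 absorbs history into $E$, and hypotheses~4--5 supply $P$ with ordinary comparison agreeing with protected comparison. It reads faithfulness as ``no protected distinction, admissibility gate, or comparison beyond the protected quotient is lost'', not as an equality of certified learning classes.
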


\begin{proof}
If all regimes lie in one protected-equivalence class and every admissible transformation acts trivially on that class, then regime variation does not create a new protected semantic distinction. If memory carries no irreducible admissibility information, then history can be absorbed into the classical experience description without semantic loss. If, moreover, all local evaluators share one fixed scalar representative and ordinary comparison agrees with protected comparison under that representative, then the whole process can be represented by a single fixed ontology $(E,T,P)$ while preserving the semantic data required for faithful fixed-ontology representation. Hence the process is faithfully Mitchell-expressible.
\end{proof}

\paragraph{Why this proposition matters.}
The point is not merely to say that classical learning is contained in GML. The point is to state when a GML process can genuinely be collapsed back into Mitchell's ontology without structural loss. This prepares the logical counterpart of the separation theorem: some processes satisfy Proposition~\ref{prop:mitchell-reducibility}, while others do not under faithful fixed-ontology reduction.

\paragraph{Morphismic reading of the reduction.}
In the language of Section~\ref{sec:representation}, the reduction can also be read as a degeneration through a protected-faithful morphism from a richer regime graph to a one-node fixed-regime image. This confirms that Mitchell-style learning is best understood as a degenerate morphic image of GML, not as a competing ontology of equal generality.

\begin{proposition}[Structural obstructions to faithful fixed-ontology reduction]
\label{prop:mitchell-obstruction}
Let $\mathfrak G$ be a multi-regime GML system. Under the structural notion of learning continuity adopted in this framework, a faithful reduction of $\mathfrak G$ to a single fixed-ontology Mitchell tuple $(E,T,P)$ is obstructed---meaning that the reduction cannot preserve the admissibility structure of learning evolution without structural loss---whenever any of the following conditions holds:
\begin{enumerate}[leftmargin=*,itemsep=2pt]
    \item \emph{Admissibility-critical memory:} admissibility strictly depends on retained-competence obligations in $\M_r$, so that absorbing memory into a passive experience history $E$ erases its operational role as a strict transition gate;
    \item \emph{Quotient-restricted comparability:} cross-regime evaluators $V_r$ and $V_{r'}$ are mathematically commensurable only on the quotient space induced by $\sim_\Phi$, so that any universal scalar comparison through $P$ extends evaluation beyond the protected quotient and thereby creates comparisons that are undefined at the intended semantic level;
    \item \emph{Non-aggregable protected cores:} the protected core $\Phi(V_r)$ functions as a hard admissibility requirement ($\Gamma \in \{0,1\}$), so that aggregating it with local utility into a standard compensable scalar objective $P$ structurally permits protected-core violation whenever local empirical gain compensates for the penalty.
\end{enumerate}
\end{proposition}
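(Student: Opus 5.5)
The proof will work case by case, but all three cases share one template. First I will make precise what a faithful reduction is, reading it off Proposition~\ref{prop:mitchell-reducibility} and the definition of $\Mit$. A reduction is a map $\rho$ that sends the admissible trajectories of $\mathfrak G$ to learning processes specified by a single fixed tuple $(E,T,P)$. It counts as faithful when the admissibility structure can be recovered from the image. Concretely, there should be a predicate $\hat\Gamma$ on Mitchell-side data with $\Gamma(r,r',\tau;V_r,V_{r'})=\hat\Gamma(\rho(\cdot))$, and protected comparisons on the quotient by $\sim_\Phi$ should coincide with $P$-comparisons. Once this is fixed, each item of the statement reduces to a non-injectivity argument. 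In each case I will exhibit two configurations with the same Mitchell image but different values of $\Gamma$, or with different definedness of protected comparison. No $\hat\Gamma$ can then exist, which is exactly the structural loss the statement asserts.

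\emph{Case 1 (admissibility-critical memory).} Item 3 of the definition of $\Mit$ says memory is not a first-class semantic object, so it must be folded into $E$ as a passive history. I will take two state--memory pairs $(s,m)$ and $(s,m')$ that generate the same experience record but differ on a retained-competence obligation. By hypothesis admissibility depends strictly on that obligation, so for some $\tau$ we get $\Gamma=1$ for $m$ and $\Gamma=0$ for $m'$. Their images under $\rho$ agree, so no predicate on the image can separate them.

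\emph{Case 2 (quotient-restricted comparability).} A single scalar $P$ induces a total preorder on all evaluator outcomes, including pairs lying in distinct $\sim_\Phi$-classes. By hypothesis, comparison of such pairs is undefined at the protected level. Faithfulness would require $P$-comparison to agree with protected comparison, and that is impossible: $P$ assigns a definite verdict where the protected semantics, as required by Evaluative invariance, assigns none. So either $P$ introduces comparisons that are undefined at the protected level, or it is not total and hence not a Mitchell performance measure.

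\emph{Case 3 (non-aggregable protected cores).} Suppose $P=U+\lambda\,\pi$, where $U$ is local utility and $\pi$ is a finite penalty for violating $\Phi$. More generally, take any $P$ in which $U$ and $\Phi$-violation compensate monotonically. I will construct a transition whose utility gain exceeds $\lambda\sup\pi$ and which violates $\Phi$. It increases $P$, so it counts as Mitchell-improvement, yet $\Gamma=0$. The only way to block this is an infinite penalty or a lexicographic order, and neither is a compensable scalar objective.

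The main obstacle is the first step. ``Faithful reduction'' is only informal in the paper, and the three arguments are only as strong as the formalization chosen. A definition that is too weak lets an encoding smuggle memory into $E$ as an active gate. I would therefore tie faithfulness to item 3 and item 4 of the definition of $\Mit$ and state explicitly that the obstruction is relative to that notion, as the statement's phrase ``under the structural notion of learning continuity adopted in this framework'' allows.
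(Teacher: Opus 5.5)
Your case-by-case structure, your case~2 argument and your penalty witness in case~3 match the paper's. The trouble is that the notion of faithfulness you fix in the first step is not the one these arguments use, and under it case~3 gives no obstruction.

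You define faithfulness as recoverability: $\Gamma=\hat\Gamma\circ\rho$ for some predicate $\hat\Gamma$ on Mitchell-side data, plus agreement of comparisons. In case~3, however, you do not exhibit two configurations with the same image. You exhibit a transition that $P$ rewards although $\Gamma=0$. That contradicts nothing in your definition. With $P=U+\lambda\pi$ the violation is visible in the image, since it is exactly what $\pi$ records, so taking $\hat\Gamma$ to be ``no penalty incurred'' recovers $\Gamma$. What case~3 actually refutes is the paper's requirement that a faithful reduction neither prohibit admissible continuations nor authorize inadmissible ones. In other words, Mitchell's verdict ``$P$ improves with $E$'' must agree with admissible improvement.

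The same mismatch weakens case~1. Folding memory into $E$ does not make $\rho$ non-injective, because $E$ may record the full history, as your last paragraph anticipates, and then $\hat\Gamma$ exists. Under the paper's notion no information loss is needed. As in Proposition~\ref{prop:memory-admissibility}, two updates with the same $P$-gain receive the same Mitchell verdict but different values of $\Gamma$. The only escape is to make $P$ depend on the evolving retained-competence record, which rebuilds the GML gate inside $P$ instead of reducing it away. So the object that must fail to separate admissibility is the Mitchell improvement verdict, not the map $\rho$. With that change, case~2 stands as you wrote it and cases~1 and~3 go through.

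Separately, in case~3 you cannot construct, for an arbitrary system, a violating transition whose utility gain exceeds the penalty. If utilities are bounded, any finite $\lambda$ larger than the oscillation of $U$ already puts every violating state below every non-violating one in $P$. Then no $P$-improving move crosses the protected boundary. Your claim that only an infinite penalty or a lexicographic order can block violation is therefore false as stated. The statement and the paper's proof assert only the conditional version: violation is permitted \emph{whenever} the local gain compensates the penalty, and \emph{uniform} prevention over utility profiles needs a singular penalty or explicit gating. Phrase case~3 that way.
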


\begin{proof}
A reduction to a fixed ontology $(E,T,P)$ is structurally faithful only if it preserves the admissibility structure of learning evolution, neither prohibiting valid continuations nor authorizing transitions that violate protected semantic continuity. A purely static protected constraint may of course be encoded inside a restricted hypothesis class or a constrained ERM formulation. The present obstruction does not deny that possibility. It concerns the multi-regime case in which admissibility depends on typed transport, protected comparison, and memory-mediated obligations that evolve with the trajectory itself.
If condition (1) holds, mapping $\M_r$ to $E$ removes the active gating role played by admissibility, converting a strict boundary condition into contextual data. In particular, once admissibility depends on retained-competence obligations carried by memory, a faithful fixed-ontology reduction would have to encode the evolving history of protected commitments as part of the effective task state rather than merely as a static safe set.
If condition (2) holds, mapping a regime-indexed family $\{V_r\}$ to a universal evaluator $P$ destroys the quotient restriction, thereby extending comparison to states that do not share protected semantic content.
If condition (3) holds, replacing the strict admissibility certificate $\Gamma$ with a standard scalar penalty inside $P$ replaces a hard structural boundary with a compensable trade-off. A minimal analytic witness can be read on a scalarized objective of the form $P(s)=U(s)-\lambda\mathbf 1_{\{\text{protected core violated}\}}$ with finite $\lambda>0$. Whenever there exists a direction along which the local proxy gain in $U$ exceeds the finite penalty, a maximizing update can cross the protected boundary even though the corresponding GML transition is inadmissible. Preventing such violation uniformly requires either a singular penalty or an explicit projection/gating mechanism, which is precisely the structural role played here by admissibility certification and protected transport. In this sense, the obstruction is not merely that soft penalties may fail, but that dynamic admissibility with protected memory is not faithfully preserved by a fixed-ontology image unless the GML boundary objects are effectively rebuilt inside it.
In all three cases, mapping the GML process to a standard $(E,T,P)$ tuple incurs structural loss and therefore obstructs a faithful fixed-ontology reduction.
\end{proof}

\paragraph{Why the obstruction is operational, not merely linguistic.}
This proposition clarifies why a faithful fixed-ontology image is obstructed in the multi-regime case considered here. The obstruction does not arise merely because classical machine learning lacks the vocabulary of protected cores by stipulation. It is operational: under standard compensable fixed-objective reductions, hard admissibility constraints are converted into soft penalties, and memory-mediated admissibility is flattened into passive contextual data. Under the notion of faithful fixed-ontology reduction used here, such a reduction is obstructed because the fixed-ontology image structurally permits protected-core violation or loss of protected semantic continuity that the GML architecture explicitly prohibits through $\Gamma$ and $d_\Tcal$. Admissibility must therefore remain a first-class structural object rather than an artifact of a reward function.

We stress that the obstruction established here does not rest on the trivial claim that all hard constraints resist scalarization. Static constraints may indeed be encoded by constrained ERM or by restricting the admissible hypothesis class in a Mitchell-style setting. The stronger obstruction arises when admissibility depends on an actively transported memory state carrying retained competences or protected obligations inherited from prior regimes. In that case, a faithful fixed-ontology reduction would have to re-encode the trajectory-dependent semantic trace of past protected commitments inside a nominally static ontology. The point is therefore not merely that soft penalties may fail, but that dynamic admissibility with protected memory is not faithfully preserved by a fixed-ontology image unless the relevant GML boundary objects are effectively rebuilt inside it.

\paragraph{Why state-space enlargement does not dissolve the obstruction.}
A common objection is that a sufficiently enlarged Mitchell state could simply encode memory, evaluator metadata, and regime tags, thereby simulating any GML process inside a fixed ontology. The present obstruction result is not directed against such bare syntactic simulation. It is directed against faithful fixed-ontology reduction without structural loss. Enlarging the state does not by itself preserve admissibility as a first-class gate, quotient-restricted comparability as a protected semantic restriction, or the non-aggregability of hard protected cores. In particular, once admissibility is mediated by retained-competence memory, a fixed-state encoding must carry forward the dynamically evolving trace of protected commitments in order to reproduce the same legality structure. If an enlarged fixed-state encoding reproduces those boundary objects explicitly, then it has re-encoded the GML structure rather than reduced it away. The obstruction therefore concerns semantic preservation, not the mere possibility of storing more symbols in a larger state description.

\paragraph{What the reduction layer already fixes.}
The reduction and obstruction results already determine two indispensable parts of the present theory. First, they recover Mitchell-style fixed-ontology learning as a genuine boundary-case degeneration of GML rather than as a competing ontology of equal generality. Second, they isolate the first irreducibly multi-regime zone of the theory by showing exactly when faithful fixed-ontology collapse remains possible and when it fails without structural loss. The broader first theorem-supporting consolidation of the present layer is established only after the later stability, persistence, morphism, transport, and composition results are added.

\subsection{Stability, Memory, and Regime Transitions}
\label{sec:stability}

\paragraph{Protected stability template and formal development.}
The results below develop the stability layer of the first theorem-supporting instantiation. Stability is not an optional refinement: it is one of the conditions under which learning remains theoretically tenable across active regimes once evaluator transport, protected guarantees, and memory-mediated admissibility become part of the learning object. In GML, stability is downstream from a more basic chain:
\[
\text{protected-core preservation}
\;\Longrightarrow\;
\text{protected equivalence}
\;\Longrightarrow\;
\text{meaningful evaluator transport}  
\]
\[
\;\Longrightarrow\;
\text{persistence of learning guarantee}
\;\Longrightarrow\;
\text{stability at the correct abstraction level}.
\]
This point is essential. The present section does not aim to prove a universal generalization bound. Its aim is to isolate the stability template under which admissible regime variation need not destroy the identity of the learning problem nor the meaning of the guarantees attached to it.

\begin{definition}[Protected-level stability]
A GML trajectory
\[
(r_t,s_t,m_t)_{t\geq 0}
\]
is \emph{protected-level stable} if there exists a protected-level discrepancy functional
\[
W_t = W(r_t,s_t,m_t;\Phi(V_{r_t}))
\]
such that along admissible transitions, the quantity $W_t$ remains controlled in a way that preserves the semantic comparability of the learning guarantees attached to the protected core.
\end{definition}

\paragraph{Theoretical role of protected stability.}
The theory does not require raw state trajectories to remain invariant under regime change. That would be too strong and often meaningless. What must remain stable is the learning process at the level where its identity is protected: the level of the protected core, the admissibility certificate, and the guarantees that survive transport.
\begin{theorem}[Template Theorem]
\label{thm:protected-stability}
\textit{Protected-level discrepancy bounds under admissible transport.}

Let $(r_t,s_t,m_t)_{t\ge 0}$ be a GML trajectory governed by admissible regime transitions, so that $\Gamma(r_t,r_{t+1},\tau_t;V_{r_t},V_{r_{t+1}})=1$ for all $t$. Assume there exists a nonnegative protected-level discrepancy functional
\[
W_t = W(r_t,s_t,m_t;\Phi(V_{r_t}))
\]
satisfying the local drift condition
\[
\mathbb{E}[W_{t+1}\mid \mathcal F_t]
\le
(1-\alpha)W_t+\delta+\beta\, d_{\Tcal}(r_t,r_{t+1}),
\]
for constants $\alpha\in(0,1]$, $\delta\ge 0$, and $\beta\ge 0$, where $\mathcal F_t$ is the filtration of the learning process up to time $t$, and where $d_{\Tcal}(r_t,r_{t+1})<\infty$ is the certified structural cost of the admissible transition. Whenever the regime sequence is stochastic, the cost term $d_{\Tcal}(r_t,r_{t+1})$ is assumed $\mathcal F_t$-measurable. Then for every $n\ge 1$,
\[
\mathbb E[W_n]
\le
(1-\alpha)^n\mathbb E[W_0]
+
\frac{\delta}{\alpha}
+
\beta\sum_{k=0}^{n-1}(1-\alpha)^{n-1-k}d_{\Tcal}(r_k,r_{k+1}).
\]
Furthermore, the long-run protected-level behavior depends on the asymptotic regime-variation profile as follows:
\begin{enumerate}[leftmargin=*,itemsep=2pt]
    \item \emph{Ultimate boundedness:} if the admissible regime costs are uniformly bounded, say $d_{\Tcal}(r_t,r_{t+1})\le \bar d$ for all $t$, then
    \[
    \limsup_{n\to\infty}\mathbb E[W_n]
    \le
    \frac{\delta+\beta\bar d}{\alpha}.
    \]
    \item \emph{Asymptotic recovery to the noise floor:} if
    \[
    d_{\Tcal}(r_t,r_{t+1})\to 0,
    \]
    then the transport contribution vanishes asymptotically and
    \[
    \limsup_{n\to\infty}\mathbb E[W_n]
    \le
    \frac{\delta}{\alpha}.
    \]
    \item \emph{Exact convergence:} if, in addition, $\delta=0$ and
    \[
    d_{\Tcal}(r_t,r_{t+1})\to 0,
    \]
    then
    \[
    \lim_{n\to\infty}\mathbb E[W_n]=0.
    \]
\end{enumerate}
\end{theorem}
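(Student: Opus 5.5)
The plan is to take total expectations in the drift condition and then unroll the resulting deterministic linear recursion. First note that $W_t\ge 0$, so the conditional expectations are well defined in $[0,\infty]$. Write $c_t:=d_{\Tcal}(r_t,r_{t+1})$. Because $c_t$ is $\mathcal F_t$-measurable (trivially so when the regime sequence is deterministic), the tower property gives
\[
\mathbb E[W_{t+1}]\le(1-\alpha)\mathbb E[W_t]+\delta+\beta\,\mathbb E[c_t].
\]
In the deterministic-regime reading of the statement, $\mathbb E[c_t]=c_t$, and this is the form in which the bound is written. In the stochastic case the same argument yields the bound with $\mathbb E[c_t]$ in place of $c_t$, and I would remark on this. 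Setting $u_t:=\mathbb E[W_t]$, we obtain the scalar recursion $u_{t+1}\le(1-\alpha)u_t+\delta+\beta c_t$.

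The next step is induction on $n$, with the claimed closed form
\[
u_n\le(1-\alpha)^n u_0+\delta\sum_{j=0}^{n-1}(1-\alpha)^j+\beta\sum_{k=0}^{n-1}(1-\alpha)^{n-1-k}c_k .
\]
The induction step uses only that $1-\alpha\ge 0$, so multiplying the inequality by $1-\alpha$ preserves its direction. The geometric sum is bounded by $\sum_{j\ge0}(1-\alpha)^j=1/\alpha$, which gives the displayed inequality. If $\mathbb E[W_0]=\infty$ the bound is trivial, so we may assume it is finite.

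For the asymptotic parts, the first term $(1-\alpha)^n u_0$ tends to $0$ when $\alpha\in(0,1)$, and vanishes for $n\ge1$ when $\alpha=1$. The remaining work is the transport convolution $S_n:=\sum_{k=0}^{n-1}(1-\alpha)^{n-1-k}c_k$.
\begin{itemize}
\item \emph{Part 1.} If $c_k\le\bar d$, then $S_n\le\bar d/\alpha$, and taking $\limsup$ gives $(\delta+\beta\bar d)/\alpha$.
\item \emph{Part 2.} Here lies the only step with any content, a discrete Toeplitz/Cesàro-type lemma: if $c_k\to0$ and $\rho=1-\alpha\in[0,1)$, then $\sum_{k<n}\rho^{n-1-k}c_k\to0$. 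Given $\varepsilon>0$, choose $K$ with $c_k\le\varepsilon$ for $k\ge K$ and split the sum at $K$. The tail part is at most $\varepsilon/\alpha$. The head part is at most $\rho^{\,n-K}\sum_{k<K}c_k$, a fixed finite constant times $\rho^{n-K}$, which tends to $0$. Hence $\limsup S_n\le\varepsilon/\alpha$ for every $\varepsilon>0$, so $S_n\to 0$ and $\limsup u_n\le\delta/\alpha$.
\item \emph{Part 3.} With $\delta=0$, the same bound gives $\limsup u_n\le 0$. Since $u_n\ge0$, the limit exists and equals $0$.
\end{itemize}

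I expect the main obstacle to be the measurability and expectation issue in the stochastic case, rather than the recursion itself. To place $c_t$ outside the expectation, one needs either deterministic costs or an almost-sure bound. Part 1 is fine under an almost-sure bound $c_t\le\bar d$. For Parts 2 and 3 with random costs, I would assume either deterministic $c_t\to0$ or $\mathbb E[c_t]\to0$, for instance via dominated convergence under a uniform bound. With that assumption, the Toeplitz step goes through verbatim for the sequence $\mathbb E[c_t]$.
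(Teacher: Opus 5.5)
Your proposal is correct and follows the paper's route: take total expectations, unroll the linear recursion with the geometric-series bound, use that a vanishing cost sequence convolved with the contractive kernel $(1-\alpha)^k$ vanishes, and get exact convergence from nonnegativity. Your explicit $\varepsilon$-splitting for Part 2 and your point that stochastic costs must enter as $\mathbb E[c_t]$ (the paper keeps $d_{\Tcal}$ outside the expectation without comment) are genuine sharpenings; note only that for the asymptotic parts with $\alpha<1$, $\mathbb E[W_0]<\infty$ is an implicit hypothesis (shared with the paper), not a without-loss-of-generality reduction.
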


\begin{proof}
Taking total expectations in the local drift inequality gives
\[
\mathbb E[W_{t+1}]
\le
(1-\alpha)\mathbb E[W_t]+\delta+\beta\,d_{\Tcal}(r_t,r_{t+1}).
\]
Unrolling this linear recurrence from $t=0$ to $t=n-1$ yields
\[
\mathbb E[W_n]
\le
(1-\alpha)^n\mathbb E[W_0]
+
\sum_{k=0}^{n-1}(1-\alpha)^{n-1-k}\delta
+
\beta\sum_{k=0}^{n-1}(1-\alpha)^{n-1-k}d_{\Tcal}(r_k,r_{k+1}),
\]
and the geometric-series estimate
\[
\sum_{k=0}^{n-1}(1-\alpha)^{n-1-k}\delta \le \frac{\delta}{\alpha}
\]
gives the stated finite-horizon bound. If $d_{\Tcal}(r_t,r_{t+1})\le \bar d$, then the transport term is bounded above by
\[
\beta\bar d\sum_{k=0}^{n-1}(1-\alpha)^{n-1-k}\le \frac{\beta\bar d}{\alpha},
\]
which yields the first asymptotic regime. If instead $d_{\Tcal}(r_t,r_{t+1})\to 0$, then the discrete convolution of the vanishing sequence $\big(d_{\Tcal}(r_t,r_{t+1})\big)_{t\ge 0}$ with the strictly contractive geometric kernel $\big((1-\alpha)^k\big)_{k\ge 0}$ also vanishes, which yields the second regime. The third follows immediately when $\delta=0$, since then
\[
\limsup_{n\to\infty}\mathbb E[W_n]\le 0
\]
and $\mathbb E[W_n]\ge 0$ by nonnegativity.

This result is explicitly a structural template rather than a standalone convergence theorem. Its role is to fix the exact control schema under which protected discrepancy can be propagated through admissible regime variation. The burden of constructing $W_t$ and verifying the local contraction parameters belongs to the specific working theorem layers, as illustrated constructively in Proposition~\ref{prop:two-regime-witness}. The three asymptotic regimes above show how long-run protected behavior depends on the regime-variation profile: bounded cost yields ultimate boundedness, vanishing cost yields recovery to the noise floor, and vanishing cost with $\delta=0$ yields exact convergence.
\end{proof}

\paragraph{Relation to standard drift analysis.}
The algebraic unrolling in the protected-stability template is formally standard and may be read as a discrete-time Foster--Lyapunov type estimate. The specifically GML-level content lies in the structural conditions under which the discrepancy functional, admissibility gate, and transported complexity control make such a template non-vacuous across regime changes.

\paragraph{High-probability concentration.}
While Theorem~\ref{thm:protected-stability} is stated in expectation in order to isolate the clean structural template, the formulation through the filtration $\mathcal F_t$ is compatible with standard high-probability refinements. In subclasses where the single-step fluctuation terms $W_{t+1}-\mathbb E[W_{t+1}\mid\mathcal F_t]$ are uniformly bounded or sub-Gaussian, martingale concentration tools such as Azuma--Hoeffding type inequalities can be applied to the drift process. This does not yield a separate theorem here, but it shows that the expectation-level stability template is naturally aligned with finite-horizon $(\epsilon,\delta)$-style PAC relaxations of admissibility in stochastic theorem-layer instantiations.

\paragraph{Status of the regime-variation cost $d_{\Tcal}$.}
The theory does not require one universal metric for regime variation. At the level of the present theorem layer, $d_{\Tcal}$ should be read as a typed cost of admissible regime change whose role is made explicit in the drift control. Four families are especially natural in the present framework: (i) \emph{distributional regime costs}, when regime change is primarily a shift in data law or comparator law; (ii) \emph{structural regime costs}, when the typed state/interface transport itself changes nontrivially; (iii) \emph{memory-sensitive costs}, when admissibility depends on retained-competence or protected-memory updates; and (iv) \emph{evaluator-aware costs}, when the main burden lies in transporting the local evaluator while preserving the protected core. Hybrid costs combining several of these components are allowed. What matters here is not commitment to one canonical metric, but the requirement that the cost be explicit, typed, and semantically compatible with the protected evaluator structure used in the theorem layer. In the concrete two-regime witness below, and in the worked toy example of Section~\ref{sec:examples}, $d_{\Tcal}$ is instantiated by an anchor-shift-controlled transport term within a convex quadratic subclass, precisely to show how the protected stability template specializes once the source of regime variation is made explicit. The paper therefore does not treat $d_{\Tcal}$ as a canonical divergence attached once and for all to every learning problem. It should be read as a theorem-layer-certified cost of admissible regime variation which, in controlled subclasses, may be instantiated or bounded by more familiar shift measures. More generally, the structural component of $d_{\Tcal}$ may also absorb observation/action interface mismatch through the continuity modulus of $(\tau_O,\tau_A)$, so that the interface carriers $\Obs_r$ and $\Act_r$ play a mathematically active role whenever regime change alters how the learner observes or acts. Throughout the present theorem layer, admissibility and regime cost are linked rather than independent: finite regime cost presupposes admissibility certification, whereas inadmissible transitions incur infinite semantic cost and are excluded from protected learning continuation. Consequently, an active GML learning trajectory is strictly confined to the finite-cost subgraph of $\Tcal$; attempting an inadmissible transition topologically terminates the current protected learning problem.

\begin{proposition}[Explicit Witness for the Protected Stability Template]
\label{prop:two-regime-witness}
The protected stability template defined in Theorem~\ref{thm:protected-stability} is non-vacuous. Specifically, within a convex anchor-based subclass of commensurable regimes, one can construct a discrepancy functional $W_t$ such that:
\begin{enumerate}[leftmargin=*,itemsep=2pt]
    \item $W_t$ satisfies the contractive drift $\alpha > 0$ whenever the local update rule is strictly contractive toward the regime-local semantic anchor;
    \item in the present construction, the transport overhead $\beta d_\Tcal(r,r')$ is explicitly controlled by the squared Euclidean shift between the anchors $\mu(r)$ and $\mu(r')$.
\end{enumerate}
This witness establishes that the protected stability template admits an explicit realization within a standard convex-quadratic learning setting.
\end{proposition}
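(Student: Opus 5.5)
The plan is to build an explicit two-regime (or finite-regime) convex quadratic instance and verify the drift hypothesis of Theorem~\ref{thm:protected-stability} by direct computation.

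\textbf{Setup.} Take $\State_r=\Real^d$ for every regime and memory inert, so $\M_r$ is a singleton. Give each regime $r$ a semantic anchor $\mu(r)\in\Real^d$ and the regime-local loss
\[
\Loss_r(s)=\tfrac12\|s-\mu(r)\|^2 .
\]
The protected core $\Phi(V_r)$ is the anchor itself, together with a margin or threshold on the distance to it.

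\textbf{Transport and admissibility.} All transports $\tau_S$ are the identity on $\Real^d$. Set $\Gamma=1$ exactly when the anchor shift is finite and the protected threshold class is preserved up to $\sim_\Phi$. The transition cost is $d_\Tcal(r,r')=\|\mu(r)-\mu(r')\|^2$. The update is the damped step
\[
s_{t+1}=s_t-\eta(s_t-\mu(r_{t+1}))+\xi_t,\qquad \eta\in(0,1],
\]
where $\xi_t$ is zero-mean noise with $\mathbb E[\|\xi_t\|^2\mid\mathcal F_t]\le\sigma^2$. This is gradient descent on the incoming regime's loss. The map $s\mapsto s-\eta(s-\mu)$ is a $(1-\eta)$-contraction toward $\mu$, which is the "strictly contractive toward the regime-local anchor" hypothesis of item~1.

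\textbf{Discrepancy functional and drift.} Take $W_t=\|s_t-\mu(r_t)\|^2$, which depends on $(r_t,s_t)$ only through the protected anchor. Then
\[
s_{t+1}-\mu(r_{t+1})=(1-\eta)\big(s_t-\mu(r_t)\big)+(1-\eta)\big(\mu(r_t)-\mu(r_{t+1})\big)+\xi_t .
\]
Take conditional expectation; the noise cross terms vanish. Bound the remaining cross term by Young's inequality: for any $\varepsilon>0$,
\[
\|a+b\|^2\le(1+\varepsilon)\|a\|^2+(1+\varepsilon^{-1})\|b\|^2 .
\]
This gives
\[
\mathbb E[W_{t+1}\mid\mathcal F_t]\le(1+\varepsilon)(1-\eta)^2W_t+(1+\varepsilon^{-1})(1-\eta)^2\,d_\Tcal(r_t,r_{t+1})+\sigma^2 .
\]
Choose $\varepsilon$ small enough that $(1+\varepsilon)(1-\eta)^2<1$; for example, $\varepsilon=\eta/(1-\eta)$ when $\eta<1$ yields the factor $1-\eta$. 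The result is the template drift with
\[
\alpha=\eta,\qquad \delta=\sigma^2,\qquad \beta=(1+\varepsilon^{-1})(1-\eta)^2\le 1/\eta .
\]
This proves item~1 and gives item~2 with $d_\Tcal$ equal to the squared Euclidean anchor shift. For this step $d_\Tcal$ must be $\mathcal F_t$-measurable, which holds if the next regime is chosen predictably, for instance by a deterministic schedule.

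\textbf{Main obstacle.} The Young-splitting step is the delicate one. A naive expansion leaves a cross term linear in $\|\mu(r_t)-\mu(r_{t+1})\|$ rather than quadratic. Absorbing it needs the $\varepsilon$-split, and $\varepsilon$ must be calibrated so that strict contraction survives, that is, $\alpha>0$ is preserved.

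\textbf{Closing checks.} Verify non-vacuity by exhibiting one explicit sequence. Use two regimes $r_0,r_1$ with $\mu(r_0)=0$ and $\mu(r_1)=v$, with a single switch. Then $d_\Tcal$ is finite and eventually zero, so regime~2 of Theorem~\ref{thm:protected-stability} applies. Finally, confirm that $\Gamma=1$ is consistent with the protected-core conditions (semantic relevance and non-vacuity). One way is to declare admissible only those switches with $\|v\|$ below the certified margin, so that some transitions are rejected.
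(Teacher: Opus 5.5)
Your construction is correct and follows the paper's route. You take $W_t=\|s_t-\mu(r_t)\|^2$, use a strictly contractive step toward the anchor, and apply the Young split $\|a+b\|^2\le(1+\varepsilon)\|a\|^2+(1+\varepsilon^{-1})\|b\|^2$, with $\varepsilon$ chosen so that $(1+\varepsilon)$ times the contraction factor stays below $1$; this makes $\beta\,d_{\Tcal}(r,r')$ a multiple of $\|\mu(r)-\mu(r')\|^2$. The differences are only in instantiation: the paper treats an abstract update with $\|f(s,r)-\mu(r)\|^2\le(1-\alpha)\|s-\mu(r)\|^2$ toward the source anchor, with $\delta=0$, and also checks the bounded-capacity requirement via covering numbers on a bounded convex domain, which your choice $\State_r=\Real^d$ does not address; your noisy gradient step toward the incoming anchor instead gives explicit constants $\alpha=\eta$, $\delta=\sigma^2$, $\beta=(1-\eta)^2/\eta$ and exercises the template's filtration and measurability hypotheses.
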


\begin{proof}
The proof is by instantiation within a controlled subclass. Consider a setting where the protected core $\Phi$ maps to a convex parameter domain $\mathcal S \subseteq \mathbb R^d$. For each regime $r$, let $\mu(r)\in\mathcal S$ be a stationary semantic anchor. We define the discrepancy as the squared distance to this anchor:
\[
W_t(r)=\|s_t-\mu(r)\|^2.
\]

\emph{Local Drift.}
Let the local update $s_{t+1}=f(s_t,r)$ be strictly contractive toward $\mu(r)$, such that
\[
\|f(s_t,r)-\mu(r)\|^2 \le (1-\alpha)\|s_t-\mu(r)\|^2
\]
for some $\alpha\in(0,1]$. This satisfies the drift condition of the template with $\delta=0$.

\emph{Transport Overhead.}
Upon a regime transition $\tau:r\to r'$, the discrepancy relative to the new anchor is
\[
W_{t+1}(r')=\|s_{t+1}-\mu(r')\|^2.
\]
By the triangle inequality and Young's inequality, for any $\epsilon>0$,
\[
\|s_{t+1}-\mu(r')\|^2
\le
(1+\epsilon)\|s_{t+1}-\mu(r)\|^2
+
(1+1/\epsilon)\|\mu(r)-\mu(r')\|^2.
\]
Since $\alpha\in(0,1]$, one has $\alpha/(1-\alpha)>0$, so any $\epsilon\in(0,\,\alpha/(1-\alpha))$ satisfies $(1+\epsilon)(1-\alpha)<1$. Fixing any such $\epsilon$ and instantiating $\beta d_\Tcal(r,r')$ as the shift-dependent term $(1+1/\epsilon)\|\mu(r)-\mu(r')\|^2$, the construction recovers the structural form required by the transport bound in Theorem~\ref{thm:protected-stability}. This explicit case demonstrates that the template is consistent with standard quadratic Lyapunov analyses in convex optimization.

\emph{Capacity control.}
The convex subclass used here satisfies the bounded-capacity standing requirement of Section~\ref{sec:requirements} with $L_\tau=1$ and $\mathcal C_{\mathrm{new}}(r\to r')=0$. The state transport $\tau_S$ is a gradient step on a fixed parameter domain $\mathcal S\subseteq\mathbb R^d$, so the covering numbers satisfy
\[
\log\mathcal N(\mathcal S,\epsilon,\|\cdot\|)\le d\log\!\left(1+\frac{2\,\mathrm{diam}(\mathcal S)}{\epsilon}\right)
\]
independently of the regime transition. No genuinely new independent degrees of freedom are introduced by the anchor shift; only the reference point $\mu(r)$ changes. The innovation term $\mathcal C_{\mathrm{new}}(r\to r')$ is therefore zero in this subclass, and the capacity-admissibility condition of Section~\ref{sec:requirements} is satisfied by construction. This confirms that the witness is compatible with all four standing structural requirements simultaneously.
\end{proof}

\paragraph{Status of the witness.}
This concrete two-regime witness is not presented as a canonical algorithmic solution. Its role is narrower and more important for the present manuscript: it is a minimal end-to-end theorem-layer realization showing that admissibility certificates, regime-variation costs, memory updates, and protected-level stability can already be made to interact within one explicit typed instance of GML. In particular, it shows how a regime-specific contractive update on the protected quotient and a bounded memory defect feed directly into the drift control of Theorem~\ref{thm:protected-stability}. A concrete end-to-end realization of this construction~--- in which $\Phi$, $\Gamma$, and $d_{\Tcal}$ are made fully explicit on a two-regime linear regression instance~--- is worked out in Section~\ref{subsec:toy}.

\paragraph{Analytical reading of the witness.}
The witness acquires a concrete analytical interpretation in this subclass. The contraction parameter $\alpha\in(0,1]$ is the local contraction rate toward the regime-local anchor $\mu(r)$; the splitting constant $\epsilon\in(0,\alpha/(1-\alpha))$ balances source-drift and transport overhead; and
\[
\beta\,d_{\Tcal}(r,r')=(1+1/\epsilon)\|\mu(r)-\mu(r')\|^2
\]
is the transport overhead controlled by the squared anchor displacement. The abstract parameters of Theorem~\ref{thm:protected-stability} are therefore not purely formal: they carry concrete geometric meanings in the convex quadratic subclass, and $\epsilon$ exists whenever $\alpha>0$. In richer instantiations, $d_{\Tcal}$ may absorb evaluator-transport or memory-degradation terms; the present paper leaves those components to future subclasses.

\begin{definition}[Persistence of guarantee under stability]
A learning guarantee is \emph{stably persistent} along an admissible GML trajectory if:
\begin{enumerate}[leftmargin=*,itemsep=2pt]
    \item the guarantee is formulated at the protected-core level;
    \item each transition in the trajectory is admissible;
    \item the corresponding protected-level discrepancy remains controlled in the sense of Theorem~\ref{thm:protected-stability}.
\end{enumerate}
\end{definition}

\begin{proposition}[Persistence of Stability under Admissible Transport]
\label{prop:stable-persistence}
A protected-level stability guarantee, formulated as a contractive bound on the discrepancy functional $W_t$ in regime $r$, persists under admissible transport. Specifically, the convergence-control properties of the learning process do not collapse under transport; they persist in the target regime $r'$ in a transport-adjusted form, with a bounded residual $\mathcal{R}(\tau)=O(d_\Tcal(r,r'))$ controlled by the admissible transport cost.
\end{proposition}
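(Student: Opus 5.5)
The plan is to reduce the statement to a single application of the drift recursion in Theorem~\ref{thm:protected-stability}, specialized to one admissible transition $\tau:r\rightsquigarrow r'$. The stability guarantee in the source regime is read as the local contraction $\mathbb E[W_{t+1}\mid\mathcal F_t]\le(1-\alpha)W_t+\delta$ for the discrepancy $W(r,s_t,m_t;\Phi(V_r))$. The first task is to make sense of the same discrepancy after transport, evaluated against $\Phi(V_{r'})$. Here admissibility, $\Gamma(r,r',\tau;V_r,V_{r'})=1$, is the key input. It gives a well-typed transport $(\tau_S,\tau_M)$ of the state-memory pair, and it guarantees $V_r\sim_\Phi V_{r'}$. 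The target-regime functional therefore measures discrepancy with respect to a protected core equivalent to the source one. This means we compare the same guarantee, not a new one. This step uses only the Level~II closure consequences: typed transport and persistence of the protected core.

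The core step is a transport inequality of the form
\[
W(r',\tau_S s,\tau_M m;\Phi(V_{r'}))\le(1+\epsilon)\,W(r,s,m;\Phi(V_r))+c_\epsilon\,d_{\Tcal}(r,r')
\]
for some $\epsilon>0$ with $(1+\epsilon)(1-\alpha)<1$. This is exactly the structure proved in Proposition~\ref{prop:two-regime-witness}, where Young's inequality gives $c_\epsilon=1+1/\epsilon$ and the anchor shift controls $d_{\Tcal}$. In the general theorem layer I would not re-derive it. Instead I would take it as the content of the certified cost: the standing requirement that finite $d_{\Tcal}$ presupposes admissibility, together with dynamical stability, is read as saying that $d_{\Tcal}$ bounds the protected-level defect introduced by transport. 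Composing this inequality with the source contraction produces the drift condition of Theorem~\ref{thm:protected-stability} in the target regime. The contraction rate becomes $\alpha'=1-(1+\epsilon)(1-\alpha)>0$, the noise becomes $(1+\epsilon)\delta$, and there is an extra additive term $\mathcal R(\tau)=c_\epsilon\,d_{\Tcal}(r,r')$. This is $O(d_{\Tcal}(r,r'))$ with a constant depending only on $\alpha$ and $\epsilon$.

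Finally, I invoke the finite-horizon bound of Theorem~\ref{thm:protected-stability} from the transport time onward. The target-regime trajectory then obeys the same geometric contraction, up to the residual $\mathcal R(\tau)$ entering once with geometric weight. This gives the ``transport-adjusted'' persistence: ultimate boundedness by $((1+\epsilon)\delta+\mathcal R(\tau))/\alpha'$, and recovery to the noise floor as $d_{\Tcal}\to0$.

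The main obstacle is the transport inequality itself. In the abstract layer, neither $W$ nor $d_{\Tcal}$ is specified, so nothing forces the discrepancy to be Lipschitz-like under $\tau$. I would first try to secure it by declaring it part of the certified meaning of $d_{\Tcal}$, mirroring the witness. The proposition then holds exactly on the theorem-layer subclass where the cost is calibrated to the discrepancy, and Proposition~\ref{prop:two-regime-witness} shows that this subclass is nonempty.
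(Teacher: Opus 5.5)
Your proof is correct on the subclass you delimit, and its skeleton matches the paper's. Admissibility gives persistence of the protected core (the immediate consequence the paper cites via Observation~\ref{obs:coherence}), so the target-regime discrepancy still measures the same guarantee. The transport cost then enters the recursion of Theorem~\ref{thm:protected-stability} additively.

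The genuine difference is where the cross-transition inequality comes from.
\begin{itemize}
\item The paper does not derive it. It reads the drift hypothesis of Theorem~\ref{thm:protected-stability}, which already carries $\beta\,d_{\Tcal}(r_t,r_{t+1})$ at every step, as holding across the transition. It then concludes that a finite $d_{\Tcal}$ only shifts the asymptotic floor, with $\alpha$ and $\delta$ unchanged. This route is short and keeps the constants, but the conclusion is essentially built into the template's hypothesis.
\item You isolate a separate transport inequality $W'\le(1+\epsilon)W+c_\epsilon d_{\Tcal}$, taken from the Young splitting in Proposition~\ref{prop:two-regime-witness}, and compose it with the source contraction. This names the hypothesis that does the work (a calibration of $d_{\Tcal}$ against $W$). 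It also delimits the subclass where the proposition holds, shows that subclass is nonempty, and makes the real price visible.
\end{itemize}
In the quadratic case the cross term $2\langle s-\mu(r),\mu(r)-\mu(r')\rangle$ can be of order $\sqrt{W\,d_{\Tcal}}$. So a purely additive $O(d_{\Tcal})$ residual with unchanged $\alpha$ is not available uniformly in $W$. The ``transport-adjusted form'' must therefore also include your degraded rate $\alpha'=1-(1+\epsilon)(1-\alpha)$ and inflated noise $(1+\epsilon)\delta$. This is exactly what the paper's own witness produces.

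Two points to tighten:
\begin{itemize}
\item Your last step needs the local contraction to hold inside $r'$ as well. The statement only posits it in $r$; the paper gets it from the all-$t$ drift hypothesis of the template, so you should assume it explicitly.
\item For a single transport the residual enters once and decays geometrically, so the long-run bound reverts to $\delta/\alpha$. The floor $((1+\epsilon)\delta+\mathcal R(\tau))/\alpha'$ is the right statement only when transports recur with costs bounded by $d_{\Tcal}(r,r')$.
\end{itemize}
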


\begin{proof}
The proof follows from the structural invariance of the protected core combined with the additive role of the transport term in the stability template.

By the persistence of the protected core (Observation~\ref{obs:coherence}, among the immediate structural consequences of admissibility), an admissible transition ($\Gamma=1$) preserves the semantic commitments that define the protected core. Since the stability guarantee is formulated relative to the discrepancy $W_t$, and $W_t$ is anchored in this invariant core, the guarantee remains semantically meaningful and controlled in the target regime $r'$.

The transition $\tau$ introduces a structural shift which, according to the Protected Stability Template (Theorem~\ref{thm:protected-stability}), manifests as an additive transport term $\beta d_\Tcal(r,r')$. Because $d_\Tcal$ is finite for any admissible transition, this term induces only a bounded shift in the asymptotic discrepancy floor, without collapsing the protected-level stability control. Consequently, admissible transport preserves the stability bound in a transport-adjusted form, with degradation bounded by the regime-shift residual. This shows that protected-level stability control persists along admissible GML trajectories.
\end{proof}

\paragraph{Concrete significance for learning theory.}
This proposition clarifies the distinction between ordinary adaptation and semantically disciplined learning evolution. A learner may continue to perform well numerically while losing the identity of the protected guarantee that justified its earlier performance. GML blocks that ambiguity by tying guarantee persistence to admissibility and protected stability rather than to raw score preservation alone.

\paragraph{Memory, forgetting, and admissible retention.}
Memory is not an application-side feature: it enters the structural specification of the learning process itself so that forgetting and retention can be judged as admissible or inadmissible components of learning continuity. Memory may carry protected obligations from past regimes and therefore affect which future transitions are admissible.

\begin{definition}[Admissible retention constraint]
An \emph{admissible retention constraint} is a protected-core condition requiring that some certified competence, safety property, or comparison relation inherited from a previous regime remain above a specified protected threshold after regime transition.
\end{definition}

\begin{proposition}[Memory-mediated admissibility]
\label{prop:memory-admissibility}
There exist GML systems in which two candidate transitions with identical local proxy gain differ in admissibility because one preserves a memory-mediated retention constraint and the other violates it.
\end{proposition}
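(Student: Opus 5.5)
The plan is to prove this existence statement by exhibiting one explicit witness system. It will be a two-regime GML system in which the protected core of the target regime contains an admissible retention constraint inherited from the source regime and carried by memory. I will then display two candidate transitions that agree on local proxy gain but differ in $\Gamma$.

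\emph{Construction.} Take $\Rset=\{r_0,r_1\}$ and a common state space $\State_{r_0}=\State_{r_1}=\Real^2$ with coordinates $s=(s^{(1)},s^{(2)})$. Regime $r_0$ certifies a competence measured by $c(s)=-|s^{(1)}-a|$ for a fixed $a\in\Real$. The memory carrier $\M_{r_1}$ stores the certified value $m=c(s_0)$ together with a tolerance $\eta\ge 0$. The evaluator $V_{r_1}$ has two components:
\begin{itemize}
\item a local proxy $U_{r_1}(s)=-|s^{(2)}-b|$, which depends only on the second coordinate;
\item a scalar protected core $\Phi(V_{r_1})$, namely the retention predicate $c(s)\ge m-\eta$.
\end{itemize}
Following the Definition of the admissibility certificate, set $\Gamma(r_0,r_1,\tau;V_{r_0},V_{r_1})=1$ exactly when typing, realizability and comparison hold and the transported state satisfies the retention predicate; otherwise $\Gamma=0$. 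Typing and realizability are trivial because all carriers coincide and $\tau_S$ is applied to them directly. Comparison at the protected level is well defined because the core is a scalar threshold.

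\emph{The two transitions.} Start from $s_0=(a,0)$, so that $m=0$, and take $b\neq 0$ and $\eta$ small. Define $\tau$ by $\tau_S(s)=(s^{(1)},b)$ and $\tau'$ by $\tau'_S(s)=(s^{(1)}+2\eta+1,b)$, with all other transport maps equal to the identity.

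Both transitions raise the proxy from $-|b|$ to $0$, so their local proxy gains are identical. Under $\tau$ the competence stays at $0\ge -\eta$, so the retention constraint holds and $\Gamma=1$. Under $\tau'$ the competence becomes $-(2\eta+1)<-\eta$, so the constraint fails and $\Gamma=0$. This failure is exactly the "persistence of the protected core" consequence recorded before Observation~\ref{obs:coherence}.

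\emph{Remaining checks.} I will verify that the witness is non-degenerate in three respects:
\begin{itemize}
\item the retention constraint is a legitimate scalar protected core in the sense of the four minimal conditions on $\Phi$;
\item the core is non-vacuous, since $\tau'$ witnesses that not every transition is protected;
\item admissibility genuinely depends on memory, because the same $\tau'$ would be admissible if the stored value $m$ were lowered below $-(2\eta+1)+\eta$.
\end{itemize}
The last point is what makes the admissibility memory-mediated rather than a static constraint. It echoes condition (1) of Proposition~\ref{prop:mitchell-obstruction}.

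I expect the main obstacle to be this non-degeneracy argument, not the arithmetic. The difficulty is arguing that the admissibility difference is attributable to memory and not merely to a fixed constraint on states. I would handle it by stating the counterfactual with a modified memory value explicitly. Optionally, I would also state the $\Gamma$-bit as the deterministic limit of the PAC relaxation.
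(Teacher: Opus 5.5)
Your proposal is correct and follows the paper's route: a continual learner whose protected core is a memory-inherited retained-competence floor, with two updates of equal new-regime proxy gain, one respecting the floor (so the admissibility clauses hold) and one violating it (so $\Gamma=0$). The paper's proof only posits such a pair of updates, whereas your decoupled-coordinate construction in $\Real^2$ actually exhibits them, and your counterfactual on the stored value $m$ makes the memory dependence explicit.
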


\begin{proof}
Consider a continual learner whose protected core contains a minimum retained-competence floor inherited from earlier regimes. Let two candidate updates produce the same value of the new-regime proxy score. If one update preserves the retained-competence floor and the other violates it, then the first update satisfies the admissibility clauses while the second fails them. Hence admissibility distinguishes the two transitions even though local proxy gain does not.
\end{proof}

\paragraph{Interpretive consequence.}
This is one of the simplest points at which the classical fixed-task ontology becomes too coarse. A theory that records only local gain cannot distinguish between legitimate adaptation and semantically destructive adaptation when retained competence is part of the learning problem itself.

\begin{proposition}[\footnotesize Structural necessity of internalized protected continuity, admissibility, and stability]
\label{prop:necessity-internalized}
Consider a regime-varying learning process for which the following are required:
\begin{enumerate}[leftmargin=*,itemsep=2pt]
    \item the learning problem remain identifiable across regime change at a protected evaluative level;
    \item cross-regime guarantees remain meaningful rather than merely local;
    \item semantically legitimate adaptation be distinguishable from semantically destructive adaptation.
\end{enumerate}
Then any faithful learning-theoretic representation of that process must internalize:
\begin{enumerate}[leftmargin=*,itemsep=2pt]
    \item a protected evaluative core;
    \item an admissibility criterion on regime transport;
    \item a stability notion controlling protected-level discrepancy.
\end{enumerate}
In particular, none of these may remain a merely external side condition without losing at least one of the three requirements above.
\end{proposition}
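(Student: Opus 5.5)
The argument proceeds by contraposition, one component at a time. Fix a faithful representation of a regime-varying process satisfying requirements (1)--(3). For each of the three structural objects (protected core, admissibility criterion, stability notion), we suppose it is absent, or present only as an external side condition not carried through the transport grammar. We then exhibit which requirement fails. The three arguments are largely independent, and each reuses a result already in this section, so the plan is essentially a pairing of components with earlier results.

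\emph{Protected core.} Suppose no component $\Phi(V_r)$ is singled out as invariant up to $\sim_\Phi$. Then the only available identity criterion across regimes is either literal equality of evaluators, which fails under any nontrivial regime change, or a universal scalar representative $P$. The first loses requirement (1). For the second, we invoke condition (2) of Proposition~\ref{prop:mitchell-obstruction}: in the quotient-restricted case, a universal $P$ creates comparisons that are undefined at the protected level, so cross-regime guarantees lose their meaning and requirement (2) fails. The only case where this does not happen is the one covered by Proposition~\ref{prop:mitchell-reducibility}. That case is excluded for a genuinely regime-varying process, since its hypotheses effectively posit a single protected class.

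\emph{Admissibility criterion.} Suppose admissibility is external, or replaced by a compensable penalty. Proposition~\ref{prop:memory-admissibility} supplies two transitions with identical local proxy gain, one of which violates a memory-mediated retention constraint. Without an internal gate $\Gamma$, the representation assigns both transitions the same status, so it cannot separate legitimate from destructive adaptation, and requirement (3) fails. The scalarized witness $P(s)=U(s)-\lambda\mathbf 1_{\{\cdot\}}$ from the proof of Proposition~\ref{prop:mitchell-obstruction} covers the soft-penalty variant.

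\emph{Stability notion.} Suppose there is a protected core and a gate but no control of protected-level discrepancy. Each individual transition is then certified, but a chain of admissible transitions with nonvanishing cost can let $W_t$ drift without bound. We would make this concrete by taking the anchor construction of Proposition~\ref{prop:two-regime-witness} with $\alpha=0$ and anchors $\mu(r_t)$ moving linearly. In that example every step is admissible, yet the guarantee attached to the initial core does not persist in the sense of Proposition~\ref{prop:stable-persistence}, so requirement (2) fails over long horizons.

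The main obstacle is making ``faithful'' and ``merely external side condition'' precise enough that the contrapositions are genuine rather than definitional. The plan is to define a representation as faithful when it preserves the admissibility structure, in the sense used in the proof of Proposition~\ref{prop:mitchell-obstruction}, and to call a side condition external when it is not preserved under the transport maps $\tau$. With these definitions, each step above reduces to exhibiting a transport under which the externally stated condition is not carried along. The third step is the most delicate: admissibility alone must be shown not to imply discrepancy control, and this is exactly what the drifting-anchor counterexample has to establish.
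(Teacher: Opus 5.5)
\textbf{Where your route matches the paper, and where it breaks.} Your three-way contraposition is exactly the paper's decomposition. You pair the core with requirement~(1), admissibility with requirement~(3) via Proposition~\ref{prop:memory-admissibility}, and stability with requirement~(2) via Theorem~\ref{thm:protected-stability} and Proposition~\ref{prop:stable-persistence}. The paper's proof is just those three observations, one sentence each. The gap is in the stability step, where you replace that one-line argument with a drifting-anchor counterexample.

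\emph{The example cannot run in the subclass you invoke.} In Proposition~\ref{prop:two-regime-witness} the anchors satisfy $\mu(r)\in\mathcal S$, and the capacity clause bounds covering numbers by $d\log(1+2\,\mathrm{diam}(\mathcal S)/\epsilon)$, which needs $\mathrm{diam}(\mathcal S)<\infty$. So anchors cannot move linearly forever, and $W_t=\|s_t-\mu(r_t)\|^2\le\mathrm{diam}(\mathcal S)^2$ stays bounded. Unbounded drift requires leaving the bounded-capacity witness.

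\emph{The inference to failure of requirement~(2) does not go through, even granting the drift.} If every step has $\Gamma=1$, the protected core is preserved at every step by definition. Any guarantee stated purely on the core therefore does persist; this is the ``local guarantee persistence'' consequence of admissibility. What fails is only a contractive bound on $W_t$. With $\alpha=0$ no such bound held even in the source regime, so Proposition~\ref{prop:stable-persistence} has nothing to transport. The example shows only that without a stability notion there is no stability bound, which is tautological rather than a failure of~(2).

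\textbf{The missing bridge.} The paper takes as given that ``meaningful rather than merely local'' means \emph{carried beyond a single step by controlled protected discrepancy}. In the theory, the only mechanisms that do this are Theorem~\ref{thm:protected-stability}, Proposition~\ref{prop:stable-persistence}, and hypothesis~4 of Theorem~\ref{thm:composition}, on which Corollary~\ref{cor:finite-chain-transport} explicitly depends. Under that reading the step closes without any counterexample: admissibility alone secures preservation only step by step, so guarantees become merely local.

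Your instinct to check that admissibility does not already imply discrepancy control is sound. But an example of that kind (e.g.\ $\alpha=0$ with anchors oscillating inside a bounded $\mathcal S$, so $\mathbb E[W_n]$ never recovers) only shows that stability is not redundant. It is not a proof that~(2) fails.

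\textbf{Two smaller points in your first step.}
\begin{enumerate}
\item The dichotomy ``literal equality or a universal scalar $P$'' is not exhaustive. Any coarser cross-regime identity criterion is a protected core under another name, which is really the paper's argument via Observation~\ref{obs:coherence}.
\item The hypothesis ``regime-varying'' does not let you exclude the reducible case of Proposition~\ref{prop:mitchell-reducibility}. You do not need to exclude it, because there the fixed $P$ is itself the internalized core, as in Assumption~\ref{ass:classical-degeneration}.
\end{enumerate}
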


\begin{proof}
If no protected evaluative core is internalized, then there is no structure left to identify the same learning problem across regimes at the level required by the abstract coherence observation (Observation~\ref{obs:coherence}); protected continuity becomes undefined rather than preserved. If admissibility is not internalized, Proposition~\ref{prop:memory-admissibility} yields candidate transitions with identical local proxy gain but different semantic status, so the theory can no longer distinguish legitimate adaptation from semantically destructive adaptation. If stability is not internalized, then Theorem~\ref{thm:protected-stability} and Proposition~\ref{prop:stable-persistence} no longer provide any controlled route along which guarantees remain meaningful beyond a local step. Therefore any faithful representation satisfying the three requirements must internalize protected continuity, admissibility, and stability as parts of the learning object itself.
\end{proof}

\paragraph{Status of the necessity claim.}
The necessity established in Proposition~\ref{prop:necessity-internalized} is not intended as a universal meta-theorem about every conceivable theory of learning. It is internal to the present requirements of well-posed learning under admissible regime variation: once protected problem identity, cross-regime guarantee persistence, and the distinction between legitimate and semantically destructive adaptation are all demanded, protected continuity, admissibility, and stability can no longer remain external side conditions.

\subsection{Morphisms, Representation, Composition, and Invariance}
\label{sec:representation}

\subsubsection{Morphisms between GML systems}
The morphism results are included for more than abstract elegance: they show that the framework supports disciplined comparison, factorization, and transfer between learning systems rather than mere descriptive variability.

\begin{definition}[GML morphism]
Let
\[
\mathfrak G=(\Rset,\{\State_r\},\{\M_r\},\{\Obs_r\},\{\Act_r\},\{V_r\},\Tcal,\Phi,\Gamma,\sim_\Phi,\Upd)
\]
and
\[
\mathfrak G'=(\Rset',\{\State'_{r'}\},\{\M'_{r'}\},\{\Obs'_{r'}\},\{\Act'_{r'}\},\{V'_{r'}\},\Tcal',\Phi',\Gamma',\sim_{\Phi'},\Upd')
\]
be two canonical GML systems. A \emph{GML morphism}
\[
F:\mathfrak G\to \mathfrak G'
\]
consists of:
\begin{enumerate}[leftmargin=*,itemsep=2pt]
    \item a regime map
    \[
    F_R:\Rset\to\Rset';
    \]
    \item for each regime $r\in\Rset$, typed component maps
    \[
    F_S^r:\State_r\to \State'_{F_R(r)},\qquad
    F_M^r:\M_r\to \M'_{F_R(r)},\qquad
    F_O^r:\Obs_r\to \Obs'_{F_R(r)},\qquad
    F_A^r:\Act_r\to \Act'_{F_R(r)};
    \]
    \item an evaluator transport rule sending each $V_r$ to an evaluator compatible with $V'_{F_R(r)}$;
    \item a transformation map sending each admissible arrow $\tau:r\rightsquigarrow r'$ in $\Tcal$ to an admissible arrow
    \[
    F_T(\tau):F_R(r)\rightsquigarrow F_R(r')
    \]
    in $\Tcal'$;
    \item preservation of the protected core and admissibility discipline, meaning:
    \begin{enumerate}[leftmargin=*,itemsep=2pt]
        \item if $\Phi(V_r)\sim_\Phi \Phi(V_{r'})$, then
        \[
        \Phi'(F(V_r))\sim_{\Phi'} \Phi'(F(V_{r'}));
        \]
        \item if $\Gamma(r,r',\tau;V_r,V_{r'})=1$, then
        \[
        \Gamma'(F_R(r),F_R(r'),F_T(\tau);F(V_r),F(V_{r'}))=1.
        \]
    \end{enumerate}
\end{enumerate}
\end{definition}

\paragraph{What a GML morphism means.}
A morphism does not merely translate notation. It transports a learning system into another one while preserving the structural meaning of regime, memory, evaluator discipline, and admissibility. This is the correct notion for comparing instantiations of GML without collapsing them into raw implementation details.

\paragraph{Why morphisms are useful.}
Morphisms serve at least four purposes:
\begin{enumerate}[leftmargin=*,itemsep=2pt]
    \item \emph{reduction}: relating a richer GML system to a simpler or degenerate one;
    \item \emph{simulation}: showing that one regime structure is representable inside another;
    \item \emph{quotienting}: merging regimes while preserving the protected semantics;
    \item \emph{transfer of guarantees}: transporting admissibility and protected-core properties across implementations.
\end{enumerate}

\begin{definition}[Protected-faithful morphism]
A GML morphism is \emph{protected-faithful} if it reflects as well as preserves protected equivalence and admissibility, i.e., no protected distinction or admissibility distinction is collapsed by the map unless such collapse is explicitly certified by the target system.
\end{definition}

\begin{proposition}[Morphisms preserve admissible trajectories]
\label{prop:morphism-trajectories}
Let
\[
F:\mathfrak G\to\mathfrak G'
\]
be a GML morphism. Then the image of any admissible trajectory in $\mathfrak G$ is an admissible trajectory in $\mathfrak G'$.
\end{proposition}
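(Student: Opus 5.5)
The argument unfolds the definitions and works one transition at a time. An admissible trajectory in $\mathfrak G$ is a finite or infinite sequence $(r_t,s_t,m_t)_{t\ge 0}$ together with arrows $\tau_t:r_t\rightsquigarrow r_{t+1}$ in $\Tcal$ such that $\Gamma(r_t,r_{t+1},\tau_t;V_{r_t},V_{r_{t+1}})=1$ for every $t$. When the trajectory is read as a certified path, the consecutive segments must also satisfy the certified composition rule. I would define the image trajectory as
\[
\big(F_R(r_t),\,F_S^{r_t}(s_t),\,F_M^{r_t}(m_t)\big)_{t\ge 0},
\]
with connecting arrows $F_T(\tau_t):F_R(r_t)\rightsquigarrow F_R(r_{t+1})$ and evaluators $F(V_{r_t})$. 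The first step is to check well-typedness. This follows because each $F_S^{r_t}$, $F_M^{r_t}$ lands in the carriers of $F_R(r_t)$, and $F_T(\tau_t)$ is an arrow of $\Tcal'$ with source $F_R(r_t)$ and target $F_R(r_{t+1})$, so consecutive image arrows are composable in the quiver $\Tcal'$.

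The second step is local admissibility. Clause 5(b) of the definition of a GML morphism gives directly
\[
\Gamma'\big(F_R(r_t),F_R(r_{t+1}),F_T(\tau_t);F(V_{r_t}),F(V_{r_{t+1}})\big)=1
\]
for each $t$, since the source transition has $\Gamma=1$. Protected-core continuity along the image then follows from clause 5(a). Consecutive source evaluators are protected-equivalent by admissibility, clause 4 of the certificate. Hence their images are $\sim_{\Phi'}$-equivalent, and the intermediate protected classes in $\mathfrak G'$ match at every junction.

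The third step, which I expect to be the main obstacle, is the composition clause. It requires that the transport maps of the image path compose coherently, including $(F_T(\tau_2)\circ F_T(\tau_1))_Y = F_T(\tau_2)_Y\circ F_T(\tau_1)_Y$ where gauges are active. The morphism definition does not explicitly state that $F_T$ respects concatenation. I would handle this through clause 6 of the admissibility certificate. Each $F_T(\tau_t)$ is certified admissible in $\mathfrak G'$, so by that clause it is eligible for certified composition with any other admissible arrow. Matching protected classes were already secured in the second step. Coherence of active transports is then a property of the target arrows themselves, certified by $\Gamma'$, rather than something inherited from $F$. If this reading proves too weak, I would state explicitly, as a standing convention of the working layer, that $F_T$ is taken to preserve path concatenation on certified paths. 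With these three steps, every segment of the image is admissible and the concatenation is certified, so the image is an admissible trajectory in $\mathfrak G'$.
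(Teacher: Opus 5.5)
Your proposal is correct and follows essentially the same route as the paper, whose proof applies clause 5(b) of the morphism definition arrow by arrow and then appeals to the component maps preserving the state, memory, and interface transports. Your extra checks of typing, protected-class matching, and certified composition via clause 6 of the certificate simply unpack that argument more carefully; note only that protected-class matching at the junctions in $\mathfrak G'$ already follows from clause 4 of the certificate once $\Gamma'=1$ holds on each image arrow, so clause 5(a) is not needed.
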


\begin{proof}
The admissibility-preservation clause in the definition of GML morphism requires that if $\Gamma(r,r',\tau;V_r,V_{r'})=1$, then $\Gamma'(F_R(r),F_R(r'),F_T(\tau);F(V_r),F(V_{r'}))=1$. Applying this clause to each primitive arrow of an admissible source trajectory produces an admissible image arrow at every step. Since the component maps preserve the corresponding state, memory, and interface transports, the full image trajectory satisfies the admissibility discipline of the target system.
\end{proof}

\begin{proposition}[Morphisms preserve protected guarantees]
\label{prop:morphism-guarantees}
If a learning guarantee is formulated on the protected core in $\mathfrak G$ and transported through a protected-faithful morphism
\[
F:\mathfrak G\to\mathfrak G',
\]
then the corresponding guarantee remains meaningful in the target system.
\end{proposition}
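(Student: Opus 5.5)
The plan is to unfold ``remains meaningful'' into the data a protected-level guarantee actually depends on, and then check that a protected-faithful morphism carries each piece across. By the \emph{guarantee anchoring} condition on $\Phi$ and the Local guarantee persistence consequence of admissibility, a guarantee $G$ in $\mathfrak G$ is a predicate whose truth value depends only on three things: the protected classes $[\Phi(V_r)]_{\sim_\Phi}$ of the regimes it mentions, the admissible arrows linking them, and the protected-level comparison on the quotient. I will therefore define the transported guarantee $F_*G$ by substitution. Each protected class $[\Phi(V_r)]$ is replaced by $[\Phi'(F(V_r))]_{\sim_{\Phi'}}$, each arrow $\tau$ by $F_T(\tau)$, and each regime $r$ by $F_R(r)$.

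The first step is to show that $F_*G$ is well defined. Clause 5(a) of the GML morphism definition says that $\Phi(V_r)\sim_\Phi\Phi(V_{r'})$ implies $\Phi'(F(V_r))\sim_{\Phi'}\Phi'(F(V_{r'}))$. So the substitution respects the protected quotient, and $F$ induces a map $\bar F$ between protected quotients. The second step handles the admissibility side: any trajectory or arrow the guarantee quantifies over is sent to an admissible one by Proposition~\ref{prop:morphism-trajectories}. Hence $F_*G$ is evaluated on admissible trajectories of $\mathfrak G'$ and inherits the persistence of Observation~\ref{obs:coherence} there.

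The third step uses the extra strength of protected-faithfulness. Because $F$ also \emph{reflects} protected equivalence and admissibility, $\bar F$ is injective on protected classes, unless some collapse is explicitly certified by $\mathfrak G'$. It also neither creates nor destroys admissibility distinctions on the image. The consequence is that no two classes that $G$ distinguishes are merged in $\mathfrak G'$, and no inadmissible source transition is silently treated as admissible in the image. So $F_*G$ discriminates exactly the same protected situations as $G$, and its truth conditions on the image of $\mathfrak G$ match those of $G$. Where the target certifies a collapse, the guarantee is still meaningful by the Interpretation of protected equivalence, because $\mathfrak G'$ asserts the merged classes to be the same learning problem.

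I expect the main obstacle to be the comparison structure. A guarantee often involves an order or gauge on evaluator codomains, for example ``$\Phi$-level score stays above a threshold''. The morphism definition only asks that $F(V_r)$ be ``compatible with'' $V'_{F_R(r)}$; it does not state monotonicity of an induced map $\mathcal Y_r\to\mathcal Y'_{F_R(r)}$. My first attempt will be to read evaluator-compatibility in the working theorem layer, as the paper does for $\tau_Y$, as supplying a monotone transported gauge on the protected quotient, and to state this restriction explicitly. Under that reading the order-theoretic content of $G$ transfers through the gauge. The claim is then ``meaningfulness'' in the precise sense of well-definedness together with non-degeneracy of the transported predicate. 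It is not a claim that the guarantee's numerical value is preserved, which would need additional hypotheses.
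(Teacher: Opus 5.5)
Your proposal is correct and follows essentially the paper's route. The paper's proof simply notes that a protected-core guarantee lives on the $\sim_\Phi$-quotient together with the admissibility discipline, and that a protected-faithful morphism preserves this structure while collapsing distinctions only where the target certifies the collapse; your Steps~1--3 (induced quotient map via clause 5(a), admissible images via Proposition~\ref{prop:morphism-trajectories}, injectivity via reflection) unpack exactly that argument. The explicit monotone-gauge restriction for order-dependent guarantees is something the paper leaves implicit, and it is consistent with the paper's working-layer reading of evaluator-compatibility. Drop the side claim that truth conditions match on the image: a merely monotone gauge transfers order in one direction only, and the meaningfulness conclusion does not need it.
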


\begin{proof}
A protected-core guarantee is formulated on the quotient structure induced by protected equivalence together with the admissibility discipline governing legitimate transport. A protected-faithful morphism preserves that structure and does not collapse distinctions unless the collapse is itself certified in the target. Therefore the transported guarantee remains meaningful in the target system.
\end{proof}

\begin{proposition}[Mitchell-style learning as a degenerate morphic image]
\label{prop:mitchell-morphic-image}
Let $\mathfrak G$ be a GML system whose admissible regime graph collapses under a protected-faithful morphism to a single regime, whose admissibility certificate becomes trivial on the image, and whose protected core coincides with one fixed scalar performance criterion. Then the image system is a Mitchell-style learning system in the sense that its learning description is exhausted by one fixed experience/task/performance ontology. In this precise sense, Mitchell-style learning appears as a degenerate morphic image of GML rather than as an ontologically independent alternative.
\end{proposition}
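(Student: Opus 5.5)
The plan is to reduce the statement to Proposition~\ref{prop:reduction}: show that the image system $F(\mathfrak G)$ satisfies every clause of Assumption~\ref{ass:classical-degeneration}, and use Proposition~\ref{prop:morphism-trajectories} and Proposition~\ref{prop:morphism-guarantees} to make sure the learning content of $\mathfrak G$ actually lands in that image.

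\emph{First step: fix the image system.} Write $\mathfrak G'$ for the image. Set $\Rset'=\{r_0\}$, the single regime onto which $F_R$ collapses the admissible regime graph. Every admissible arrow $\tau$ of $\mathfrak G$ is sent by $F_T$ to an arrow $r_0\rightsquigarrow r_0$. Because the admissibility certificate is trivial on the image, these arrows carry no content beyond well-typedness. I will therefore identify them all with $\mathrm{id}_{r_0}$, so that $\Tcal'=\{\mathrm{id}_{r_0}\}$. This gives clauses~1 and~2 of Assumption~\ref{ass:classical-degeneration}.

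\emph{Second step: memory and evaluator clauses.} By hypothesis the protected core of the image is one fixed scalar criterion, which I call $P$. Then $V'_{r_0}$ may be taken equal to $P$ with $\Phi'(V'_{r_0})=V'_{r_0}$, giving clauses~4 and~5. For clause~3, memory is semantically inert on the image. A retained-competence obligation would act as a nontrivial gate on some transition, but $\Gamma'$ is trivial and $\Tcal'$ contains only the identity. So no admissibility-relevant obligation survives in $\M'_{r_0}$. Clause~6 follows by taking $T$ to be the single task family carried by $r_0$, with the observation and action interfaces $\Obs'_{r_0},\Act'_{r_0}$, and $E$ to be the experience stream induced by the image of the update family $\Upd$.

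\emph{Third step: nothing is lost in the passage.} Proposition~\ref{prop:morphism-trajectories} sends every admissible trajectory of $\mathfrak G$ to an admissible trajectory of $\mathfrak G'$. Because $F$ is protected-faithful, Proposition~\ref{prop:morphism-guarantees} shows that guarantees formulated on the protected core of $\mathfrak G$ remain meaningful in $\mathfrak G'$. In the image those guarantees are exactly statements about improvement of $P$. Proposition~\ref{prop:reduction} then says that learning in $\mathfrak G'$ is exactly Mitchell's criterion for $(E,T,P)$. Hence the image's learning description is exhausted by one fixed $(E,T,P)$ ontology.

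\emph{Expected obstacle.} The delicate point is the identification of collapsed arrows with the identity. Non-identity endo-arrows at $r_0$ could still carry nontrivial state or memory transports, and "trivial admissibility" does not by itself rule this out. My first approach is to take the quotient of $\Tcal'$ by the protected equivalence $\sim_{\Phi'}$. In that quotient the remaining transports act trivially on the protected core $P$, so hypotheses~1--2 of Proposition~\ref{prop:mitchell-reducibility} hold. Any residual state-level transport can then be absorbed into the experience description $E$, as in hypothesis~3 of that proposition.
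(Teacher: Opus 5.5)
Your argument holds at the paper's level of rigor, but it is organized differently from the paper's proof.

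\textbf{The paper's route.} The paper does not analyze the image of a given morphism. It builds the collapse itself: one node, every admissible arrow sent to $\mathrm{id}_{r_0}$, $\Gamma$ trivialized, and the core identified with the scalar criterion. Its one substantive step is checking that this collapse is protected-faithful: in a one-evaluator target, protected equivalence is literal identity, so there is nothing left to reflect. It then reads off the fixed $(E,T,P)$ form directly, without citing Proposition~\ref{prop:reduction}.

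\textbf{Your route.} You take protected-faithfulness from the hypothesis and put the work on the other side. You check Assumption~\ref{ass:classical-degeneration} clause by clause and feed it into Proposition~\ref{prop:reduction}. You then use Propositions~\ref{prop:morphism-trajectories} and~\ref{prop:morphism-guarantees} to ensure that the source's admissible trajectories and protected guarantees actually land in the image. This gives ``Mitchell-style'' a checkable meaning and substantiates the ``morphic image'' reading rather than asserting it.

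\textbf{The endo-arrow obstacle.} This is the price of your route, and the paper never meets it. There, $F_T(\tau)=\mathrm{id}_{r_0}$ is part of the construction. That choice is legitimate because, under a trivial $\Gamma'$ with single core $P$, both morphism clauses hold automatically. Any residual state-level motion is simply absorbed into $E$, as you suggest.

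\textbf{Your fallback.} The route via Proposition~\ref{prop:mitchell-reducibility} needs no quotient of $\Tcal'$. With one regime whose core is $P$:
\begin{itemize}
\item hypotheses 1, 2 and 5 hold automatically;
\item hypothesis 3 follows from the triviality of $\Gamma'$;
\item hypothesis 4 follows from $V'_{r_0}=P$.
\end{itemize}
Once you invoke it, the detour through Proposition~\ref{prop:reduction} becomes redundant.

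\textbf{A shared gloss.} Both you and the paper set the whole image evaluator equal to $P$, although the hypothesis only fixes its protected core. That choice is what actually delivers clauses 4 and 5.
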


\begin{proof}
Collapse the regime graph to one node, transport all admissible transitions to the identity of that node, trivialize $\Gamma$ on the image, and identify the protected core with the fixed scalar performance criterion preserved by the morphism. The resulting collapse is protected-faithful relative to the trivial equivalence class of the target image: because the Mitchell-style target carries one fixed scalar evaluator, protected equivalence there coincides with strict identity of that evaluator and no further protected distinctions remain to be preserved. What remains is exactly a learning description in which one stable experience/task/performance semantics suffices, i.e. the Mitchell-style case. The construction does not refute Mitchell's ontology; it shows that fixed-ontology learning is recovered as a morphically degenerate image of the broader GML object.
\end{proof}

\subsubsection{Protected-core factorization and representation}

\begin{proposition}[Protected-core factorization of cross-regime evaluation]
\label{prop:protected-factorization}
Suppose each regime-local evaluator admits a decomposition
\[
V_r = G_r(U_r,C_r),
\]
where:
\begin{enumerate}[leftmargin=*,itemsep=2pt]
    \item $U_r$ is a regime-local utility component taking values either in a scalar codomain or in a partially ordered vector space equipped with a specified order, for example a Pareto order on a common ordered codomain, so that vector-valued utilities are compared only relative to an explicitly specified partial order rather than a hidden total scalarization;
    \item $C_r$ is a protected-core component;
    \item for every admissible transformation, $C_r$ is preserved up to protected equivalence;
    \item $G_r$ is monotone in its utility coordinate once the protected-equivalence class of $C_r$ is fixed, where monotonicity is understood relative to the chosen scalar order or partial order on the utility space.
\end{enumerate}
Then cross-regime comparison descends to comparison of the utility coordinates on the quotient induced by protected equivalence.
\end{proposition}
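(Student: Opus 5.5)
The plan is to make precise what ``cross-regime comparison'' means in the ordered working theorem layer, and then show that it factors through the quotient map. Fix an admissible transformation $\tau:r\rightsquigarrow r'$ with $\Gamma(r,r',\tau;V_r,V_{r'})=1$. By evaluator-compatibility and clause (5) of the admissibility certificate, cross-regime comparison is only defined at the protected level. It is read through the transported gauge $\tau_Y$, restricted to pairs of evaluations whose protected components lie in the same $\sim_\Phi$-class. I would therefore define the comparison relation $\preceq_{r,r'}$ on pairs $(x,x')$ of state--memory--interface points. Its domain is the set of pairs with $C_r(x)\sim_\Phi C_{r'}(x')$, and on that domain it is given by $\tau_Y(V_r(x))\le V_{r'}(x')$ in the codomain order. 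The goal is to exhibit an order-preserving and order-reflecting relation on utilities, $\tau_U(U_r(x))\le_U U_{r'}(x')$, that induces the same relation on the quotient.

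The first step uses hypothesis (3): the protected core is preserved up to protected equivalence under admissible transport. Consequently the transported evaluation $\tau_Y(G_r(U_r,C_r))$ can be rewritten as $G_{r'}(\tau_U U_r,[C])$, where $[C]$ denotes the common protected class. For this I would make explicit the standing assumption that the gauge is compatible with the decomposition, namely that $\tau_Y$ acts coordinatewise on $(U,C)$, with its $C$-part acting only on classes. This compatibility is where the factorization is genuinely used. If it is not implied by evaluator-compatibility, I would add it as part of what the certificate supplies on this subclass.

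The second step fixes the class $[C]$. By hypothesis (4), the section $u\mapsto G_{r'}(u,[C])$ is monotone for the specified (scalar or Pareto) order. Monotonicity gives one direction: $u\le_U u'$ implies $G(u,[C])\le G(u',[C])$. Hence utility comparison determines evaluator comparison within each protected class, and the relation is well defined on quotient representatives because it depends on $C$ only through $[C]$.

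The main obstacle is the converse direction. Plain monotonicity does not give order reflection, since $G$ could be constant in $u$. ``Descends'' must therefore be interpreted, and I would handle this in one of two ways. The first option is to read the claim as saying that the evaluator comparison on each fibre is the pushforward of the utility order under a monotone map. In that case $\preceq_{r,r'}$ is exactly the image of utility comparison and carries no information beyond it, and that is the sense of factorization I would state. The second option, taken if a stronger statement is wanted, is to strengthen (4) to strict monotonicity or an order embedding on each fibre, which yields an equivalence. Finally, I would note that pairs in distinct protected classes are outside the domain of comparison by the quotient-restricted comparability discussed after Proposition~\ref{prop:mitchell-obstruction}. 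Comparison is therefore a relation on $\bigsqcup_{[C]}U$-fibres over the protected quotient, which is the claimed descent.
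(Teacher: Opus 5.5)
Your argument follows the paper's route. Hypothesis (3) puts $C_r$ and $C_{r'}$ in one $\sim_\Phi$-class, which freezes the protected part of the evaluator, and hypothesis (4) then lets the utility coordinate carry the comparison. The paper does this in three sentences and silently relies on the two points you make explicit:
\begin{itemize}
\item the transported comparison must respect the decompositions $G_r$ and $G_{r'}$ and depend on $C$ only through its class, which is all the paper's phrase ``the part of the evaluator depending on $C_r$ is fixed at the semantic level relevant for comparison'' can mean;
\item monotonicity gives only the forward implication, so the paper's ``governed by the utility coordinate'' does not establish reflection.
\end{itemize}
Your proposal is therefore a more careful rendering of the same proof, not a different one.

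One claim in your first reading should be corrected. Under plain monotonicity, $\preceq_{r,r'}$ is \emph{not} in general ``exactly the image of utility comparison'' once the utility order is genuinely partial, which the proposition explicitly allows. Take the Pareto order on $\mathbb R^2$ and $G(u,[C])=\mathbf 1_{\{u_1>0\}}+2\,\mathbf 1_{\{u_2>0\}}\in\mathbb R$. This $G$ is monotone, yet $G((1,0),[C])=1<2=G((0,1),[C])$ even though $(1,0)$ and $(0,1)$ are incomparable. Moreover, no Pareto-comparable pair $v\le v'$ has $(G(v),G(v'))=(1,2)$: $G(v)=1$ forces $v_1>0$, while $G(v')=2$ forces $v'_1\le 0$. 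So the evaluator relation can strictly extend the utility relation, both on points and on values, and it does carry information beyond it.

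What monotonicity does give is weaker. On each protected fibre, the comparison factors through $(U,[C])$ and contains the transported utility order. You should state that, and it is also all the paper's proof actually establishes. Equality, i.e.\ descent as an equivalence, requires the order-embedding strengthening of your second option.
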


\begin{proof}
Along any admissible chain, the protected-core components $C_r$ and $C_{r'}$ belong to one protected-equivalence class by assumption. Consequently the part of the evaluator depending on $C_r$ is fixed at the semantic level relevant for comparison. Once that quotient class is held constant, the evaluator ordering is governed by the utility coordinate through the monotonicity of $G_r$ in its first argument. Hence protected-level cross-regime comparison descends to comparison of the utility components on the quotient.
\end{proof}

\paragraph{Structural dividend of the factorization.}
This proposition explains why protected equivalence is not merely a philosophical convenience. It gives a concrete mathematical dividend: evaluator plurality does not destroy comparability provided the protected component is stabilized and comparison is taken at the correct quotient level.
When the utility component is vector-valued, monotonicity is understood relative to a specified partial order on the utility codomain, for instance a Pareto-type order on a common ordered vector space. The present factorization therefore does not assume that all meaningful utility comparisons are scalar or totally ordered.

\begin{proposition}[Invariance of protected semantics under protected-faithful morphisms]
\label{prop:morphism-invariance}
Let
\[
F:\mathfrak G\to\mathfrak G'
\]
be a protected-faithful morphism. Then:
\begin{enumerate}[leftmargin=*,itemsep=2pt]
    \item protected equivalence classes are preserved and reflected by $F$ up to explicit certification in the target;
    \item admissible transition structure is preserved by $F$;
    \item any protected-core guarantee transportable in $\mathfrak G$ remains transportable in $\mathfrak G'$.
\end{enumerate}
\end{proposition}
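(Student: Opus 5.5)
The plan is to prove the three clauses separately, each by unwinding the definition of a protected-faithful morphism and then invoking the earlier morphism results. The argument is structural rather than computational. Every clause should follow from the two preservation conditions in item~5 of the definition of GML morphism, together with the extra reflection condition that defines protected-faithfulness.

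For clause~(1), take $r,r'\in\Rset$. Preservation is exactly condition~5(a): $\Phi(V_r)\sim_\Phi\Phi(V_{r'})$ implies $\Phi'(F(V_r))\sim_{\Phi'}\Phi'(F(V_{r'}))$. So $F$ sends each protected equivalence class into a single class of $\sim_{\Phi'}$.

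For reflection, argue by contraposition. Suppose $\Phi(V_r)\not\sim_\Phi\Phi(V_{r'})$ while the images are $\sim_{\Phi'}$-equivalent. Then $F$ has collapsed a protected distinction. Protected-faithfulness forbids this unless the collapse is explicitly certified in the target, which is exactly the qualifier ``up to explicit certification'' in the statement. Hence $F$ induces a well-defined map on protected quotients, injective modulo certified collapses. I would note that the standing regularity assumptions ($\sim_\Phi$ closed, $\Phi$ measurable) make this quotient map well posed on the theorem-layer subclass.

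Clause~(2) is condition~5(b) applied arrow by arrow. To extend it to paths, I would use Proposition~\ref{prop:morphism-trajectories}. The one point needing care is certified composition: the image of a composite $\tau_2\circ\tau_1$ must itself be certified. Conditions (i)--(iii) of the certified composition rule transfer as follows.
\begin{itemize}[leftmargin=*,itemsep=2pt]
    \item Condition~(i), admissibility of each segment, transfers by~5(b).
    \item Condition~(ii), matching intermediate protected classes, transfers by clause~(1).
    \item Condition~(iii), coherence of transports, transfers because the component maps $F_S^r,F_M^r$ commute with the transports; this is what the morphism definition implicitly requires.
\end{itemize}
If the definition does not literally impose $F_T(\tau_2\circ\tau_1)=F_T(\tau_2)\circ F_T(\tau_1)$, I would state it as part of reading $F_T$ as a map of typed paths.

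For clause~(3), a guarantee is transportable in $\mathfrak G$ when it is formulated on the protected quotient and persists along admissible arrows, as in the definition of stable persistence. Clauses~(1) and~(2) show that $F$ carries the protected quotient and the admissible transition structure into those of $\mathfrak G'$. Proposition~\ref{prop:morphism-guarantees} then says the transported guarantee remains meaningful in $\mathfrak G'$. It remains to show that it also remains transportable along image arrows. For that I would apply the local guarantee persistence consequence of admissibility, listed among the immediate structural consequences in Observation~\ref{obs:coherence}, in $\mathfrak G'$ to each image arrow.

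The main obstacle is clause~(3) when target arrows of $\mathfrak G'$ are not images of arrows of $\mathfrak G$. Transportability there cannot be inherited. I would therefore restrict the claim explicitly to transport along $F_T$-images of admissible arrows, which I believe is the intended reading.
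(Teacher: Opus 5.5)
Your proposal is correct and follows essentially the same route as the paper: the paper reads clauses (1) and (2) directly off the preservation-and-reflection requirements in the definition of a protected-faithful morphism, and it obtains (3) because guarantees travel exactly along the preserved equivalence classes and admissible arrows. Your additional care goes beyond the paper's sketch but is consistent with what its argument actually supports; this includes the functoriality of $F_T$ for certified composition, which you rightly flag as an added reading, and the restriction of clause (3) to $F_T$-images of admissible arrows.
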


\begin{proof}
Protected-faithfulness requires both preservation and reflection of the protected-equivalence and admissibility structure, except where explicit certification in the target authorizes a collapse. This yields the first two claims. The third then follows because protected-core guarantees are transported exactly through those two structures: if equivalence classes and admissible arrows are preserved, so is the semantic route along which guarantees travel.
\end{proof}

\subsubsection{Composition, synthesis, and sufficiency}

\begin{theorem}[Composition of admissible learning segments]
\label{thm:composition}
Let
\[
r_0 \xrightarrow{\tau_0} r_1 \xrightarrow{\tau_1} \cdots \xrightarrow{\tau_{n-1}} r_n
\]
be a composable chain of regime transitions such that:
\begin{enumerate}[leftmargin=*,itemsep=2pt]
    \item each $\tau_k$ is admissible;
    \item each local segment admits certified evaluative improvement;
    \item protected-core preservation is certified at each step;
    \item each segment is protected-level stable in the sense of Theorem~\ref{thm:protected-stability};
    \item the certification rules are closed under the required concatenations.
\end{enumerate}
Then the whole chain defines a well-formed global GML learning segment with persistent protected semantics and transportable guarantees.
\end{theorem}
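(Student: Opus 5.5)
The plan is an induction on the chain length $n$. The inductive step is the certified composition rule for two segments, and the stability clause is read off from the finite-horizon bound of Theorem~\ref{thm:protected-stability}. For $n=1$ the chain is a single admissible arrow $\tau_0$ with certified local improvement, protected-core preservation and protected-level stability. Hypotheses (1)--(4) then give exactly the clauses of the expanded membership criterion for $\GML$. The immediate structural consequences of admissibility (typed transport, persistence of the protected core, local guarantee persistence) together with Observation~\ref{obs:coherence} give well-formedness at the protected level.

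For the inductive step I will assume that $\sigma_{k}:=\tau_{k-1}\circ\cdots\circ\tau_0:r_0\rightsquigarrow r_k$ is a certified composite segment. I then check the three clauses of the certified composition rule for the pair $(\sigma_k,\tau_k)$.
\begin{enumerate}[leftmargin=*,itemsep=2pt]
\item \emph{Local admissibility.} Clause (i) holds by the induction hypothesis and hypothesis (1).
\item \emph{Matching protected commitments.} Clause (ii) follows from hypothesis (3): $\Phi(V_{r_0})\sim_\Phi\Phi(V_{r_k})$ by induction and $\Phi(V_{r_k})\sim_\Phi\Phi(V_{r_{k+1}})$ by certification of $\tau_k$, so transitivity of the closed equivalence $\sim_\Phi$ gives $\Phi(V_{r_0})\sim_\Phi\Phi(V_{r_{k+1}})$.
\item \emph{Coherent transports.} Clause (iii) is where hypothesis (5) enters. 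Closure of the certification rules under concatenation is what licenses $(\tau_k\circ\sigma_k)_Y=\tau_{Y,k}\circ(\sigma_k)_Y$ on the theorem-layer subclass. Because each $\tau_{Y,j}$ is monotone, the composite gauge is monotone, so the local improvements from hypothesis (2) chain into a comparison between $V_{r_0}$ and $V_{r_{k+1}}$. This comparison is well-defined on the protected quotient, by Proposition~\ref{prop:protected-factorization} whenever the factorized form is available.
\end{enumerate}
Hence $\sigma_{k+1}$ is certified, and $\Gamma=1$ on the composite.

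For persistent guarantees and stability, I concatenate the per-segment drift inequalities. Hypothesis (4) supplies each segment's constants $(\alpha_k,\delta_k,\beta_k)$. I set $\alpha=\min_k\alpha_k>0$, $\delta=\max_k\delta_k$ and $\beta=\max_k\beta_k$, so that the uniform drift condition holds along the whole chain. Theorem~\ref{thm:protected-stability} then gives
\[
\mathbb E[W_n]\le(1-\alpha)^n\mathbb E[W_0]+\delta/\alpha+\beta\sum_{k<n}(1-\alpha)^{n-1-k}d_{\Tcal}(r_k,r_{k+1}).
\]
This is finite because admissible transitions have finite cost. Proposition~\ref{prop:stable-persistence}, applied step by step, then shows that the guarantee persists with residual $O\big(\sum_k d_{\Tcal}(r_k,r_{k+1})\big)$. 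Since each guarantee is formulated on the protected core and that core's class is invariant along the chain, local guarantee persistence yields transportability over the whole chain.

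The main obstacle is the stability gluing. The segments may use different discrepancy functionals $W^{(k)}$, and Theorem~\ref{thm:protected-stability} needs a single $W_t$. My first attempt would be to take $W_t:=W^{(k)}_t$ on segment $k$ and to charge each switch $W^{(k)}\to W^{(k+1)}$ into the transport cost $\beta\,d_{\Tcal}(r_k,r_{k+1})$, as the Young-inequality step does in Proposition~\ref{prop:two-regime-witness}. This requires assuming, or extracting from hypothesis (5), that the discrepancy functionals are compatible up to an additive term controlled by $d_{\Tcal}$. If that fails, I would weaken the conclusion to a segment-wise bound with per-segment constants.
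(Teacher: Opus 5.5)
Your proposal follows essentially the same route as the paper: induction on $n$, with hypothesis (5) licensing each concatenation, hypothesis (3) carrying the protected class forward, and Proposition~\ref{prop:stable-persistence} transporting stability control across each arrow with a residual bounded by the admissible cost. The paper's proof is exactly the segment-wise version you keep as a fallback, so your global drift bound with $\alpha=\min_k\alpha_k$, $\delta=\max_k\delta_k$, $\beta=\max_k\beta_k$ is an optional strengthening; as you correctly note, it needs an extra compatibility assumption on the functionals $W^{(k)}$ that hypotheses (4)--(5) do not supply, and the stated conclusion does not require it.
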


\begin{proof}
We proceed by induction on $n$.

\emph{Base case ($n=1$).} For a single admissible segment $r_0 \xrightarrow{\tau_0} r_1$, admissibility gives a well-typed certified transport, condition~2 gives local evaluative improvement, condition~3 certifies protected-core preservation, and condition~4 gives protected-level stability in the sense of Theorem~\ref{thm:protected-stability}. Proposition~\ref{prop:stable-persistence} then yields that the stability guarantee persists at $r_1$ in transport-adjusted form. Hence the one-step chain is a well-formed GML learning segment with persistent protected semantics.

\emph{Inductive step.} Suppose the claim holds for any composable chain of length $k$ ending at regime $r_k$. By admissibility and the closure hypothesis, concatenating the next arrow $\tau_k:r_k\rightsquigarrow r_{k+1}$ yields a new well-typed admissible segment. Protected-core preservation ensures that the semantic identity of the learning problem is maintained at the next step, and protected-level stability together with Proposition~\ref{prop:stable-persistence} ensures that the stability control persists at $r_{k+1}$ with a bounded transport residual. Since condition~2 certifies evaluative improvement at each step, the extended chain of length $k+1$ remains a well-formed global GML learning segment. The result follows by induction.
\end{proof}

\begin{corollary}[Finite-chain guarantee transport]
\label{cor:finite-chain-transport}
Under the assumptions of Theorem~\ref{thm:composition}, any learning guarantee formulated on the protected core at regime $r_0$ remains meaningful, literally or up to protected equivalence, throughout the whole finite chain.
\end{corollary}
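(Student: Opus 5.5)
The corollary follows from Theorem~\ref{thm:composition} by an induction along the chain. The argument stays at the protected level, in the same style as the paper's other structural results. Fix a guarantee $\mathfrak g$ formulated on the protected core $\Phi(V_{r_0})$, that is, a statement about the class of $V_{r_0}$ in the quotient induced by $\sim_\Phi$. The inductive claim for $k=0,\dots,n$ is the following: $\mathfrak g$ is meaningful at $r_k$ when read on $\Phi(V_{r_k})$, and $\Phi(V_{r_k})\sim_\Phi\Phi(V_{r_0})$. The base case $k=0$ is immediate by reflexivity of $\sim_\Phi$, with the literal reading available there.

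For the inductive step, assume the claim holds at $r_k$. Hypothesis~1 of Theorem~\ref{thm:composition} gives $\Gamma(r_k,r_{k+1},\tau_k;V_{r_k},V_{r_{k+1}})=1$, and hypothesis~3 gives certified protected-core preservation. Clause~4 of the admissibility certificate then yields $\Phi(V_{r_k})\sim_\Phi\Phi(V_{r_{k+1}})$. The \emph{local guarantee persistence} consequence then carries $\mathfrak g$ from $r_k$ to $r_{k+1}$ at the protected level. This is one of the immediate structural consequences of admissibility recorded before Observation~\ref{obs:coherence}: a guarantee stated purely at the protected level survives admissible transport at that level. Chaining this with the inductive hypothesis requires $\sim_\Phi$ to be transitive. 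I will take this from its role as an equivalence relation, which is closed by the standing topological regularity assumption. Transitivity gives $\Phi(V_{r_{k+1}})\sim_\Phi\Phi(V_{r_0})$, and this closes the induction.

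It remains to make sure the guarantee is transported along the \emph{chain as a whole}, and not merely step by step. Here I will invoke hypothesis~5, closure of certification under concatenation, together with the certified composition rule. Under these, the composite $\tau_{n-1}\circ\cdots\circ\tau_0$ is itself certified admissible, with matching intermediate protected classes. Wherever gauges are active, the comparison transport composes as $(\tau_{n-1}\circ\cdots\circ\tau_0)_Y=\tau_{Y,n-1}\circ\cdots\circ\tau_{Y,0}$. This shows that reading $\mathfrak g$ at $r_k$ through the composite transport agrees with reading it through the successive local transports. So the phrase ``remains meaningful'' is independent of how the path is bracketed. The stability hypothesis~4 together with Proposition~\ref{prop:stable-persistence} is then used only to observe the following: when $\mathfrak g$ is a discrepancy-type guarantee, its quantitative content persists in transport-adjusted form, with a residual bounded via the finite-horizon bound of Theorem~\ref{thm:protected-stability}. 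This residual is finite because each $d_{\Tcal}(r_k,r_{k+1})<\infty$ and the chain is finite.

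The main obstacle I expect is the transitivity and path-independence step. The definition of $\sim_\Phi$ allows cores to be ``certified as equivalent'' rather than equal, so the certification must be checked to compose. I would handle this by appealing to clause~(ii) of the certified composition rule, matching of intermediate protected classes. That clause is exactly what guarantees that successive certified equivalences concatenate into a single certified equivalence between $r_0$ and $r_n$. This is also where the disjunction ``literally or up to protected equivalence'' in the statement arises: the literal reading holds only if every step preserves the core literally.
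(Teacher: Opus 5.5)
Your argument is correct at the paper's level of rigor, but its per-step engine differs from the paper's. The paper applies Proposition~\ref{prop:stable-persistence} to the first segment. It then uses the \emph{conclusion} of Theorem~\ref{thm:composition} (composition preserves protected semantics and transport discipline) to re-run that same stability-based argument on each later segment, and leaves the chaining of protected equivalences implicit in the word ``iterating''. You instead drive each step by three things:
\begin{itemize}
\item clause~4 of the admissibility certificate;
\item the local-guarantee-persistence consequence, which is the fact Proposition~\ref{prop:stable-persistence} itself rests on;
\item transitivity of $\sim_\Phi$, which you make explicit.
\end{itemize}
You also use only hypotheses~1, 3 and~5 of the theorem, not its conclusion.

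Your route shows that the qualitative claim ``remains meaningful'' follows from admissibility-side data alone, with hypothesis~4 entering only for the transport-adjusted quantitative residual. The paper's route instead attaches stability control to every step. That matches its notion of a stably persistent guarantee and its stated scope remark that the corollary depends on all five hypotheses, including condition~4. Your decomposition shows that this dependence is not needed for the purely semantic part.

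Three small corrections. First, transitivity of $\sim_\Phi$ comes from its being an equivalence relation (the paper quotients by it). It does not come from the topological closedness assumption, which concerns the relation as a closed subset and says nothing about transitivity. Second, clause~(ii) of the certified composition rule only matches protected classes at each shared intermediate regime. The end-to-end equivalence between $r_0$ and $r_n$ comes from transitivity, or from admissibility of the certified composite, not from clause~(ii) alone. Third, your closing remark should say ``if'' rather than ``only if'': non-literal intermediate steps can still return a literally identical core at $r_n$.
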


\begin{proof}
Apply Proposition~\ref{prop:stable-persistence} to the first admissible segment. Theorem~\ref{thm:composition} shows that composition preserves the protected semantics and transport discipline needed to apply the same argument to the next segment. Iterating this reasoning over the whole finite chain yields the conclusion.
\end{proof}

\paragraph{Scope of the corollary.}
The conclusion depends on all five hypotheses of Theorem~\ref{thm:composition}. In particular, condition~4 requires that each segment admit a protected-level discrepancy functional satisfying the contractive drift assumptions of Theorem~\ref{thm:protected-stability}. The present paper verifies such a condition only in the convex quadratic subclass of Proposition~\ref{prop:two-regime-witness}; the corollary should therefore be read as a conditional transport principle rather than as a universal guarantee.

\paragraph{PAC composition remark.}
Under the probabilistic admissibility reading of $\Gamma$, the deterministic compositional statement above should be read through the filtration generated by the evolving learning process rather than through an independence assumption across regimes. Concretely, if the $k$-th transition satisfies a conditional admissibility guarantee of the form
\[
\Pr\big(\Gamma_k=1\mid \mathcal F_{k-1}\big)\ge 1-\delta_{\Gamma,k},
\]
then the probability that the whole length-$n$ chain remains admissible is controlled by the corresponding product of conditional success probabilities, and therefore in particular by the usual first-order bound obtained from the chain rule. The deterministic theorem used in the present paper is thus the exact-certification limit of a filtration-aware probabilistic admissibility composition principle rather than a denial of stochastic degradation under repeated transport.

\paragraph{Asymptotic viability remark.}
The stability theorem and its finite-chain corollary are intentionally stated on finite admissible horizons. For strictly lifelong or asymptotic stability, one needs additional viability conditions beyond those imposed in the present manuscript, for example summability of the cumulative regime-variation costs, eventual quiescence of regime severity, or a genuine consolidation mechanism on memory-side defects. In subclasses where the regime-variation cost locally dominates the metric on the protected quotient---for instance when there exists $\kappa>0$ such that
\[
d_{\mathcal E/\sim_\Phi}(\mu(r_t),\mu(r_{t+1}))\le \kappa\sqrt{d_{\Tcal}(r_t,r_{t+1})}
\]
along an admissible chain---the summability of $\sum_t d_{\Tcal}(r_t,r_{t+1})$ forces the anchor sequence $(\mu(r_t))_{t\ge 0}$ to be Cauchy in the protected quotient. Together with the completeness assumptions already stated for the quotient subclass, this yields a mathematically well-posed asymptotic protected limit. The current theorem layer therefore establishes controlled protected stability on admissible finite chains while also making explicit which additional domination and completeness hypotheses would be needed for a full asymptotic theory.

\begin{theorem}[Synthesis of the Present Theorem-Supporting Layer]
\label{thm:first-layer-synthesis}
Within the present working theorem layer, the canonical GML definition already supports a non-vacuous and structurally distinct theorem-supporting instantiation. More precisely:
\begin{enumerate}[leftmargin=*,itemsep=2pt]
    \item the fixed-ontology Mitchell case is recovered as a boundary-case degeneration by Proposition~\ref{prop:reduction};
    \item faithful Mitchell-reducibility holds exactly under the explicit collapse conditions of Proposition~\ref{prop:mitchell-reducibility}, and fails for genuinely regime-varying processes with irreducible memory-mediated admissibility, quotient-restricted cross-regime comparability, or non-aggregable protected cores by Proposition~\ref{prop:mitchell-obstruction} together with Proposition~\ref{prop:memory-admissibility};
    \item admissible regime evolution can preserve protected semantics and guarantee persistence under the protected stability template of Theorem~\ref{thm:protected-stability}, together with Proposition~\ref{prop:stable-persistence} and Corollary~\ref{cor:finite-chain-transport};
    \item admissible learning segments support structured morphic comparison, factorization, and composition by Propositions~\ref{prop:morphism-trajectories}--\ref{prop:morphism-invariance} and Theorem~\ref{thm:composition}.
\end{enumerate}
Consequently, the present theorem layer should be read as a first theorem-supporting synthesis of the GML learning object: it recovers the classical boundary case, isolates irreducibly multi-regime behavior beyond that boundary, and supports a disciplined theory of protected transport, comparison, and composition within one typed restriction.
\end{theorem}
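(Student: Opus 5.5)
The plan is to prove Theorem~\ref{thm:first-layer-synthesis} by assembly. Each of its four items points to results already established in the present layer. The real work is to check that every cited result has been proved under the same working-layer restrictions (ordered comparison structure, extensional $\Gamma$, finite admissible costs, pseudo-metric protected quotient). Only then can the four items be read together as statements about one typed restriction rather than about four incompatible ones. I would take the four items in order and then derive the two global adjectives, ``non-vacuous'' and ``structurally distinct'', from them.

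For item~1, I would invoke Proposition~\ref{prop:reduction} directly under Assumption~\ref{ass:classical-degeneration}. I would then note that Proposition~\ref{prop:mitchell-morphic-image} gives the same boundary case in morphic form, which ties item~1 to item~4. For item~2, the positive half is Proposition~\ref{prop:mitchell-reducibility}. The negative half combines Proposition~\ref{prop:mitchell-obstruction} with Proposition~\ref{prop:memory-admissibility}. The latter supplies an actual system realizing obstruction condition~(1), which keeps the obstruction from being empty.

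Item~2 needs care, and I regard it as the main obstacle. The statement says reducibility holds ``exactly'' under the collapse conditions. As proved, however, Proposition~\ref{prop:mitchell-reducibility} gives only sufficiency, and Proposition~\ref{prop:mitchell-obstruction} gives failure only under three specific violations. These are not literally complementary. My first attempt would be to read ``exactly'' in the weaker sense the theorem's own wording allows: the collapse conditions are sufficient, and each of the three named irreducible features forces failure. I would then show that each named feature negates at least one collapse hypothesis. Admissibility-critical memory negates hypothesis~(3). Quotient-restricted comparability negates hypotheses~(4)--(5). Non-aggregable cores negate hypothesis~(4). This shows that the two regions are disjoint and jointly delimit the boundary. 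A full characterization would require completeness of the obstruction list, which I would explicitly not claim.

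For item~3, I would cite Theorem~\ref{thm:protected-stability} for the finite-horizon and asymptotic control, Proposition~\ref{prop:stable-persistence} for one-step persistence, and Corollary~\ref{cor:finite-chain-transport} for finite chains. For non-vacuity, I would point to Proposition~\ref{prop:two-regime-witness}. It shows that the drift hypotheses are satisfiable, with $\delta=0$, $\beta d_{\Tcal}=(1+1/\epsilon)\|\mu(r)-\mu(r')\|^2$, and capacity control in force, so the template is not empty. For item~4, I would chain Propositions~\ref{prop:morphism-trajectories}, \ref{prop:morphism-guarantees}, \ref{prop:protected-factorization}, and \ref{prop:morphism-invariance} with Theorem~\ref{thm:composition}. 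The check here is that composition hypothesis~4 is exactly the stability notion of item~3, so the same witness discharges it in the convex subclass.

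The concluding ``consequently'' would then follow from three observations. Non-vacuity comes from the witness in Proposition~\ref{prop:two-regime-witness} together with the memory-mediated example. Structural distinctness comes from the obstruction in item~2. Classical recovery comes from item~1.
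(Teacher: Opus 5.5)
Your proposal is correct and takes the same route as the paper. The paper's proof is also a pure item-by-item assembly of Proposition~\ref{prop:reduction}, Propositions~\ref{prop:mitchell-reducibility}, \ref{prop:mitchell-obstruction} and~\ref{prop:memory-admissibility}, the stability block, and the morphism and composition results of Section~\ref{sec:representation}. Your weak reading of ``exactly'' in item~2 matches that proof, which itself speaks only of the \emph{sufficient} collapse conditions together with the obstruction patterns, and your appeal to Proposition~\ref{prop:two-regime-witness} for non-vacuity makes explicit a step the paper leaves implicit.
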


\begin{proof}
Item (1) is exactly Proposition~\ref{prop:reduction}. Item (2) combines the explicit sufficient collapse conditions of Proposition~\ref{prop:mitchell-reducibility} with the structural obstruction patterns of Proposition~\ref{prop:mitchell-obstruction}; Proposition~\ref{prop:memory-admissibility} provides a concrete family of GML systems witnessing one such obstruction. Item (3) is the joint content of the protected stability template in Theorem~\ref{thm:protected-stability}, Proposition~\ref{prop:stable-persistence}, and Corollary~\ref{cor:finite-chain-transport}. Item (4) is given by the morphism, factorization, invariance, and composition results of Section~\ref{sec:representation}. Taken together, these statements establish a single theorem-supporting layer that is simultaneously recoverable at the classical boundary, irreducibly multi-regime beyond that boundary, stable under admissible transport in the precise template sense of Theorem~\ref{thm:protected-stability}, and structurally comparable and composable.
\end{proof}

\begin{corollary}[Theorem-Supporting Sufficiency of the Present Instantiation]
\label{cor:instantiation-sufficiency}
The present working theorem layer is sufficient, within a single typed restriction of the canonical GML framework, to support simultaneously four theorem-supporting roles: a classical boundary-case degeneration, a structural obstruction argument for faithful fixed-ontology reduction, the protected stability template together with explicit witness and guarantee persistence, and morphic comparison and composition results. It should therefore be read as a theorem-supporting instantiation of the framework rather than as an arbitrary illustrative example.
\end{corollary}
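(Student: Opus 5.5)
The corollary is a direct reading of Theorem~\ref{thm:first-layer-synthesis}, so the plan is to verify its four roles one at a time and then confirm that all four live inside one fixed typed restriction rather than in four separately tuned ones. First I fix the restriction: the working theorem layer of Section~\ref{sec:requirements}. It has ordered or gauged evaluator codomains, extensional $\Gamma$, regime-variation costs $d_{\Tcal}$ defined on the admissible subgraph, a pseudo-metric protected quotient, and the explicit monotonicity and closure hypotheses. The key check is that every result cited below was stated under exactly these restrictions and adds nothing that conflicts with them.

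Second, I match the roles to earlier results. The classical boundary-case degeneration is item~(1) of Theorem~\ref{thm:first-layer-synthesis}, that is, Proposition~\ref{prop:reduction}. The structural obstruction argument is item~(2): Proposition~\ref{prop:mitchell-reducibility} gives the collapse side, Proposition~\ref{prop:mitchell-obstruction} gives the obstruction side, and Proposition~\ref{prop:memory-admissibility} shows that the obstruction class is nonempty. The stability role is item~(3), namely Theorem~\ref{thm:protected-stability}, Proposition~\ref{prop:stable-persistence} and Corollary~\ref{cor:finite-chain-transport}. To it I add Proposition~\ref{prop:two-regime-witness}, which shows the template is non-vacuous inside the same restriction; its capacity-control paragraph already verifies the bounded-capacity requirement with $L_\tau=1$ and $\mathcal C_{\mathrm{new}}=0$. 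Morphic comparison and composition are item~(4), via Propositions~\ref{prop:morphism-trajectories}--\ref{prop:morphism-invariance} and Theorem~\ref{thm:composition}.

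Third, I address the word ``simultaneously''. The point is that the four roles are not realized in disjoint subclasses. The convex anchor witness is itself a multi-regime system, so it can be used as a common carrier. Collapsing its regimes through the degenerate morphism of Proposition~\ref{prop:mitchell-morphic-image} recovers the boundary case. Adding a retained-competence floor to its protected core gives a memory-mediated obstruction in the sense of Proposition~\ref{prop:memory-admissibility}. Its admissible chains compose by Theorem~\ref{thm:composition}.

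I expect this joint-realization step to be the main obstacle. The delicate part is that adding a retention floor must not break the contractive drift of the witness. I would first try to keep the floor as a pure admissibility gate $\Gamma$ on transitions, leaving the update map $f$ unchanged. Then the drift estimate survives on the admissible subgraph, because inadmissible transitions carry infinite cost and are excluded from the trajectory. The concluding reading, that this is a theorem-supporting instantiation rather than an illustrative example, then follows because each role is backed by a proved statement and not by a single example.
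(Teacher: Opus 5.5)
Your Steps 1--2 are the paper's argument. The paper derives the corollary as an immediate consequence of Theorem~\ref{thm:first-layer-synthesis}, reading ``simultaneously'' as ``all four families of results hold within one typed discipline.'' Citing Proposition~\ref{prop:two-regime-witness} explicitly is a small improvement. Item~(3) of the synthesis theorem lists only the template, persistence, and finite-chain results, while the corollary also claims an explicit witness.

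Step~3 should be dropped. It aims at a stronger claim than the statement makes, and that claim cannot hold.
\begin{itemize}
\item \emph{No single system carries all four roles.} Proposition~\ref{prop:reduction} needs a singleton regime set with inert memory. Proposition~\ref{prop:mitchell-obstruction} concerns multi-regime systems with, for instance, admissibility-critical memory. No single system has both properties, so the boundary and obstruction roles necessarily sit in disjoint classes of systems. Your collapse-and-augment construction in fact produces different systems (the collapsed image and the floor-augmented witness), not one carrier.
\item \emph{The collapse is not justified as stated.} Proposition~\ref{prop:mitchell-morphic-image} requires a protected-faithful collapse onto one fixed scalar protected criterion. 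When $\mu(r_0)\neq\mu(r_1)$, which is the only case with nonzero transport cost, the protected discrepancy $\|s-\mu(r)\|^2$ is anchored at distinct points. Identifying the two regimes therefore collapses a protected-level distinction, and you would have to show that the target certifies this collapse.
\item \emph{The point you flag as delicate is harmless.} The witness's drift inequality holds pointwise in state and transition, so gating by $\Gamma$ only shrinks the set of trajectories.
\item \emph{What a joint construction would really have to check is non-vacuity.} Admissible trajectories must survive the retention gate. There must also exist two candidate transitions with equal proxy gain but different admissibility, which a single deterministic update map $f$ does not by itself provide.
\end{itemize}
None of this is needed for the corollary.
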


\begin{proof}
This is an immediate consequence of Theorem~\ref{thm:first-layer-synthesis}, which establishes all four roles in a single synthesis. The point is not that no other working theorem layer is possible, but that the present one already supports, within a single typed discipline, all four families of results listed. This is exactly what the paper requires from a first explicit instantiation: not universality, but enough internal structure to make the framework mathematically productive in a non-arbitrary way.
\end{proof}
\section{Kernel-Level Semantic Alignment}
\label{sec:kernel-alignment}

The first theorem-supporting layer of Section~\ref{sec:first-working-layer} is a numerical ordered layer. This section shows that the same foundational architecture extends to symbolic, knowledge-structured, and memory-active theorem layers~--- and that such layers can be aligned at a shared semantic kernel without collapsing their local proof calculi. This is precisely what makes GML architecturally stronger than an extension of statistical learning theory alone. A \emph{working theorem layer} is a disciplined specialization of the general GML semantic layer obtained by fixing enough additional structure to make a nontrivial family of results provable; it does not change the canonical notions of regime, protected core, admissibility, or protected equivalence. Multiple theorem layers are natural because regime plurality may alter the structure needed for proof: some problems admit ordered scalar codomains; others require partial orders, logical constraints, or memory-active admissibility. The meta-theoretical task of describing how such layers stand in relations of specialization, compatibility, and theorem transport is identified here and left for future work. The section defines semantic commensurability and proves that kernel-level guarantees transport across commensurable layers; Section~\ref{sec:examples} then grounds the symbolic case in a concrete narrative example.

\subsection{Symbolic and knowledge-structured working theorem layers}

The existence of multiple working theorem layers is not a merely technical convenience. It enlarges the foundational reach of GML. In particular, it shows that the framework is not tied in principle to one exclusively statistical semantics of evaluative improvement. A future working theorem layer may instead fix evaluator codomains, protected cores, and admissibility relations appropriate for \emph{symbolic learning}: inductive logic programming, probabilistic logic learning, relational and ontology-guided learning, and neuro-symbolic systems in which background knowledge, rules, or logical constraints directly shape learning dynamics \citep{muggleton1991inductive,deraedt2004probabilisticilp,cropperdumancic2022newintro,cropper2022ilp,hitzler2022nesy,delvecchio2025nesy,bueffbelle2024,cunnington2023nsil,pryor2023neupsl,lorello2025nesyseq,weir2024nellie}.

\paragraph{Why the symbolic extension is foundational.}
This is not an ornamental broadening of scope. ILP and related symbolic traditions already study genuine forms of learning: hypotheses are induced from examples and background knowledge, revised under counterevidence, and evaluated by notions such as entailment, consistency, satisfiability, or explanatory adequacy rather than by one scalar score alone \citep{cropperdumancic2022newintro,cropper2022ilp}. Neuro-symbolic work, in turn, shows that modern learning systems may combine differentiable components with symbolic constraints, rule extraction, logical inference, grounding, and knowledge-graph reasoning without collapsing into one proof language \citep{hitzler2022nesy,delvecchio2025nesy,bueffbelle2024,pryor2023neupsl,lorello2025nesyseq}. If GML can host such layers while preserving one semantic account of admissible learning continuity, then its foundational scope is strictly stronger than that of a framework limited to ordered numerical evaluators.

\begin{definition}[Symbolic or knowledge-structured working theorem layer]
A \emph{symbolic or knowledge-structured working theorem layer} is a working theorem layer in which at least part of the evaluator semantics, the protected core, or the admissibility discipline is realized through logical formulas, rule systems, consistency conditions, entailment relations, ontology-indexed constraints, or other explicit knowledge structures rather than through one purely ordered numerical codomain.
\end{definition}

\paragraph{Semantic reading.}
In such a layer, the protected core may contain logical constraints, consistency conditions, entailment-preserving obligations, or ontology-indexed invariants. Admissibility then cannot be reduced to numeric score change alone: it must also certify that regime transport preserves the relevant logical or knowledge-structured commitments, literally or up to protected equivalence. Evaluator compatibility changes accordingly, but the general semantic layer of GML does not. What changes is only the theorem-supporting structure fixed for proof.

This symbolic opening is not introduced as a demand that numerical and symbolic learning be fused into one local proof or optimization calculus. Its stronger and more precise role is to show that heterogeneous learning layers may remain mathematically distinct in their internal dynamics while still belonging to one and the same GML object, provided that admissible transport, protected commitments, and kernel-level commensurability are preserved.

\paragraph{Foundational consequence.}
This point is foundational rather than decorative. It means that GML can host, within one semantic architecture, working theorem layers coming from learning traditions that are often developed in isolation from one another. The present paper still develops only one theorem layer in detail. But the symbolic case shows why the architecture is stronger than a merely extended version of modern statistical learning theory: it can bridge numerical and symbolic learning-oriented regimes without conflating them, because both are treated as working theorem layers of one and the same admissibility-based semantics of learning continuity.

\paragraph{Scope and limits of the symbolic extension.}
The paper does not build the full symbolic theorem layer; it establishes the structural legitimacy of such a layer and illustrates it through the ILP example of Section~\ref{subsec:symbolic}. A fuller development, together with the meta-theory of inter-layer communication, belongs to future work. The material below makes precise the narrow sense in which distinct working theorem layers can still be aligned without collapsing them into one proof environment: the relevant unification claim is semantic, kernel-level, partial, and non-procedural. GML claims something narrower and deeper than a universal proof unification: these traditions can be treated as learning-theoretic instances of one common semantic architecture whenever they are described in terms of regimes, evaluator families, protected cores, admissible transport, and guarantee persistence.

\subsection{What numerical and symbolic theorem layers actually unify}
\paragraph{How the inter-layer results should be read.}
The inter-layer results begin with semantic alignment and only then move to restricted theorem transport. They do not claim a universal transport theorem across heterogeneous learning traditions. Their purpose is narrower: to show that once two working theorem layers share a protected semantic kernel, cross-layer comparison is well-posed at that level and some guarantees stated purely in kernel-level terms can be transported without collapsing the local proof calculi into one common formalism.

\begin{definition}[Working theorem layer]
A \emph{working theorem layer} for GML is an explicit instantiation
\[
\mathbb W=(\mathcal C_Y,\mathcal C_\Phi,\mathcal C_\Gamma,\mathcal C_{\sim},\mathcal C_T,\mathcal C_M)
\]
of the general semantic layer, where the components specify, respectively, an admissible class of evaluator codomains, protected-core realizations, admissibility realizations, protected-equivalence discipline, transport/composition discipline, and memory semantics sufficient to state a nontrivial family of results.
\end{definition}

\paragraph{Numerical versus symbolic layers.}
The ordered theorem layer developed in the present paper fixes $\mathcal C_Y$ to ordered codomains and treats comparison, factorization, and quotient arguments in that setting. A symbolic learning-oriented layer would instead allow $\mathcal C_Y$ and $\mathcal C_\Phi$ to be driven by logical validity, satisfiability, entailment, probabilistic symbolic inference, explicit proof obligations, or ontology-indexed consistency conditions \citep{cropperdumancic2022newintro,hitzler2022nesy,bueffbelle2024,cunnington2023nsil,pryor2023neupsl,lorello2025nesyseq}. The theorem layers differ, but the underlying semantic questions remain the same: what is preserved, what may change, what counts as admissible continuation, and when a guarantee survives regime transport.

\begin{definition}[Semantic kernel of a working theorem layer]
The \emph{semantic kernel} of a working theorem layer $\mathbb W$ on a subclass of GML systems is the part of the layer that specifies, independently of any local proof calculus, the interpretation of regimes, protected-core identity, admissible transport, and protected guarantee persistence on that subclass.
\end{definition}

\paragraph{Minimal requirements on semantic kernels.}
A semantic kernel is used in this paper only under three minimal requirements. First, it must make protected commitments semantically explicit rather than leaving them implicit in a local proof artifact. Second, it must preserve the meaning of admissible transport across the subclass on which the layer is defined. Third, it must anchor the interpretation of any guarantee claimed to persist under regime change. These requirements are intentionally light: they do not amount to a complete meta-theory of theorem-layer semantics or unification, but they are enough to ensure that semantic commensurability is not a purely rhetorical label.

\begin{definition}[Semantic commensurability of theorem layers]
Two working theorem layers $\mathbb W_1$ and $\mathbb W_2$ are \emph{semantically commensurable} on a subclass of GML systems if there exists a kernel-level correspondence between them that preserves the protected semantic interpretation of regimes, admissible transport, and guarantee persistence on that subclass, even though the layers may realize evaluators, comparison, or admissibility proofs differently.
\end{definition}

\phantomsection\label{prop:layer-commensurability}
\paragraph{Structural alignment of commensurable theorem layers.}
Let $\mathbb{W}_1$ and $\mathbb{W}_2$ be semantically commensurable on a shared subclass $\mathcal{S}$. By the definition of semantic commensurability, both layers assign the same protected-semantic meaning to regimes, admissible transport, and guarantee persistence on $\mathcal{S}$, even if their evaluator realizations and proof mechanisms differ. Admissibility judgments and cross-layer structural comparisons are therefore well-posed at the shared protected level; this alignment is the structural foundation on which Propositions~\ref{prop:kernel-transport} and~\ref{prop:layer-transport} rest.

\begin{proposition}[Partial transport of kernel-level guarantees]
\label{prop:kernel-transport}
Let $\mathbb W_1$ and $\mathbb W_2$ be semantically commensurable on a subclass $\mathcal S$ of GML systems. Any guarantee formulated purely at the level of their shared semantic kernel is transportable between $\mathbb W_1$ and $\mathbb W_2$ on $\mathcal S$, even if the two layers use different evaluator realizations or proof mechanisms.
\end{proposition}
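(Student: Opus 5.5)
The plan is to argue by unfolding the definition of semantic commensurability and making precise what it means for a guarantee to be ``formulated purely at the level of the shared semantic kernel.'' First I will fix a kernel-level correspondence $\kappa$ between $\mathbb W_1$ and $\mathbb W_2$ on $\mathcal S$, which exists by hypothesis. Next I will read a kernel-level guarantee $\mathfrak g$ as a statement whose only non-logical vocabulary consists of the kernel primitives: regimes, protected-core identity (classes under $\sim_\Phi$), the admissibility predicate $\Gamma$ restricted to admissible transport, and protected guarantee persistence. By the definition of the semantic kernel, these primitives are specified independently of any local proof calculus. So the truth conditions of $\mathfrak g$ in $\mathbb W_i$ depend only on the kernel of $\mathbb W_i$, and not on $\mathcal C_Y$, on evaluator realizations, or on the proof mechanisms of the layer.

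The core step is a structural induction on the formation of $\mathfrak g$ from kernel primitives. For the atomic cases, $\kappa$ preserves the protected interpretation of regimes, of protected-equivalence classes, and of admissible arrows; this is exactly the content of the structural alignment paragraph preceding the statement. So an atomic kernel statement holds in $\mathbb W_1$ if and only if its $\kappa$-image holds in $\mathbb W_2$. Connectives are immediate. For quantification over admissible trajectories I will use that admissibility is preserved, so that admissible paths in one layer correspond to admissible paths in the other. This mirrors the argument of Proposition~\ref{prop:morphism-trajectories}, with $\kappa$ playing the role of a protected-faithful morphism restricted to the kernel. Persistence clauses are handled the same way, via Proposition~\ref{prop:morphism-guarantees} and Proposition~\ref{prop:morphism-invariance}. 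Conclude that $\mathfrak g$ holds in $\mathbb W_1$ on $\mathcal S$ exactly when $\kappa(\mathfrak g)$ holds in $\mathbb W_2$ on $\mathcal S$. By symmetry of the correspondence, transport runs in both directions.

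The main obstacle is that the paper defines commensurability only as the existence of a correspondence that ``preserves the protected semantic interpretation.'' It does not state explicitly that $\kappa$ also \emph{reflects} that interpretation, or that it is defined on all of $\mathcal S$. Without reflection, only one direction of transport follows. My first move will be to read ``preserves the protected semantic interpretation'' as identity of interpretation on the shared kernel, which gives reflection for free; I would state this reading explicitly. Failing that, I will add protected-faithfulness of $\kappa$ as a standing assumption, in analogy with the protected-faithful morphisms of Section~\ref{sec:representation}. Finally, I will stress the partiality of the result. Nothing is claimed for guarantees mentioning layer-local items such as a specific order on $\mathcal Y_r$, a particular proof calculus, or rates depending on $d_{\Tcal}$ as realized in one layer. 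Such guarantees fall outside the induction's base cases, and that is precisely why the transport is only partial.
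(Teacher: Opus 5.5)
Your proposal is correct and follows essentially the same route as the paper. The paper argues in two steps: a kernel-level guarantee depends on no layer-specific realization, by the definition of the semantic kernel; and commensurability makes both layers assign the same protected meaning to kernel objects on $\mathcal S$, so the guarantee is interpreted identically in both. Your first move, reading ``preserves the protected semantic interpretation'' as identity of interpretation, is exactly the reading the paper's proof uses. Under that reading, the structural induction and the analogical appeals to Propositions~\ref{prop:morphism-trajectories}--\ref{prop:morphism-invariance} become unnecessary; those results concern GML morphisms between systems, not correspondences between theorem layers.
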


\begin{proof}
A guarantee formulated purely in terms of the shared semantic kernel does not depend on any layer-specific realization, by the definition of the semantic kernel as the part of a theorem layer that specifies meaning independently of local proof calculus. Since semantic commensurability guarantees that $\mathbb W_1$ and $\mathbb W_2$ assign the same protected semantic meaning to those kernel-level objects on $\mathcal S$, such a guarantee is interpreted identically in both layers. Its transportability is therefore a direct structural consequence of the shared semantic kernel, not of any additional layer-specific deduction.
\end{proof}

\begin{definition}[Kernel-preserving correspondence]
A \emph{kernel-preserving correspondence} between two working theorem layers $\mathbb W_1$ and $\mathbb W_2$ on a subclass $\mathcal S$ of GML systems is a bidirectionally protected-faithful matching, possibly realized as a relation, that pairs theorem-layer realizations in $\mathbb W_1$ with realizations in $\mathbb W_2$ in such a way that the semantic kernel, the protected commitments, the meaning of admissible transport, and the admissible uncertainty calculus relevant to the transported guarantees are preserved on $\mathcal S$.
\end{definition}

\begin{proposition}[Kernel-level transport principle for commensurable theorem layers]
\label{prop:layer-transport}
Let $\mathbb W_1$ and $\mathbb W_2$ be semantically commensurable on a subclass $\mathcal S$ and let a kernel-preserving correspondence be fixed between them. Then every guarantee on $\mathcal S$ that is stated solely in terms of protected commitments, admissible transport, and guarantee persistence admits transport from $\mathbb W_1$ to $\mathbb W_2$, strictly up to the uncertainty calculus admitted by the target layer's certification mechanics. Layer-specific proof artifacts, comparison functionals, or local evaluator factorizations need not transport.
\end{proposition}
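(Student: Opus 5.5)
The plan is to reduce Proposition~\ref{prop:layer-transport} to Proposition~\ref{prop:kernel-transport}. The fixed kernel-preserving correspondence is what turns that proposition's abstract transportability claim into a transport map from $\mathbb W_1$ to $\mathbb W_2$. The argument has three steps.

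\emph{Step 1: placing the guarantee in the kernel.} Take a guarantee $\mathfrak g$ on $\mathcal S$ that is stated solely in terms of protected commitments, admissible transport, and guarantee persistence. By the definition of the semantic kernel, these three kinds of objects are exactly what the kernel specifies independently of any local proof calculus. So $\mathfrak g$ is a kernel-level statement, and Proposition~\ref{prop:kernel-transport} already gives that its meaning is the same in $\mathbb W_1$ and in $\mathbb W_2$.

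\emph{Step 2: constructing the transport.} I would use the kernel-preserving correspondence, call it $\mathcal K$, to build the transport explicitly. For each realization $w_1\in\mathbb W_1$ on which $\mathfrak g$ holds, pick any $w_2$ with $(w_1,w_2)\in\mathcal K$; this is enough because $\mathcal K$ is only a relation. Since $\mathcal K$ is bidirectionally protected-faithful, protected equivalence classes and admissible arrows are both preserved and reflected between $w_1$ and $w_2$, as in Proposition~\ref{prop:morphism-invariance}. Applying Proposition~\ref{prop:morphism-trajectories} to each primitive arrow then sends admissible trajectories to admissible trajectories. Proposition~\ref{prop:morphism-guarantees} shows that guarantee persistence along those trajectories keeps its protected meaning. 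Together these establish that $\mathfrak g$ holds for $w_2$.

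\emph{Step 3: the uncertainty calculus.} This is the step I expect to be the main obstacle. The target layer $\mathbb W_2$ may certify admissibility differently from $\mathbb W_1$, for instance through the probabilistic relaxation $\Pr(\text{protected preservation})\ge 1-\delta_\Gamma$ rather than exactly. The correspondence is only required to preserve ``the admissible uncertainty calculus relevant to the transported guarantees''. My first attempt would be to show that the transported guarantee holds exactly whenever $\Gamma$ is certified deterministically in $\mathbb W_2$. In general it would then hold at the confidence level of $\mathbb W_2$, with per-step failure probabilities combined along the filtration as in the PAC composition remark following Corollary~\ref{cor:finite-chain-transport}. This yields the phrase ``strictly up to the uncertainty calculus admitted by the target layer''. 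I would not try to prove anything sharper, because the calculus is not otherwise specified.

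Finally, the negative clause needs only a remark. Layer-specific proof artifacts, comparison functionals, and evaluator factorizations such as the one in Proposition~\ref{prop:protected-factorization} lie outside the kernel. The correspondence is not required to preserve them, so nothing forces them to transport.
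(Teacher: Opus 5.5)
Your Step~1 already contains everything the paper's argument uses. A guarantee stated only in terms of protected commitments, admissible transport and guarantee persistence is kernel-level, and both layers read it identically. The paper gets this directly from the defining clauses of the kernel-preserving correspondence rather than through Proposition~\ref{prop:kernel-transport}, but that difference is immaterial.

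The genuine problem is Step~2, where you try to go further and verify $\mathfrak g$ realization by realization. Propositions~\ref{prop:morphism-trajectories}, \ref{prop:morphism-guarantees} and~\ref{prop:morphism-invariance} concern GML morphisms $F:\mathfrak G\to\mathfrak G'$ between GML systems. These come with a regime map $F_R$, typed component maps and an arrow map $F_T$. A kernel-preserving correspondence is only a matching, possibly a relation, between theorem-layer realizations of two working theorem layers. It supplies no $F_R$ or $F_T$ to which those propositions could be applied. You would need a lemma that $\mathcal K$ induces a protected-faithful morphism, and the paper provides none.

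Even granting the analogy, the cited results do not reach your conclusion. Proposition~\ref{prop:morphism-trajectories} only pushes admissible trajectories forward, whereas a guarantee quantifying over all admissible trajectories of $w_2$ needs the reflecting direction. Proposition~\ref{prop:morphism-guarantees} concludes only that a guarantee \emph{remains meaningful}, not that it holds.

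The fix is to drop this detour. In the paper, transporting a kernel-level guarantee means only that the same statement about the shared subclass $\mathcal S$ keeps the same protected meaning after passage to $\mathbb W_2$. That is exactly what the correspondence is defined to preserve, so no realization-level transfer is required.

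Step~3 is also not how the clause ``strictly up to the uncertainty calculus'' is obtained. The paper does not derive it; it is part of the hypothesis, since a kernel-preserving correspondence is required to preserve the admissible uncertainty calculus relevant to the transported guarantees. The PAC composition remark you invoke chains conditional admissibility events along a filtration inside one system. It gives no control over passage between layers, so it is at most a heuristic gloss.

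The paper additionally notes that the correspondence may legitimately pair an implicit protected surrogate in a numerical layer with an explicit protected constraint in a symbolic layer, provided the protected-equivalence classes relevant to the guarantee are preserved. Your treatment of the negative clause matches the paper's.
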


\begin{proof}
Kernel-preserving correspondence ensures that both theorem layers attach the same semantic meaning to the protected commitments and to the admissible continuation of learning on $\mathcal S$. A guarantee stated only in those shared terms therefore remains well interpreted after passage from one layer to the other. Crucially, such a correspondence may legitimately map an implicit protected surrogate in a numerical layer to an explicit protected constraint in a symbolic layer, provided the induced mapping preserves the protected-equivalence classes relevant to the transported guarantee. By contrast, objects whose definition depends on the local theorem calculus---for instance a particular factorization lemma, a local comparison functional, or a proof witness internal to one evaluator realization---need not survive transport, because semantic commensurability does not identify working-theorem-layer mechanisms, only working-theorem-layer meaning at the level of the kernel.
\end{proof}

\paragraph{What does and does not transport.}
The point of the theorem is deliberately partial. What transports are kernel-level guarantees: protected obligations, admissible continuation claims, and persistence statements formulated independently of one local theorem calculus. What does not automatically transport are layer-specific constructions, proofs, or optimization artifacts. This distinction is exactly what prevents theorem-layer unification from collapsing into an artificial fusion of numerical and symbolic reasoning techniques.

\paragraph{Why this strengthens GML rather than diluting it.}
The two propositions do not claim that full theorem transport is already solved. They show something more basic and more important for a foundational paper: commensurable theorem layers admit a mathematically coherent structural alignment at the protected semantic level. This is precisely what allows GML to remain sober and foundational at the same time. It does not collapse numerical and symbolic learning into one proof language, yet it gives a single semantic account of why they can still belong to one learning theory at the protected kernel level.

\subsection{Symbolic witness for kernel-level compatibility}

\begin{proposition}[Non-Vacuous Topological Realization of a Deductive Protected Core]
\label{prop:symbolic-compatibility}
There exist non-vacuous GML systems in which the protected core is defined by a hard entailment-based deductive predicate and the regime-variation cost is instantiated so that entailment-preserving transitions are assigned finite cost, while entailment-violating transitions are marked inadmissible and assigned infinite cost. Hence a discrete deductive boundary can be represented within the GML framework without forcing a continuous relaxation of the protected core.
\end{proposition}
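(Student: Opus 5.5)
The plan is to prove this existence claim by explicit construction of a minimal symbolic GML system and then verify the required properties clause by clause. I will fix a propositional (or finite first-order, Horn) language $L$, a background theory $B$, and a fixed protected set of formulas $K$ that encodes the deductive commitments of the learning problem. Regimes $r\in\Rset$ will be finite example sets $E_r$ together with background theories $B_r\supseteq B$. States $s\in\State_r$ will be finite hypotheses (clause sets) $H$, and memory $\M_r$ will store the current certified clause set. The evaluator $V_r(H)$ will have two components. The first is a utility $U_r(H)\in\{0,\dots,|E_r|\}$ counting the examples covered. The second is a Boolean protected component
\[
C_r(H)=\mathbf 1\{B_r\cup H\models \kappa\ \text{for all}\ \kappa\in K\ \text{and}\ B_r\cup H\ \text{consistent}\}.
\]
I will set $\Phi(V_r)=C_r$. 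Protected equivalence $\sim_\Phi$ will be equality of these entailment predicates on transported hypotheses. This places the example in the logical-protected-core class and gives the factorization shape of Proposition~\ref{prop:protected-factorization}.

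Next I will define candidate transformations $\tau:r\rightsquigarrow r'$. Each consists of a hypothesis revision map $\tau_S$ (add or remove clauses) together with the identity on interfaces. I will set $\Gamma(r,r',\tau;V_r,V_{r'})=1$ exactly when $C_{r'}(\tau_S(H))=1$ whenever $C_r(H)=1$. The cost will be
\[
d_{\Tcal}(r,r')=\begin{cases}|\tau_S(H)\,\triangle\, H| & \text{if } \Gamma=1,\\ +\infty & \text{otherwise,}\end{cases}
\]
which is an edit-distance cost on clauses. I will then check each clause of the admissibility certificate in turn. Typing and realizability are immediate because the maps are finite syntactic operations. Evaluator compatibility and protected-core preservation hold by the very definition of $\Gamma$. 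Certified composition holds because entailment preservation is transitive, so if $\tau_1,\tau_2$ both preserve $C$ then $\tau_2\circ\tau_1$ does too. This matches the convention in the discussion of $d_{\Tcal}$ that inadmissible transitions carry infinite cost and confine trajectories to the finite-cost subgraph.

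For non-vacuity I will exhibit two concrete regimes and transitions. Let $K=\{q\}$, $B=\{p\}$, and $H_0=\{p\to q\}$. An admissible transition adds a clause $p\wedge s\to t$ covering a new example; $q$ remains entailed, so the cost is $1$. An inadmissible transition replaces $p\to q$ by $s\to q$ in a regime where $s$ is not derivable; entailment of $q$ fails, so $\Gamma=0$ and the cost is $\infty$. Both transitions may raise $U_{r'}$ equally, which mirrors Proposition~\ref{prop:memory-admissibility}. Since both values of $\Gamma$ are realized, the non-vacuity condition on $\Phi$ holds. Because $C_r$ is $\{0,1\}$-valued and enters only through $\Gamma$, no continuous relaxation is ever introduced.

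The main obstacle is the word ``topological'' together with the standing regularity discipline on the protected quotient (separable metric, closed $\sim_\Phi$, measurable $\Phi$). I plan to handle it by giving the finite hypothesis space the discrete metric. This space is trivially separable and complete, every map on it is continuous, and every relation on it is closed. The protected quotient is then the two-point discrete space $\{0,1\}$. In this setting the deductive boundary is a clopen partition rather than a relaxed level set, which is the precise sense in which a discrete boundary is represented faithfully. A secondary check is that $d_{\Tcal}$ is an extended pseudometric on the admissible subgraph. Symmetry may fail if entailment preservation is one-directional, so I will read $d_{\Tcal}$ as a directed cost, consistent with the quiver structure of $\Tcal$.
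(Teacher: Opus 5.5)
Your proposal is correct and is essentially the paper's own construction. The shared ingredients are: an entailment predicate as protected core, $\Gamma$ as preservation of that entailment under transport, $d_{\Tcal}$ finite exactly when $\Gamma=1$ and infinite otherwise, and non-vacuity from monotonicity of entailment. The paper extends the background theory conservatively with $\tau_S$ acting by inclusion, whereas you add a clause. Your explicit inadmissible transition, edit-distance cost, and discrete-metric treatment of the regularity discipline are refinements the paper does not spell out.

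One small repair is needed. Your implication-form $\Gamma$ equals $1$ on transitions with $C_r(H)=0$ and $C_{r'}(\tau_S(H))=1$, but your equality-based $\sim_\Phi$ fails there. So either restrict the state carriers to the protected region or define $\Gamma$ by equality of core values. Also keep the background theories definite Horn, so that your consistency conjunct survives clause addition automatically.
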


\begin{proof}
We construct such a witness explicitly. Let the underlying logic be monotonic. Let $\State_r$ be a space of logical hypotheses and let $\M_r$ store a background theory $B_r$. Fix a safety formula $S$, and define the protected core by the predicate
\[
\Phi(V_r)=1 \quad \Longleftrightarrow \quad H \cup B_r \models S.
\]
Let $\Gamma(r,r',\tau)=1$ exactly when the transported hypothesis $\tau_S(H)$ together with the updated memory $B_{r'}$ still entails $S$, and let $\Gamma=0$ otherwise. Define the regime-variation cost by
\[
d_\Tcal(r,r')<\infty \quad \text{if } \Gamma=1, \qquad
d_\Tcal(r,r')=\infty \quad \text{if } \Gamma=0.
\]

Now consider a regime transition given by a conservative ontology extension, and let $\tau_S$ act by syntactic inclusion. If $H \cup B_r \models S$, then by monotonicity and conservativity we also have
\[
\tau_S(H)\cup B_{r'} \models S.
\]
Hence $\Gamma=1$ and the corresponding transition has finite cost. Therefore the admissible subgraph is non-empty, so the deductive realization is non-vacuous. Conversely, any transition that destroys the protected entailment is assigned $\Gamma=0$ and therefore infinite cost in the present construction. This yields a non-vacuous deductive realization of the GML protected-core and cost architecture.
\end{proof}

\paragraph{Status of the symbolic realization result.}
The point of Proposition~\ref{prop:symbolic-compatibility} is not to close a symbolic theorem layer. What it establishes is more precise: the GML semantic architecture already admits a non-vacuous knowledge-structured realization in which a hard deductive protected core and the finite/infinite admissibility boundary can be represented explicitly. The result should therefore be read as a theorem-level witness of symbolic realizability and protected compatibility, not as a claim that symbolic proof artifacts or layer-specific rule calculi already transport wholesale. We note that this symbolic witness is stated only for monotonic logical layers. For non-monotonic knowledge bases, syntactic inclusion is no longer sufficient, and admissibility would require an explicit certified belief-revision operator ensuring that the protected obligation is not retracted under the updated background theory.

\paragraph{What the symbolic opening already proves.}
The symbolic opening establishes precisely that the GML semantic architecture admits a non-vacuous knowledge-structured realization in which a hard deductive protected core and the finite/infinite admissibility boundary can be represented explicitly (Proposition~\ref{prop:symbolic-compatibility}). It does not yet build a full symbolic theorem layer for arbitrary belief-revision settings; for non-monotonic revision, additional certified belief-revision operators are required. What is already proved is the protected-level compatibility of symbolic regime evolution with the core notions of admissibility, evaluator transport, and learning continuity that govern the numerical layer. The narrative illustration for which this formal witness provides the theoretical underpinning is the symbolic ILP example of Section~\ref{subsec:symbolic}.

\section{Canonical Examples and Counterexamples}
\label{sec:examples}

The following examples ground the formal properties of GML in recognizable learning settings, showing concretely how $(\Gamma,\Phi,d_{\Tcal})$ operate on controlled instances. They are organized in three groups: numerical and boundary cases (Section~\ref{subsec:numerical}), memory-active and continual cases (Section~\ref{subsec:memory}), and symbolic cases together with a counterexample (Section~\ref{subsec:symbolic-group}). The formal symbolic witness, Proposition~\ref{prop:symbolic-compatibility}, is established in Section~\ref{sec:kernel-alignment}; the symbolic example here provides the narrative context for that result.

\subsection{Numerical and boundary examples}
\label{subsec:numerical}

\subsubsection{Boundary case: classical fixed-task learning}

Consider a single regime $r$ with:
\begin{enumerate}[leftmargin=*,itemsep=2pt]
    \item one state space $\State_r$;
    \item one memory carrier $\M_r$ that is operationally inert for semantic purposes;
    \item one evaluator $V_r$ whose protected core coincides with the evaluator itself;
    \item no nontrivial admissible transformations except the identity.
\end{enumerate}
Then the corresponding GML system collapses to a fixed-task learning process. This is the simplest witness for the reduction theorem: GML contains the classical ontology as a degenerate one-regime case.

\paragraph{Why this example matters.}
It prevents a false dichotomy. GML does not reject fixed-task learning. It explains exactly when fixed-task learning is the right boundary model.

\subsubsection{Worked toy example: two-regime linear regression}
\label{subsec:toy}

This worked example is the concrete end-to-end realization of the convex anchor-based witness established in Proposition~\ref{prop:two-regime-witness}: it makes $(\Gamma,\Phi,d_{\Tcal})$ explicit on a controlled two-regime instance where the abstract drift parameters acquire direct geometric interpretations.

Consider a two-regime linear regression problem with inputs $x\in\Real^d$ and outputs $y\in\Real$. In regime $r_0$, the learner uses a parameter vector $w\in\Real^d$ and incurs squared loss
\[
L_{r_0}(w)=\frac12\|X_{r_0}w-y_{r_0}\|^2,
\]
while in regime $r_1$ the local loss becomes
\[
L_{r_1}(w)=\frac12\|X_{r_1}w-y_{r_1}\|^2.
\]
Write $w_0^\star$ and $w_1^\star$ for the regime-local minimizers, and let the protected anchor be a fixed vector $\bar w\in\Real^d$ with trust-region radius $\rho>0$.

\paragraph{Protected core.}
The protected core is the calibration floor
\[
\Phi(V_r)=\text{``}\|w-\bar w\|\le \rho\text{''}.
\]
This core is not the whole evaluator: it isolates the part of the task identity that must survive regime change if the learning problem is to remain the same at the protected level.

\paragraph{Explicit admissible transport.}
Take the state transport to be one gradient step in regime $r_0$,
\[
\tau_S(w)=w-\eta\nabla L_{r_0}(w)
      =w-\eta X_{r_0}^\top(X_{r_0}w-y_{r_0}),
\]
for a step size $\eta>0$. Assuming that squared-loss evaluator transport is already certified across the two regimes, the transition $\tau:r_0\rightsquigarrow r_1$ is then gated by the protected retention condition. In particular,
\[
\Gamma(r_0,r_1,\tau;V_{r_0},V_{r_1})=1
\implies
\|w-\eta X_{r_0}^\top(X_{r_0}w-y_{r_0})-\bar w\|\le \rho.
\]
Thus any admissible transition must keep the transported parameter inside the protected floor. If this inequality fails, then the transport exits the protected region and the transition is inadmissible.
A more active admissible realization can also be read proximally: the transport may be interpreted as an adaptation step toward the target-regime loss followed by protected projection onto the admissible trust region. This emphasizes that, in concrete optimization-based instantiations, the protected core need not act as a passive test only; it may be enforced directly by the transport rule itself.

\paragraph{Feasibility of the admissibility condition.}
The condition $\|\tau_S(w)-\bar w\|\le\rho$ is satisfiable for any $\rho>0$. A scalar instance ($d=1$, $X_{r_0}=1$, $y_{r_0}=0$, $\bar w=0$, $\rho=1$) gives $\tau_S(w)=(1-\eta)w$, so the condition reads $|(1-\eta)w|\le 1$, which holds for all $w\in[-1,1]$ and $\eta\in(0,2)$. In the general case, a sufficient condition follows from the triangle inequality: if $\|w-\bar w\|\le\rho/2$ and $\eta\|X_{r_0}^\top(X_{r_0}w-y_{r_0})\|\le\rho/2$, then $\|\tau_S(w)-\bar w\|\le\rho$. The second half holds whenever $\eta\le\rho\,/\,(2\|X_{r_0}^\top(X_{r_0}w-y_{r_0})\|)$ for nonzero gradients. The admissibility condition is therefore non-vacuous and imposes a concrete coupling between step size, source-regime geometry, and the protected radius $\rho$.

\paragraph{Regime-variation cost.}
In the same toy setting one may instantiate the regime cost by the anchor-shift term
\[
d_{\Tcal}(r_0,r_1)=c_0\,\|w_1^\star-w_0^\star\|^2,
\]
for a layer-specific constant $c_0>0$. This matches the convex witness of Section~\ref{sec:stability}: the regime shift is controlled by the squared displacement of the regime-local anchors. Combined with the admissibility condition above, this yields a concrete transport budget in which large shifts of the local optimum force either a smaller learning step or loss of admissibility.

\paragraph{Analytical reading.}
The toy example therefore makes the formal triad $(\Gamma,\Phi,d_{\Tcal})$ explicit. The protected core is the trust-region floor around $\bar w$; the admissibility certificate is the inequality defining whether the transported iterate remains in that protected region; and the regime cost is a concrete anchor-shift budget tied to the displacement of the local optima. In particular, the admissibility inequality shows directly how the gradient step size, the source-regime geometry, and the protected floor jointly constrain admissible transport.

\paragraph{Why the example matters.}
This toy example makes three abstract points concrete. First, $\Gamma$ is not a mysterious universal oracle here: it is an explicit admissibility certificate tied to a protected trust-region requirement. Second, $d_{\Tcal}$ is not an uninterpretable formal symbol: it is a concrete transport overhead induced by regime shift. Third, a fixed-ontology scalar collapse would miss exactly the distinction between local fit improvement and protectedly admissible continuation. The example should therefore be read as a minimal worked witness of how $(\Gamma,\Phi,d_{\Tcal})$ become calculable on a controlled two-regime subclass.

\subsection{Memory-active and continual examples}
\label{subsec:memory}

\subsubsection{Continual learning with protected retention}
\label{subsec:continual}

Consider a learner evolving through a sequence of regimes $(r_t)_{t\geq 0}$, each regime corresponding to a new learning phase. The system contains:
\begin{enumerate}[leftmargin=*,itemsep=2pt]
    \item a regime-local operational state $s_t\in\State_{r_t}$;
    \item a persistent memory object $m_t\in\M_{r_t}$ storing certified retained competence summaries;
    \item a local evaluator
    \[
    V_{r_t}=(V_{\mathrm{new},t},V_{\mathrm{ret},t}),
    \]
    where $V_{\mathrm{new},t}$ measures adaptation to the current regime and $V_{\mathrm{ret},t}$ measures preservation of previously certified competence;
    \item a protected core
    \[
    \Phi(V_{r_t}) = \text{``retained competence must remain above a certified floor.''}
    \]
\end{enumerate}

A transition
\[
\tau_t:r_t\rightsquigarrow r_{t+1}
\]
is admissible only if:
\begin{enumerate}[leftmargin=*,itemsep=2pt]
    \item it is well typed on state, memory, and interface transport;
    \item the retained competence floor remains preserved, literally or up to protected equivalence;
    \item cross-regime comparison remains meaningful at the protected level;
    \item the transported memory state still certifies the legitimacy of the next adaptation step.
\end{enumerate}

Two candidate updates may achieve the same local new-regime score $V_{\mathrm{new},t+1}$ while differing in admissibility because only one preserves the protected retention floor. Thus admissibility cannot be reduced to local proxy success.

\paragraph{Why this example is foundational.}
This example shows, in concrete form, why protected cores and admissibility are not optional decorations. In continual learning, retention is often treated operationally as a regularization term, replay budget, or side constraint. In GML, retention can instead define part of the protected identity of the learning problem. This changes the ontology of the task: forgetting is no longer merely a degradation of score, but potentially a violation of admissibility.

\paragraph{Difference from Mitchell's ontology.}
A classical fixed $(E,T,P)$ description can aggregate new-task utility and retention into a single scalar criterion. But such aggregation does not by itself preserve:
\begin{enumerate}[leftmargin=*,itemsep=2pt]
    \item which part of evaluation is context-local proxy and which part is protected identity;
    \item whether a transition is legitimate because of certified memory persistence rather than accidental score balance;
    \item whether future guarantees remain meaningful under the induced transport.
\end{enumerate}
This example should therefore be read as a concrete witness for Proposition~\ref{prop:mitchell-obstruction}.

\subsection{Symbolic and boundary counterexamples}
\label{subsec:symbolic-group}

\subsubsection{Symbolic learning with protected knowledge commitments}
\label{subsec:symbolic}

Consider a symbolic learner that induces and revises rule sets from examples together with background knowledge, as in inductive logic programming and related knowledge-structured learning settings \citep{muggleton1991inductive,deraedt2004probabilisticilp,cropperdumancic2022newintro,cropper2022ilp}. Let three regimes $r_0,r_1,r_2$ correspond to successive learning phases.

\paragraph{Regime $r_0$: initial rule induction.}
The learner starts from a background theory $B_0$, a vocabulary $\Sigma_0$, and an example set $(E_0^+,E_0^-)$. The local state space $\State_{r_0}$ consists of candidate rule sets or logic programs expressible in $\Sigma_0$. The memory carrier $\M_{r_0}$ stores previously validated clauses, critical counterexamples, and protected commitments about which consequences must remain derivable or which violations must remain impossible. A natural local evaluator takes the form
\[
V_{r_0}(H)=\big(\mathrm{cov}_{0}(H),\mathrm{cons}_{0}(H;B_0),\mathrm{prot}_{0}(H)\big),
\]
where $\mathrm{cov}_{0}$ measures explanatory or predictive adequacy on examples, $\mathrm{cons}_{0}$ records consistency with $B_0$, and $\mathrm{prot}_{0}$ tracks satisfaction of protected knowledge commitments. In the symbolic reading, improvement is therefore evaluated on a partial or lexicographic order in which consistency and protected-commitment satisfaction are hard coordinates that must remain maximal for admissibility, while coverage is the coordinate that may improve strictly once those protected coordinates are preserved.

\paragraph{Regime $r_1$: vocabulary and knowledge refinement.}
Now the learner receives refined background knowledge $B_1$, an expanded or reorganized vocabulary $\Sigma_1$, and new examples. The transition is assumed to carry a signature morphism $\sigma:\Sigma_0\to\Sigma_1$ so that previously certified clauses stored in memory remain well formed under vocabulary refinement by transport $\tau_M(m_0)=\sigma(m_0)$. Some rule revisions are locally attractive because they improve coverage on the new data. However, the protected core is not identical to raw local utility. It records commitments such as:
\begin{enumerate}[leftmargin=*,itemsep=2pt]
    \item preservation of consistency with a designated protected fragment of background knowledge;
    \item non-violation of a protected rule family;
    \item maintenance of specified entailment obligations under ontology-guided refinement.
\end{enumerate}
The protected core is therefore knowledge-structured rather than purely scalar:
\[
\Phi(V_{r_1}) = \text{``protected consistency and entailment commitments''}.
\]

\paragraph{Regime $r_2$: revision under counterevidence.}
A third regime introduces counterexamples or ontology-guided disambiguation that force rule revision. The memory carrier now matters directly: it records which clauses were previously certified, which exceptions were protected, and which failures trigger admissible revision rather than semantic collapse. The learner may adopt a new local rule set $H_2$ only if the transition from $r_1$ to $r_2$ preserves the protected commitments literally or up to protected equivalence.

\paragraph{Admissible and inadmissible transitions.}
A transition
\[
\tau:r_i\rightsquigarrow r_{i+1}
\]
is admissible only if it is well typed on the symbolic hypothesis space, the memory carrier, and the relevant knowledge structures, and if it preserves the protected commitments encoded by $\Phi(V_{r_i})$. This immediately distinguishes two kinds of revision:
\begin{enumerate}[leftmargin=*,itemsep=2pt]
    \item an \emph{admissible revision}, which improves local adequacy while preserving protected consistency, non-violation, or entailment commitments;
    \item an \emph{inadmissible revision}, which may improve raw coverage or fit but destroys one of the protected knowledge commitments inherited from earlier regimes.
\end{enumerate}
Thus admissibility cannot be reduced to scalar rule quality alone. The revision step must certify that learning continuity is preserved at the knowledge-commitment level. In the symbolic layer this naturally instantiates the proof-carrying form of admissibility certificates introduced earlier: an admissible transition is not merely declared by an oracle, but is accompanied by a deductive derivation, satisfiability witness, or consistency-preservation certificate establishing that the protected commitments survive the transported revision.

\paragraph{Why Mitchell-style collapse loses the protected object.}
A fixed $(E,T,P)$ description can aggregate coverage, consistency, and rule-quality terms into one scalar objective. But such an aggregation does not preserve as first-class semantic data:
\begin{enumerate}[leftmargin=*,itemsep=2pt]
    \item the distinction between local adequacy and protected knowledge commitments;
    \item the fact that admissibility of revision depends on memory and inherited symbolic obligations rather than on score alone;
    \item the persistence of guarantee meaning across vocabulary refinement or ontology-guided regime change.
\end{enumerate}
This is exactly the point at which faithful fixed-ontology reduction loses the wrong thing: the semantic identity of the symbolic learning problem is no longer exhausted by one aggregate performance scalar.

\paragraph{Reading.}
For the theory, the symbolic example illustrates that a symbolic working theorem layer can be a genuine GML instantiation without adopting the ordered numerical evaluator structure of the main theorem layer. What is shared across numerical and symbolic cases is not one proof calculus but a semantic kernel: regimes, protected commitments, admissible transport, and guarantee persistence. Section~\ref{sec:kernel-alignment} develops the formal consequences of this commensurability (Propositions~\ref{prop:kernel-transport}--\ref{prop:layer-transport} and Proposition~\ref{prop:symbolic-compatibility}). The regime-level structure of the example is: $r_0$ establishes initial rule induction, $r_1$ introduces vocabulary refinement under protected consistency, and $r_2$ forces revision under counterevidence while preserving those commitments.

\subsubsection{Counterexample: evaluator rewriting without protected transport}

Consider an adaptive process that, after each failure, rewrites its own evaluator so that the new local score once again appears favorable. Assume:
\begin{enumerate}[leftmargin=*,itemsep=2pt]
    \item no protected core is fixed;
    \item no admissibility certificate constrains evaluator transport;
    \item no protected equivalence relation governs cross-regime comparison.
\end{enumerate}
Then the process may still exhibit apparent numerical improvement after every rewrite, but it does not define a GML learner in the present sense.

\paragraph{Why it fails.}
The process fails not because change is forbidden, but because nothing ensures that the successive evaluators still describe one coherent learning problem. Without protected transport, semantic continuity is lost. What remains is evaluatively unconstrained adaptation.

\paragraph{Why this counterexample matters.}
It blocks a vacuous reading of GML according to which any evolving adaptive system would count as learning merely because something improves locally after each modification. GML is intentionally stricter: regime change counts as learning only when it is admissible and protectedly coherent.

\subsubsection{Example from non-stationary online learning}
\label{subsec:online}

Consider an online learner facing a changing environment and evaluated by dynamic regret relative to a moving comparator. Classical theory already captures variation in the environment and in the comparator sequence. Now suppose that, in addition, a protected safety or retention requirement must remain preserved across comparator shifts. Then:
\begin{enumerate}[leftmargin=*,itemsep=2pt]
    \item dynamic regret still measures sequential performance;
    \item but it does not by itself certify that the comparator evolution preserves the protected identity of the learning problem;
    \item admissibility of comparator transport becomes a new semantic question.
\end{enumerate}

\paragraph{Why this example is useful.}
It shows that GML is not opposed to classical online learning. Rather, it identifies a layer that standard regret analysis usually leaves implicit: when does changing the reference of evaluation still count as staying inside the same learning problem?

\section{Relations to Existing Theories}
\label{sec:relations}

This section is comparative rather than adversarial. Its claim is narrow and structural: once regime variation, evaluator transport, memory persistence, and protected invariants become constitutive of the learning object, fixed-ontology learning no longer provides the right general abstraction for that multi-regime case. Table~\ref{tab:relations-summary} gives a compact overview. Six traditions are treated in full detail below: Mitchell, PAC/SLT/PAC-Bayes, continual learning, symbolic/neuro-symbolic learning, and constrained MDPs/safe RL. Five further traditions (online learning, distributional shift, domain adaptation, MDL/universal induction, and foundation models) are positioned in Appendix~\ref{sec:extended-relations}.

\begin{table}[t]
\centering
\caption{GML relative to existing learning-theoretic traditions. Each entry records what the tradition correctly formalizes, where the specifically GML boundary begins, and the exact structural relation to GML.}
\label{tab:relations-summary}
\begin{tabular}{p{0.19\linewidth} p{0.26\linewidth} p{0.26\linewidth} p{0.18\linewidth}}
\toprule
\textbf{Tradition} & \textbf{Correctly formalizes} & \textbf{Where GML begins} & \textbf{Relation to GML}\\
\midrule
Mitchell & Fixed-regime evaluative improvement & Evaluator transport; memory admissibility & Boundary degeneration\\
PAC / VC / SRM & Learnability; sample complexity & Regime-indexed risk semantics & Fixed-regime subclass\\
SLT / PAC-Bayes & Generalization; fixed risk & Guarantee transport across regimes & Regime-local subclass\\
Online / Regret & Sequential adaptation; regret & Comparator legitimacy certification & Neighborhood; overlap\\
Dataset shift & Distributional variation & Evaluator and memory semantic evolution & Statistical subclass\\
Domain adaptation & Domain-indexed distribution shift & Evaluative legitimacy transport & Partial overlap\\
Continual learning & Long-horizon adaptation; retention & Protected admissibility discipline & Natural subclass\\
Constrained MDPs & Fixed-constraint sequential learning & Regime-varying protected cores & Degenerate subcase\\
MDL / Universal & Hypothesis breadth; compression & Evaluator transport; admissibility & Largely orthogonal\\
Symbolic / ILP & Knowledge-structured learning & Cross-tradition semantic architecture & Future working layer\\
Foundation models & Multi-stage system evolution & Admissibility across deployment stages & Practical motivation\\
\bottomrule
\end{tabular}
\end{table}

\subsection{Mitchell and definitional learning}

Mitchell's definition \citep{mitchell1997}~--- together with task-transfer and meta-learning extensions \citep{blummitchell1998,baxter2000model,hospedales2021metalearning}~--- correctly captures learning under a stable semantics of performance improvement and remains the right ontology whenever one experience source, one task family, and one performance criterion suffice without losing the identity of the learning problem. Classical supervised learning and many benchmark settings fall inside this scope.

The limitation appears when evaluator transport, regime change, or memory persistence become constitutive of the learning object rather than contextual detail. Mitchell-style learning is then a strict fixed-regime boundary case of GML (Proposition~\ref{prop:reduction}): GML extends it without invalidating it. The boundary is reached when the task description must be repeatedly enriched to absorb context~--- GML insists that encoding power is weaker than semantics-preserving representation.

\subsection{PAC learning, VC theory, and SRM}

PAC learning and VC theory \citep{valiant1984,valiantpac,vapnik1998} and structural risk minimization \citep{vapnik1998,bousquet2002} correctly formalize when a learner generalizes from finite data relative to a stable error notion and hypothesis semantics. They are the natural tools whenever uncertainty is concentrated in the sample and the comparison standard remains fixed.

They do not natively formalize when a transition between evaluative regimes is legitimate, nor how guarantee meaning persists when comparison itself must be transported. They form fixed-regime or regime-local subclasses of GML; GML embeds their ontology inside a broader semantics where evaluator admissibility must itself be controlled. The boundary is reached when one attempts to preserve a guarantee while the meaning of risk itself becomes regime-indexed: sample complexity remains relevant but no longer exhausts the semantics of learning.

\subsection{Statistical learning theory and PAC-Bayes}

SLT \citep{vapnik1998} and PAC-Bayes \citep{mcallester1999,mcallester1999pacbayes,seeger2002pacbayes,catoni2007,haddouche2022onlinepacbayes} formalize generalization and uncertainty control when the learner remains inside one coherent predictive-evaluative regime. They are indispensable whenever one studies asymptotic or finite-sample behavior relative to a fixed risk functional.

They do not treat evaluator transport, protected invariance, or memory-mediated admissibility as primitive semantic constraints. SLT and PAC-Bayes are partially included in GML through regime-local evaluative theory; GML begins where the learning object is no longer exhausted by one stable statistical risk semantics. The boundary is reached when a bound certified in one regime must remain meaningful in another regime whose local proxy differs: GML adds the missing notion of protected guarantee transport.

\subsection{Continual learning, lifelong learning, and test-time adaptation}

Continual and lifelong learning \citep{caruana1997,lopezpaz2017gem,parisi2019continual} and test-time adaptation \citep{finn2017maml,liang2020tent,sun2024testtime,singh2024controllingforgetting} correctly identify that learning-relevant behavior may continue over long horizons and persist into deployment. Continual learning is one of the most natural subclasses of GML; test-time adaptation is one of its strongest motivations.

Their limitation is that retention penalties, replay mechanisms, and adaptation heuristics often leave implicit which retained properties are structurally protected, which transitions are admissible, and at what level guarantees persist. GML supplies the semantics in which retention is treated not as a heuristic penalty but as a protected evaluative core under admissible regime transport. Concretely: the memory carrier $\M_r$ corresponds to replay buffers or consolidated latent memory; $\Phi$ formalizes what EWC-style regularizers are implicitly preserving at the level of hard semantic commitments; $\Gamma$ acts as a formal checkpoint against catastrophic forgetting counting as legitimate continuation; and $d_{\Tcal}$ measures the structural severity of the task sequence. The GML boundary is reached when deployment-time change is described as continued learning and one must certify under which protected invariants that continuity claim is justified.

\subsection{Constrained MDPs and safe reinforcement learning}

Constrained MDPs and safe RL \citep{altman1999constrained,garcia2015survey} correctly capture learning governed by non-negotiable constraints rather than unconstrained reward maximization. They are the natural tools whenever the protected object is a fixed constraint set.

Their limitation is that the constraint architecture is typically fixed throughout learning. GML becomes necessary when the protected core itself evolves across regimes and one must certify that such change is admissible. Static constrained control is recovered inside GML as the special case where the protected core does not vary. The boundary is reached when the constraint architecture itself becomes regime-sensitive: the question is then no longer only whether a policy satisfies the current constraint set, but whether the transition between protected constraint regimes is a legitimate continuation of the same learning problem.

\subsection{Symbolic learning, logical constraints, and knowledge-structured learning}

This family~--- ILP, probabilistic logic learning, relational and ontology-guided learning, differentiable rule learning, and neuro-symbolic approaches \citep{muggleton1991inductive,deraedt2004probabilisticilp,cropperdumancic2022newintro,cropper2022ilp,hitzler2022nesy,delvecchio2025nesy,bueffbelle2024,cunnington2023nsil,pryor2023neupsl,lorello2025nesyseq,weir2024nellie,chengsun2025kgreasoning}~--- correctly formalizes forms of learning in which rules, logical constraints, or knowledge structures are part of what is learned, preserved, or revised. These are the natural tools whenever evaluator semantics depends on consistency, entailment, grounding adequacy, or knowledge-structured correctness conditions rather than a single scalar criterion.

Their limitation is not lack of depth but lack of a shared semantic architecture with statistical learning traditions. Each typically develops its own local semantics of revision or inductive success without a general theory of admissible regime transport and cross-regime guarantee persistence. GML addresses this precisely: it offers one semantic framework in which ordered statistical layers and symbolic learning-oriented layers are both treated as admissibility-governed forms of learning continuity, without conflating them into one theorem calculus. The boundary is reached when symbolic or neuro-symbolic learning must continue across changing background knowledge, perceptual front-ends, or protected logical commitments, and when the question is no longer only whether new rules improve local fit but whether the transition preserves the learning problem identity under admissible transport \citep{bueffbelle2024,cunnington2023nsil,pryor2023neupsl,weir2024nellie,castellano2025grounding,lorello2025nesyseq}.

\section{Scientific Utility and Practical Reach}
\label{sec:utility}

\paragraph{Explanatory and classificatory utility.}
GML makes explicit why continual adaptation, evaluator plurality, deployment-time learning, memory-governed retention, and stage-wise model evolution feel theoretically adjacent: the underlying issue in all of these is not only that something changes, but whether what changes remains part of one coherent learning process. By placing PAC, SLT, PAC-Bayes, online learning, shift theory, continual learning, and symbolic learning inside one disciplined map, GML isolates the additional semantic structure required once regime change itself enters the object of theory~--- without erasing the differences between traditions.

\paragraph{Foundational bridging and normative utility.}
GML becomes a common semantic architecture under which statistical and symbolic learning-oriented regimes can be formalized without being forced into one artificial evaluator type. The unification claim is deliberately sober: GML does not collapse these traditions into one theorem layer, but offers one foundational semantics within which both are treated as admissibility-governed forms of learning continuity~--- a frontier already demonstrated by recent ILP and neuro-symbolic work \citep{cropperdumancic2022newintro,hitzler2022nesy,bueffbelle2024,delvecchio2025nesy,cunnington2023nsil,pryor2023neupsl,castellano2025grounding}. This also yields normative consequences: typed transport, protected evaluator semantics, admissibility certification, and controlled memory evolution become explicit design obligations whenever a system is expected to learn across variable regimes.

\paragraph{Methodological utility.}
GML provides a language in which guarantees may be attached to the protected level rather than to one accidental local proxy. For modern systems whose behavior unfolds across training, post-training, retrieval-mediated use, and adaptation, the question is no longer simply whether performance goes up, but whether the route by which performance changes preserves the protected identity of the problem. Making this distinction explicit does not weaken classical theory; it locates more precisely the point at which fixed-ontology analysis ceases to be sufficient.

\paragraph{Normative and regulatory utility.}
A third dimension of utility concerns the formal treatment of ethical
and normative constraints in AI systems.
The prevailing engineering approach encodes normative obligations~---
safety, fairness, alignment with human intent~--- as scalar penalty
terms within an optimization objective, to be traded off against
task performance.
Proposition~\ref{prop:mitchell-obstruction} (condition~3) establishes
that this encoding is structurally inadequate when the normative obligation
functions as a hard admissibility condition:
replacing a non-aggregable protected core $\Phi$ with a scalar penalty
incurs irreversible structural loss, because the optimizer can then find
proxy gains outside the protected region in ways that no penalty weight
can prevent without losing the semantic identity of the problem.%
\footnote{This structural loss is the formal counterpart of the
\emph{reward hacking} and \emph{proxy gaming} phenomena documented
empirically in alignment
research~\citep{amodei2016concrete,zhi-xuan2024beyondpreferences}:
the optimizer escapes the intended constraint precisely because that
constraint has no formal structural boundary in the scalar objective.
The GML framework makes this precise: if $\Phi(V_r)$ is non-aggregable
(Proposition~\ref{prop:mitchell-obstruction}, condition~3),
no scalar reformulation is structurally faithful.}
Under the GML framework, normative obligations that must be preserved
unconditionally are encoded in $\Phi$ as non-aggregable protected
components, and their preservation is enforced by the admissibility
certificate~$\Gamma$~--- not traded off, but structurally required.

This connects to the regulatory framework now entering into force.
The \emph{EU AI Act} (Regulation~(EU)~2024/1689, in force
1~August~2024, fully applicable 2~August~2026) requires that providers
of high-risk AI systems maintain a \emph{risk management system}
as a continuous iterative process across the entire system lifecycle,
comprising the identification and mitigation of risks to health, safety,
and fundamental rights at every deployment stage~(Art.~9).
Deployers of high-risk systems must additionally assess how the system
may affect fundamental rights before each use~(Art.~27).
Both requirements presuppose that the properties to be protected are
\emph{stable across the system's transitions}~--- across updates,
retraining, post-training alignment, and capability changes.
This is precisely the structural question that GML formalizes:
which transitions preserve the protected commitments ($\Gamma=1$),
which violate them ($\Gamma=0$), and what conditions suffice to
guarantee preservation across the full operational trajectory.
GML does not itself constitute a compliance tool;
it provides the structural vocabulary~--- protected cores,
admissibility certificates, and transport conditions~---
within which lifecycle-persistent guarantees of the kind the
EU AI Act requires may be formulated rigorously and subjected
to formal verification.%
\footnote{High-risk AI systems under the EU AI Act include applications
in critical infrastructure, education, employment, law enforcement,
migration, and administration of justice (Annex~III).
The risk management obligations (Art.~9) and the fundamental rights
impact assessment (Art.~27) are the two provisions most directly
relevant to the structural lifecycle-persistence question
that GML formalizes.}

\section{Limitations, Non-objectives, and Future Work}
\label{sec:limitations}

GML is a structural theory. It does not yet provide sharp universal finite-sample bounds for every regime-varying setting, does not solve identifiability of evaluator updates in full generality, and does not give a complete algorithmic test for admissibility. The correct granularity of regimes in some application classes and the minimal algebraic structure required on regime transformations also remain open~--- deliberately, because different subclasses may require different composition laws. These are not oversights; they mark the frontier between a foundational ontology and its quantitative elaboration.

\paragraph{What `General' does and does not mean.}
The qualifier ``General'' refers to the generality of the evaluative regime under which improvement is tracked~--- not to a claim that every learning system is already captured by the present theorem-supporting layer, nor that the paper provides a complete quantitative theory of all learning systems. `General' names the semantic architecture of learning under variable regimes: protected problem identity, admissible regime evolution, evaluator transport, and persistence of guarantees under certified change. The present manuscript develops one disciplined theorem-supporting layer inside that broader architecture; it does not exhaust all admissible theorem layers or all possible quantitative instantiations of GML.

\paragraph{Computational tractability of admissibility.}
While GML fixes the semantic role of $\Gamma$ extensionally, the present paper does not assume that admissibility is trivially computable. For expressive hypothesis spaces such as deep over-parameterized neural networks, exact verification of protected-core preservation, logical entailment, or tight topological control across regimes is often computationally intractable. A central practical bottleneck for future work is therefore the construction of scalable, proof-carrying admissibility certificates $\Gamma$. Candidate approaches already studied in adjacent contexts include interval bound propagation \citep{gowal2019ibp}, abstract interpretation for neural networks \citep{singh2019abstract}, linear relaxation-based bounds \citep{zhang2018crown}, and PAC-style statistical certificates \citep{weng2018fastlin}. GML fixes the semantic gate that admissibility verification must respect; these techniques provide candidate implementations for specific regime-transition subclasses and suggest a principled agenda for making admissibility certification constructive.

\paragraph{Theorem-layer boundaries and future work.}
The manuscript develops one explicit working theorem layer, not a complete theory of all possible working theorem layers of GML. The symbolic and neuro-symbolic extension is argued for structurally and witnessed non-vacuously (Proposition~\ref{prop:symbolic-compatibility}), but its full theorematic development~--- including non-monotonic revision, a complete account of semantic kernels, and a meta-theory of inter-layer transport~--- belongs to future work. The most important next steps are fine-grained generalization guarantees under variable regimes, necessary and sufficient admissibility criteria for richer system classes, constructive algorithmic realizations of protected-core preservation, richer explicit realizations of $\Gamma$ and $d_{\Tcal}$ beyond the two-regime witness, additional working theorem layers for other regime families, and a meta-theoretical account of how such layers can be compared and aligned. These limits locate the present contribution precisely as the first rigorous treatment of admissible learning continuity under regime variation, not as its completion.

\paragraph{Toward a structural reformulation of foundational notions.}
The broader SMGI/GML programme suggests that several notions
still approached mainly through optimization objectives, surrogate losses,
and engineering heuristics may admit structural reformulation
once admissibility, protected cores, and certified typed transformation
are made explicit. The shift is not from less general to more general,
but from extensional description by performance to structural analysis
by admissible continuation.

\emph{Catastrophic forgetting}~\citep{vandeVen2024,kirkpatrick2017overcoming}
may be recast as a failure of admissible continuity under
protected memory obligations: the question is not only whether
retained performance decreases, but whether the transition violated
the structural conditions under which cross-temporal comparison
of the system's behavior remains semantically meaningful.
\emph{Neuro-symbolic learning}~\citep{garcez2023neurosymbolic,yu2025neurosymbolic}
may be recast as a problem of semantic commensurability
and certified transport across heterogeneous working theorem layers:
the open question is not only how to combine modules
but what the protected semantic core is and under which conditions
it can be legitimately transported across a representational boundary.
\emph{Alignment}~\citep{ouyang2022training,zhi-xuan2024beyondpreferences}
may be recast as the problem of certifying that the evaluative commitment
constitutive of aligned behavior is preserved under admissible regime change:
preferences are defined within a regime; the structural question is
whether the commitment they express survives legitimate capability transitions.
\emph{Safety and normative compliance}~\citep{amodei2016concrete}
may be recast as questions about what must count as protected,
how admissibility must be certified, and under which transformations
guarantees remain legitimate; certain normative obligations,
in particular, may be treated as non-aggregable protected
components of $\Phi$~--- components whose preservation cannot
be traded off against local proxy gains.
The formal basis for this claim is Proposition~\ref{prop:mitchell-obstruction}
(condition~3): replacing the admissibility certificate~$\Gamma$ with
a scalar penalty loses the structural boundary that distinguishes
admissible from inadmissible transitions, regardless of penalty weight.
This connects directly to the EU AI Act lifecycle requirements
(Art.~9 and~27, Regulation~(EU)~2024/1689): the Act mandates that
fundamental rights protections be maintained across every deployment
transition~--- a requirement structurally equivalent to requiring
$\Gamma(r_{k-1},r_k,\tau_k)=1$ on the normatively protected components
throughout the system's operational trajectory.

The present paper does not develop these reformulations as achieved results.
It provides the structural vocabulary and first theorem-supporting layer
within which they may be formulated rigorously, and identifies the
programme within which their explicit development belongs.

\section{Conclusion}
\label{sec:conclusion}

The central claim of this paper is semantic: GML is proposed as a minimal structural extension required when evaluator transport, memory persistence, regime change, and protected continuity of the learning problem become constitutive parts of the learning object. It is not proposed as a blanket replacement for existing theories, nor as a vocabulary expansion for systems already faithfully treated inside a fixed ontology. The paper identifies this learning-theoretic object, develops its first theorem-supporting layer, and shows that this layer already supports structural coherence, boundary-case degeneration to the fixed-regime setting, a structural obstruction argument for faithful fixed-ontology reduction, protected-level stability templates, and first numerical and symbolic non-vacuity witnesses.

Not all learning is multi-regime. But whenever learning genuinely
evolves through admissible regime change, the appropriate theoretical
object is no longer merely local performance improvement: it is learning
continuity under protected semantics. What remains open are richer
quantitative instantiations, fuller theorem layers, and constructive
admissibility certificates---not the coherence of the core object
established here.

There is a deeper claim behind this paper.
Classical learning theory answered the question of how a system
remains what it is through change by stipulating that the task,
the performance measure, and the evaluator are fixed;
change was then only a matter of performance within that fixed frame.
GML answers differently: a system continues to be what it is
when the evaluative structure constitutive of its learning problem
is preserved under legitimate transformation.
That shift~--- from local performance improvement within a fixed ontology
to learning continuity under protected semantics, from the theory of
the result to the theory of the continuation~--- is articulated
at the level of general intelligent systems
in the SMGI programme~\citep{osmani2026smgi}.
GML is its learning-theoretic face.
What this paper establishes is that this face has a well-defined
mathematical grammar, a first coherent theorem-supporting layer,
and the structural capacity to absorb classical learning theory
as a degenerate boundary case.

\newpage
\appendix
\section{Extended Comparative Analysis}
\label{sec:extended-relations}

The following entries complete the comparative positioning of Section~\ref{sec:relations}. Each tradition is related to GML through the same analytical lens as the main entries: what it correctly formalizes, where the GML boundary begins, and the exact structural relation.

\subsection{Online learning, shifting comparators, and dynamic regret}

\paragraph{Canonical lineage and scope.}
Online learning formalizes sequential adaptation with regret as the primary criterion \citep{cesabianchilugosi2006,zinkevich2003,auer2002,bubeckcesabianchi2012}. Dynamic regret and shifting-comparator analyses already approach regime-sensitive reasoning \citep{zhao2025efficientnonstationary}. These frameworks are the closest classical neighbors of GML whenever sequential decision quality under time-varying comparators is the central concern.

\paragraph{Relation to GML.}
There is substantial overlap. Online learning occupies a large neighborhood of GML but does not generally formalize protected-core preservation or evaluator admissibility as first-class objects.

\paragraph{GML boundary.}
The boundary is reached when the comparator sequence is not merely moving but semantically evolving: dynamic regret still measures performance but does not certify whether the comparator transport preserves the protected evaluative identity of the problem.

\subsection{Non-stationary learning, dataset shift, covariate shift, label shift, and concept drift}

\paragraph{Canonical lineage and scope.}
This family covers covariate shift, label shift, concept drift, and broader non-stationary prediction settings \citep{quionero2009dataset,zhao2025efficientnonstationary}. These literatures correctly formalize the large and important class of regime variation in which the data-generating law or temporal concept changes. They are the right abstraction whenever regime variation is principally statistical.

\paragraph{Relation to GML.}
Distribution-shift theory forms an important subclass of GML where the regime index tracks changing data laws. GML is strictly broader: it also treats evaluator and memory semantics as evolvable objects, accommodating changes that are not exhausted by distributional language alone.

\paragraph{GML boundary.}
The boundary is reached when a drift-aware method must additionally certify that the protective evaluative identity of the problem is preserved across distributional shifts~--- continuity of score under drift is not yet continuity of learning semantics under protected admissibility.

\subsection{Domain adaptation and OOD generalization}

\paragraph{Canonical lineage and scope.}
Domain adaptation and OOD generalization formalize transfer across source and target domains and robustness to unseen environments \citep{bendavid2010,ganin2016dann,arjovsky2019irm,peters2016causal,krueger2021outofdistribution}. They correctly capture domain-indexed variation in input-output distributions under a stable notion of success, and remain the right tools when domain variation is principally distributional.

\paragraph{Relation to GML.}
These frameworks are partially overlapping subclasses of GML. They fit naturally when regime variation is domain-indexed and evaluation remains protectedly stable or explicitly transportable.

\paragraph{GML boundary.}
The boundary is reached when domain transfer changes not only the data distribution but also the admissibility conditions under which predictions may still count as acceptable~--- at that point, evaluative legitimacy itself must be transported and certified.

\subsection{Algorithmic probability, MDL, and universal induction}

\paragraph{Canonical lineage and scope.}
Algorithmic probability, MDL, and universal induction formalize broad hypothesis priors, compression principles, and universal prediction under extremely rich model classes \citep{solomonoff1964,solomonoff1964a,solomonoff1964b,hutter2005}. These theories correctly study induction under maximally general representational assumptions and are central whenever maximal epistemic breadth under a fixed induction principle is the goal.

\paragraph{Relation to GML.}
The relation is largely orthogonal: universal induction broadens the hypothesis side of learning; GML broadens the semantic structure of admissible learning evolution. A universal predictor does not by itself specify when a change in evaluative regime preserves the same learning problem.

\paragraph{GML boundary.}
The missing ingredient, even for a universal predictor continuing across changing operational regimes, is not expressive power but admissibility discipline.

\subsection{Foundation models, retrieval, memory, post-training, and deployment-time adaptation}

\paragraph{Canonical lineage and scope.}
This block gathers large-scale pretraining, post-training alignment, retrieval augmentation, external memory, and deployment-time updating \citep{devlin2018,brown2020gpt3,bommasani2021,lewis2020rag,deepmind2023gemini}. This ecosystem correctly identifies that learning-relevant behavior continues beyond pretraining and may depend on retrieval, memory, alignment stages, or local adaptation at inference time. It represents the most visible practical motivation for GML.

\paragraph{Relation to GML.}
These systems are not automatically instances of GML. They become GML-relevant precisely when one asks which transitions across pretraining, post-training, retrieval-mediated behavior, alignment, or deployment adaptation are admissible and which guarantees persist across them.

\paragraph{GML boundary.}
The boundary is reached when post-training adaptation is described as continuous improvement of the same system. GML asks the missing question: under which protected invariants and admissibility constraints is that continuity claim actually justified?

\paragraph{Synthesis of the comparison.}
The comparison above supports a precise claim. Existing learning theories are not invalidated by GML. They remain the correct tools on the domains for which they were built. What changes is the ontology of the general case. Once learning continuity depends on protected evaluator structure, admissible transport, and memory-mediated regime evolution, fixed-ontology learning is no longer the full theory of that general multi-regime object. What the comparison ultimately shows is that the contribution of the paper does not lie in one isolated theorem, but in assembling these layers into a coherent learning-theoretic object: a canonical definition, a structural foundation, a first theorem-supporting instantiation, and a bounded opening toward further working theorem layers.

\bibliographystyle{plainnat}
\bibliography{jair_agi_references_v32}

\end{document}